\newtheorem{theorem}{Theorem}[section]
\newtheorem{lemma}[theorem]{Lemma}
\newtheorem{corollary}[theorem]{Corollary}
\newtheorem{definition}{Definition}[section]
\newtheorem{remark}[theorem]{Remark}
\newif\ifcomm
\newif\iflong
\newcounter{assumption}
\renewcommand{\theassumption}{A\arabic{assumption}}
\newenvironment{assumption}[1][]{\begin{trivlist}\item[] \refstepcounter{assumption}%
 {\bf Assumption\ \theassumption\ {\em (#1)} } }{
 \ifvmode\smallskip\fi\end{trivlist}}
\newcommand{\argmin}{\mathop{\rm argmin}}
\newcommand{\argmax}{\mathop{\rm argmax}}
\newcommand{\beq}{\begin{equation}}
\newcommand{\eeq}{\end{equation}}
   \newcommand\comm[1]{\textcolor{blue}{ #1}}
   \newcommand{\mtodo}[2]{\todo{{\bf #1}: #2}} 
   \def\here#1{{\bf $\langle\langle$#1$\rangle\rangle$}}
   \newcommand\comm[1]{}
   \newcommand{\mtodo}[2]{}
   \def\here#1{}
\def\be{\begin{equation}}
\def\ee{\end{equation}}
\newname\controllermixture{{\rm \textsc{MDP-policy-mixture}}}
\newname\stableset{{\rm \textsc{independent-set}}}
\newcommand{\X}[0]{\mathcal{X}}
\title{Robustness Guarantees for Mode Estimation with an Application to Bandits}
\author{
Aldo Pacchiano$\thanks{Equal contribution.}$\\
UC Berkeley\\
\and
Heinrich Jiang$\footnotemark[1]$\\
Google Research\\
\and
Michael I. Jordan\\
UC Berkeley\\
}
\begin{document}
\maketitle

\begin{abstract}
Mode estimation is a classical problem in statistics with a wide range of applications in machine learning. Despite this, there is little understanding in its robustness properties under possibly adversarial data contamination. In this paper, we give precise robustness guarantees as well as privacy guarantees under simple randomization. We then introduce a theory for multi-armed bandits where the values are the modes of the reward distributions instead of the mean. We prove regret guarantees for the problems of top arm identification, top m-arms identification, contextual modal bandits, and infinite continuous arms top arm recovery. We show in simulations that our algorithms are robust to perturbation of the arms by adversarial noise sequences, thus rendering modal bandits an attractive choice in situations where the rewards may have outliers or adversarial corruptions.
\end{abstract}

\section{INTRODUCTION}
Work in mode estimation has received much attention (e.g. \citep{parzen1962estimation,chernoff1964estimation,yamato1971sequential,silverman1981using,tsybakov1990recursive,vieu1996note,dasgupta2014optimal}) with practical applications including clustering \citep{cheng1995mean,sheikh2007mode,vedaldi2008quick,jiang2017modal}, control \citep{madani2007backstepping,hofbaur2002mode}, power systems \citep{williams2001mode,sarmadi2013electromechanical}, bioinformatics \citep{hedges2003comparison}, and computer vision \citep{yin2003fast,tao2007color,collins2003mean}; however, to the best of our knowledge, little is known about the statistical robustness of mode estimation procedures despite the popularity of mode estimation and the increasing need for robustness in modern data analysis \citep{dwork2014algorithmic}. Such robustness is important if mode-estimation based learning systems need results to be less sensitive to possibly adversarial data corruption. Moreover, data sources may be more likely to release data to the learner if it can be guaranteed for each source that their additional data will not change the final outcome by much-- in other words robustness is also intimately tied to another important theme of privacy \citep{dwork2009differential}. 

We then provide a new application of mode estimation to the problem of the multi-armed bandits (MAB)
\citep{robbins1985some}, called {\it modal bandits}. MABs have been used extensively in a wide range of practical applications and have been extensively analyzed theoretically. The vast majority of works presume that the value of an arm is the expected value of a reward distribution. In this paper, we present an alternative: where the reward is a function of the modes of the distribution of an arm. This leads to a bandit technique that is more robust and better uses the information from the shape of the arm's distribution as well as other nuances that may be lost with the mean (see Figure~\ref{fig:1}).

Using the mean of the reward distribution can present serious limitations when the observations are biased, potentially due to adversarial interference. We show quantitatively that whenever this is the case, our mode-based bandit algorithms present an alternative to mean-based ones.

Another situation where modal bandits are useful is when the agent already has samples from the arms, but has only one shot to select an arm to pull. Here, the agent may be more interested in optimizing what is ${\it likely}$ to happen rather than the choice that is optimal in expectation. For example, when a risk-averse agent needs to decide between a decision that is likely to have small gains and a decision that has a small chance of high gains but large chance of no effect and prefers the former.

\begin{figure}[h]
\begin{center}

\includegraphics[width=6cm, height=4cm]{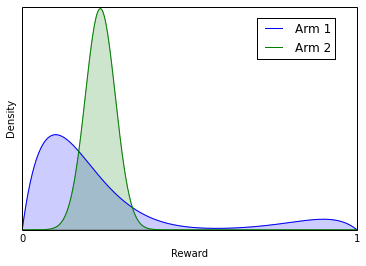}

\end{center}
\label{fig:1}
\caption{\small{Two distributions with the same mean. Their underlying structure can be quite different.}}

\end{figure}

In this paper we assume each arm is a distribution over vectors in $\mathbb{R}^D$ with density $f$ and a set of modes $\mathbb{M} \subset \mathbb{R}^D$. We model the reward of an arm as given by a 'score' function that takes $\mathbb{M}$ as input and outputs a value in $[0,1]$. Although our results can be extended to other more general definitions and more complex modal behaviors, such as scoring functions depending on the value of the $m$-th most likely mode or the distance between the smallest and the largest mode, in this paper we focus on the case when scoring functions depend only on the most likely mode. Details of a more general setting involving scoring functions depending on multiple modes is laid out in the Appendix. We proceed to define the notion of {\it mode} and {\it score function} we will use to analyze the modal bandit problem.

\begin{definition} [Mode]\label{definition:mode}
Suppose that $f$ is a density over $\mathbb{R}^D$. $x$ is a mode of $f$ if $f(x') \le f(x)$ for all $x' \in \mathbb{R}^D$.
\end{definition}

We focus on the case when the score function takes as input the maximum mode. For density $f$ we denote its maximum density mode as $mode(f)$. Since the later is simply a point in $\mathbb{R}^D$:

\begin{definition}[Score Function]
A score function on a density with domain $\mathbb{R}^D$ is a map $r : \mathbb{R}^D \rightarrow [0,1]$. The reward of an arm with associated density $f$ equals $r(\text{mode}(f))$
\end{definition}

We assume that the rewards are in $[0 ,1]$ but it is clear that the results can be extended to any compact interval in $\mathbb{R}$. Definition \ref{definition:mode} can be relaxed to allow modes be local maximas instead of global maximas and our analysis can handle the case where the density has multiple modes and the reward is a function of these modes. We call this relaxed notion $p$-modes and provide analogues of our results for $p$-modes in the Appendix.

\section{CONTRIBUTIONS AND RELATED WORK}

\subsection{Mode Estimation}

\citep{tsybakov1990recursive} gave one of the first nonparametric analyses of mode estimation using a kernel density estimator
for a unimodal distribution on $\mathbb{R}^D$ and established a lower bound estimation rate of $\tilde{\Omega}(n^{1/(4+D)})$. \citep{dasgupta2014optimal} gave an analysis of the $k$-nearest neighbor density estimator and provided a procedure based on this density estimator and nearest neighbor graphs which can recover the modes in a multimodal distribution and attained the minimax optimal rate.  

In Section~\ref{section:pmode}, we present Algorithm~\ref{alg:pmode} which finds the highest density mode. In the Appendix we show this algorithm can be adapted to the case when we may care about the $p$-th highest modes instead. This comes from a simple modification to the mode-seeking algorithms in \citep{dasgupta2014optimal,jiang2017modal}.
We then treat the mode estimation procedure as a black box as it works without any a-priori knowledge of the density and only requires mild regularity assumptions on the density.

We build on mode estimation in the following ways. We show that our mode estimation algorithm is statistically robust to certain amounts of adversarial contamination. We then propose and analyze a differentially private mode estimation algorithm.  To our knowledge, this is the first time robustness or privacy guarantees have been provided for a mode estimation procedure.

In Section~\ref{section:contextual}, we analyze the contextual modal bandit. In order to do this, we must estimate the modes of the arm's {\it conditional} density (conditioned on the context) given samples from the joint density. Thus, we develop a new procedure to estimate the modes of a conditional density given samples from the joint density. We show that it recovers the modes with statistical consistency guarantees and it is practical since it has similar computational complexity to that of Algorithm~\ref{alg:pmode} and again treat it as a black box. Estimating the modes of a conditional density may be of independent interest because a number of nonparametric estimation problems can be formulated in this way \citep{chen2016nonparametric}.

\subsection{Modal Bandits}
We then apply mode estimation results to the stochastic MAB problem where the player chooses an arm index $i \in [K]$ which produces a reward from a density $f_i : [0, 1] \rightarrow \mathbb{R}^D$ with set of modes $\mathbb{M}_i$ and maximum mode $\theta_i$. The player's objective is to choose the density -henceforth referred to as arm - with optimal modal score $r: \mathbb{R}^D \rightarrow \mathbb{R}$. We analyze different problems related to this setup. We start by introducing some results concerning mode estimation in Section \ref{section:pmode}. Our contributions also include analogous results to familiar ones in the classical MAB setting.
\begin{itemize}
\item In Sections~\ref{section:toparm} and~\ref{section:topmarm}, we study top arm identification. We present Algorthm~\ref{alg:ucb}, which is an analogue for the Upper Confidence Bound (UCB) strategy for modal bandits. Theorem~\ref{theo:toparm} then shows that we can recover the top-arm given $n$ pulls where $n$ is in terms of the optimality-gap of the arms. We then present Algorithm~\ref{alg:topM} which is an analogue of UCB to recover the top $m$ arms. Theorem~\ref{theo:topmarm} gives guarantees on recovery of the top $m$ arms.
\item In Section~\ref{section:regret} we introduce two new notions of regret for modal bandits. The first is an analogue of a familiar notion of pseudo-regret from the classical stochastic MAB. The second notion of regret is based on the sample mode estimates, which can be compared to familiar notion of regret computed over sample means. We then attain analogous bounds which are tight up to logarithmic factors. 
\item In Section~\ref{section:contextual}, we introduce contextual modal bandits where the environment samples a context from $\mathbb{R}^d$ from some sampling distribution and is revealed to the learner. We show that a simple uniform sampling strategy can {\it directly} recover the optimal policy {\it uniformly} over the context space. 
\end{itemize}

\subsection{Other approaches to robust bandits}
A recent approach of \citep{szorenyi2015qualitative} uses the quantiles of the reward distributions to value the arm.
This approach indeed combats some of the limitations described above. 
Although using the quantiles of the reward distribution is a simple and reasonable approach in many situations where
using the mean fails, using the modes of the reward distribution has properties which are not offered by 
using the quantiles. 

First, unlike quantiles, our method is robust against constant probability noise so as long as this noise is not too concentrated to form a new mode. Second, if the distribution has rewards concentrated around a few regions, this method {\it adapts} to those regions.
In particular, the learner need not know the locations, shapes, or intensity of these regions--
no {\it a priori} information about the density is needed. If one used the quantiles, then there is still the question of which quantile to choose. 

In the situation where the reward depends on a {\it hidden} and {\it non-stationary} context, the mean and quantile could possibly not even converge while the modes of the reward distribution can remain stable. It is a reasonable assertion that
the performance of an arm can depend on the state of the environment which the learner does not have access to. 
Suppose that the hidden context can take on values $H_1$ or $H_2$
sampled by the environment but not revealed to the learner. If the context is $H_i$, then let the reward be $\mathcal{N}(\mu_i, \sigma^2)$ where $\mathcal{N}$ denotes the normal distribution, $\mu_1 \neq \mu_2$ and $\sigma > 0$. Now suppose that the sampling distribution from which the environment chooses the hidden context is not stationary but can vary over time. In such a situation, both the mean and quantile could change drastically and 
the estimates of mean or quantile can possibly not converge; moreover in this situation any confidence interval typical in MAB analyses is also rendered meaningless 
and thus the learner would fail when using mean or quantiles. However, the modes of the reward distribution ($\mu_1$ and $\mu_2$) will not change.

\section{MODE ESTIMATION}\label{section:pmode}
\subsection{Algorithm and analysis}
In this section, we show how to estimate the mode of a distribution given i.i.d. samples.
The results are primarily adapted from known results about nonparametric mode estimation \citep{dasgupta2014optimal,jiang2017modal}.
The density and mode assumptions are borrowed from \citep{dasgupta2014optimal}. 
\begin{assumption}[Modal Structure]\label{modeassumption1}
A local maxima of $f$ is a connected region $M$ such that the density is constant on $M$ and decays around its boundaries.
Assume that each local maxima of $f$ is a point, which we call a mode.
Let $\mathcal{M}$ be the modes of $f$, which we assume is finite. Then further assume that $f$ is twice differentiable around a neighborhood of each $x \in \mathcal{M}$ and $f$ has a negative-definite Hessian
at each $x \in \mathcal{M}$ and those neighborhoods are pairwise disjoint.
\end{assumption}

\begin{algorithm}[tbh]
   \caption{Estimating the mode}
   \label{alg:pmode}
\begin{algorithmic}
\STATE Input: $k$ and sample points $X = \{X_1,...,X_n\}$.
\STATE Define $r_k(x) := \inf \{ r : |B(x, r) \cap X| \ge k \}$.
\STATE Return $\argmin_{x \in X} r_k(x)$
\end{algorithmic}
\end{algorithm}

\begin{theorem}\label{theo::pmode_final}
Suppose Assumption~\ref{modeassumption1} holds and $f$ is a unimodal density.
There exists $N_f$ depending on $f$ such that for $n \ge N_f$, setting $k = n^{4/(4+D)}$, we have
\begin{align*}
\mathbb{P}\left(|\hat{x} - \text{mode}(f)| \ge  \frac{\sqrt{\log(1/\delta)} \log n}{n^{1/(4+D)}}\right) \le \delta,
\end{align*}
which matches the optimal rate for mode estimation up to log factors for fixed $\delta$. Where $| \cdot |$ denotes the $l_2$ norm of $\mathbb{R}^D$.
\end{theorem}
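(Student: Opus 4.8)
The plan is to follow the standard analysis of $k$-nearest-neighbor density estimation as in \citep{dasgupta2014optimal}, reducing the localization guarantee for $\hat{x} = \argmin_{x \in X} r_k(x)$ to a uniform concentration statement for empirical ball masses combined with the local quadratic geometry of $f$ near its mode. Write $m = \text{mode}(f)$ and let $v_D$ be the volume of the unit ball in $\mathbb{R}^D$. The guiding heuristic is the approximation $\mathbb{P}(B(x,r)) \approx v_D r^D f(x)$ for small $r$, which upon setting the mass equal to $k/n$ gives $r_k(x)^D \approx \frac{k/n}{v_D f(x)}$; thus minimizing $r_k$ over the sample approximately maximizes the local density, and the returned point should concentrate near $m$. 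The task is to make this quantitative and to read off the correct rate.

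First I would establish the concentration tool. The class of Euclidean balls in $\mathbb{R}^D$ has finite VC dimension (namely $D+1$), so the uniform relative VC deviation inequality yields an event of probability at least $1-\delta$ on which, simultaneously for every ball $B$, the empirical mass $|B \cap X|/n$ and the true mass $\mathbb{P}(B)$ differ by at most a term of order $\sqrt{\mathbb{P}(B)\,\tfrac{\log n + \log(1/\delta)}{n}} + \tfrac{\log n + \log(1/\delta)}{n}$. Applying this to the balls $B(x, r_k(x))$, whose empirical mass is essentially $k/n$ by the definition of $r_k$, sandwiches $\mathbb{P}(B(x,r_k(x)))$ in an interval around $k/n$ whose half-width is of order $\sqrt{(k/n)\cdot \tfrac{\log n + \log(1/\delta)}{n}}$, uniformly in $x$. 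Uniformity is essential because $\hat{x}$ is a data-dependent minimizer, so the bound must hold for all candidate sample points at once.

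Next I would convert mass bounds into a distance-to-mode bound using Assumption~\ref{modeassumption1}. Since $f$ is twice differentiable with negative-definite Hessian at $m$, a Taylor expansion gives constants $0 < c_1 \le c_2$ and a radius $\rho$ with $f(m) - c_2\|x-m\|^2 \le f(x) \le f(m) - c_1\|x-m\|^2$ for $\|x-m\| \le \rho$, while unimodality keeps $f$ bounded away from $f(m)$ outside this neighborhood. Integrating the quadratic bounds over $B(x,r_k(x))$, whose radius is of order $(k/(n v_D f(m)))^{1/D}$, produces a two-sided estimate of $\mathbb{P}(B(x,r_k(x)))$ in terms of $f(m)$, $\|x-m\|$, and a curvature (bias) contribution of order $r_k(x)^2$. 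A sample point landing in the mode neighborhood realizes the largest mass, hence the smallest $r_k$ benchmark; any $x$ with $\|x-m\| \ge t$ suffers a density deficit of order $t^2$ that inflates $r_k(x)$. Comparing these against the concentration sandwich forces the minimizer to satisfy a density deficit $f(m) - f(\hat{x}) \asymp \|\hat{x}-m\|^2$ no larger than the sum of the bias term $\asymp (k/n)^{2/D}$ and the stochastic fluctuation $\asymp \sqrt{(\log n + \log(1/\delta))/k}$.

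Finally I would balance the two error sources. With $\|\hat{x}-m\|^2 \lesssim (k/n)^{2/D} + \sqrt{(\log n + \log(1/\delta))/k}$ carrying the logarithmic factors from the VC bound and the confidence level, the choice $k = n^{4/(4+D)}$ equalizes the $n$-dependence of both terms at order $n^{-2/(4+D)}$; taking square roots then recovers the advertised $n^{-1/(4+D)}$ rate together with the stated $\sqrt{\log(1/\delta)}\,\log n$ dependence, and the constant $N_f$ absorbs the requirements $r_k \le \rho$ and that the sample be large enough for the near-mode benchmark to hold. I expect the main obstacle to be this third step: carefully integrating the quadratic density profile to turn the uniform mass sandwich into a clean $\|x-m\|^2$ deficit across all candidate points, and controlling the lower tail, that is, guaranteeing that a sample point actually lands close enough to $m$ to realize the small-$r_k$ benchmark, so that the comparison argument genuinely pins down $\hat{x}$.
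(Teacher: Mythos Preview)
Your plan is sound and is essentially the argument of \citep{dasgupta2014optimal}: uniform relative VC deviations for balls, conversion to density fluctuations, the quadratic well at the mode from Assumption~\ref{modeassumption1}, and bias--variance balancing at $k = n^{4/(4+D)}$. The paper, however, does not carry out this argument directly. It obtains Theorem~\ref{theo::pmode_final} as an immediate corollary of a more general $p$-mode result (its Theorem~\ref{theo:pmode} in the appendix), whose single-mode part is in turn imported wholesale from Theorems~3--4 of \citep{jiang2017modal} and Lemmas~3--5 of \citep{dasgupta2014optimal}; the only new work in the paper's proof concerns correctly ranking multiple modes when $p>1$, which is vacuous in the unimodal setting here. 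So you are reconstructing from scratch what the paper simply cites.

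One minor bookkeeping remark: with the relative VC inequality as you wrote it, the density fluctuation scales like $\sqrt{(\log n + \log(1/\delta))/k}$, which after the square root yields a $(\log(1/\delta))^{1/4}(\log n)^{1/4}$ dependence rather than the $\sqrt{\log(1/\delta)}\log n$ in the statement. The paper's appendix version (Theorem~\ref{theo:pmode}) in fact records the bound as $\sqrt{\log(1/\delta)}(\log n)^{1/4}/k^{1/4}$, using a concentration constant of the form $C_{\delta,n} \asymp \log(1/\delta)\sqrt{\log n}$ (cf.\ their $\beta_k$); Theorem~\ref{theo::pmode_final} then loosens $(\log n)^{1/4}$ to $\log n$. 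So to match the exact stated exponents you would want to invoke the Dasgupta--Kpotufe constant in that form rather than the generic VC expression, but this affects only the polylog dressing and not the substance of your argument.
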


For the rest of the paper, we will assume these choices and thus Algorithm~\ref{alg:pmode} can be treated as a black-box mode estimation procedure.
Thus, we define the following notion of sample mode:
\begin{definition}
For any set $S$ of i.i.d. samples let $\widehat{\text{mode}}(S)$ be the estimated mode of $S$ from applying Algorithm~\ref{alg:pmode} under the settings of Theorem~\ref{theo::pmode_final}. In particular, the computation of $\widehat{\text{mode}}$ on a set of points is understood to be w.r.t. a confidence setting $\delta$. 
\end{definition}

Let $r:\mathbb{R}^D \rightarrow \mathbb{R}$ be a score function. If $r$ is $1$-Lipschitz, the following corollary holds:

\begin{corollary}
    Assuming the same setup as Theorem \ref{theo::pmode_final}, then:
    \begin{equation*}
        \mathbb{P}\left(| r(\hat{x}) - r(\text{mode}(f))| \geq  \frac{\sqrt{\log(1/\delta)} \log n}{n^{1/(4+D)}  }    \right) \leq \delta
    \end{equation*}
\end{corollary}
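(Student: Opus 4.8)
The plan is to obtain the corollary directly from Theorem~\ref{theo::pmode_final} by combining the monotonicity of probability under event inclusion with the $1$-Lipschitz hypothesis on $r$, which supplies the needed comparison between the two quantities. The only analytic fact required beyond the theorem is the defining inequality of a $1$-Lipschitz map: for all $x, y \in \mathbb{R}^D$ one has $|r(x) - r(y)| \le |x - y|$, where $|\cdot|$ is the $l_2$ norm — the same norm in which Theorem~\ref{theo::pmode_final} measures $|\hat{x} - \text{mode}(f)|$.

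First I would instantiate this inequality at $x = \hat{x}$ and $y = \text{mode}(f)$, yielding the pointwise bound $|r(\hat{x}) - r(\text{mode}(f))| \le |\hat{x} - \text{mode}(f)|$ on every realization of the samples. Writing $t = \frac{\sqrt{\log(1/\delta)}\log n}{n^{1/(4+D)}}$, this pointwise domination immediately gives the event inclusion
\[
\left\{ |r(\hat{x}) - r(\text{mode}(f))| \ge t \right\} \subseteq \left\{ |\hat{x} - \text{mode}(f)| \ge t \right\},
\]
since whenever the left-hand quantity exceeds $t$, the at-least-as-large right-hand quantity does too. Taking probabilities and using monotonicity yields $\mathbb{P}(|r(\hat{x}) - r(\text{mode}(f))| \ge t) \le \mathbb{P}(|\hat{x} - \text{mode}(f)| \ge t)$, and the right-hand side is bounded by $\delta$ by Theorem~\ref{theo::pmode_final} (for $n \ge N_f$ and $k = n^{4/(4+D)}$), which is exactly the claimed statement.

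There is no genuine obstacle here — the result is a one-line consequence of the theorem. The only points deserving a word of care are the compatibility of norms (the Lipschitz constant must be taken with respect to the same $l_2$ norm appearing in the theorem) and the implicit inheritance of the hypotheses ($n \ge N_f$, the choice of $k$, and the confidence level $\delta$) from the ``same setup'' clause. I would also note in passing that if $r$ were $L$-Lipschitz for a general constant $L$, the identical argument would produce an extra factor of $L$ multiplying the radius $t$; it is precisely the unit Lipschitz constant that keeps the bound numerically identical to that of Theorem~\ref{theo::pmode_final}.
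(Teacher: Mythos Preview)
Your proof is correct and matches the paper's approach: the corollary is stated immediately after the $1$-Lipschitz hypothesis with no explicit proof, so the intended argument is precisely the event inclusion via the Lipschitz inequality that you spell out. Your additional remarks on norm compatibility and the general $L$-Lipschitz case are accurate and go slightly beyond what the paper bothers to say.
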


Although all of our results hold for densities over $\mathbb{R}^D$, and $L$-Lipschitz score functions $r$, in the spirit of simplicity, in the main paper we mostly discuss the case $D = 1$, score function $r(x) = x$ and density $f$ having domain $[0,1]$. 
\subsection{Robustness of Mode Estimator}
We show that our mode estimation procedure is robust to arbitrary perturbations of the arm's samples. It is already clear that the mode estimates are robust to any perturbation which is sufficiently far away from the mode estimate $\widehat{x}$ and that perturbations don't create high-intensity regions (i.e. there are no samples whose $k$-NN radius is smaller than that of $\widehat{x}$). In such a situation, it is clear that such perturbations will not change the mode estimator.

The result below provides insight into the situation where the perturbation can be chosen adversarially and in particular when such perturbation can be chosen near the original mode estimate. Specifically, we assume there are $\ell$ additional points added to the dataset and the result bounds how much the mode estimate can change. We require $\ell < k$,  because otherwise, an adversary can place the $\ell$ points close together anywhere and create a new mode estimate arbitrarily far away from the original mode estimate when using Algorithm~\ref{alg:pmode}.

\begin{theorem}[Robustness]
\label{theo:stability}
Suppose that $f$ is a unimodal density with compact support $\mathcal{X}\subseteq \mathbb{R}^D$ and $f$ satisfies Assumption~\ref{modeassumption1}. Then there exists constants $C, C_1,C_2,$  depending on $f$ such that the following holds for $n$ sufficiently large depending on $f$.
Let $0 < \delta < 1$ and $\ell > 0$ be the number of samples inserted by an adversary. 
Let $\hat{x}$ be the mode estimate of Algorithm~\ref{alg:pmode} on $n$ i.i.d. samples drawn from $f$ and $\widetilde{x}$ be the mode estimate by Algorithm~\ref{alg:pmode} on that sample along with the $\ell$ inserted adversarial samples. If $k$ satisfies the following,
\begin{align*}
    k &\ge  C_1 \log(1/\delta)^2 \log n + \ell\\
    k &\le C_2 \log(1/\delta)^{2D/(4+D)} (\log n)^{D(4+D)} \cdot n^{4/(4+D)}.
\end{align*}
then with probability at least $1 - \delta$, we have
\begin{align*}
    |\hat{x} - \widetilde{x}| \le C \sqrt{\log(1/\delta)}\cdot (\log n)^{1/4} \cdot (k - \ell)^{-1/4}.
\end{align*}
\end{theorem}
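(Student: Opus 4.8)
The plan is to bound $|\hat x - \widetilde x|$ through the triangle inequality $|\hat x - \widetilde x| \le |\hat x - x^\ast| + |\widetilde x - x^\ast|$, where $x^\ast = \mathrm{mode}(f)$. The first term is already controlled by Theorem~\ref{theo::pmode_final}, and since $k-\ell \le k$ we have $k^{-1/4} \le (k-\ell)^{-1/4}$, so that term is absorbed into the target rate; the whole problem therefore reduces to showing $|\widetilde x - x^\ast| \lesssim \sqrt{\log(1/\delta)}\,(\log n)^{1/4}(k-\ell)^{-1/4}$ on a high-probability event. Throughout I would use the local structure from Assumption~\ref{modeassumption1}: the negative-definite Hessian at $x^\ast$ gives constants $c_1,c_2>0$ and a radius $r_0$ with $c_1|x-x^\ast|^2 \le f^\ast - f(x) \le c_2 |x-x^\ast|^2$ for $|x-x^\ast|\le r_0$, where $f^\ast = f(x^\ast)$, while unimodality with compact support keeps $f$ bounded away from $f^\ast$ outside this neighborhood. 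The governing intuition is that an adversary inserting fewer than $k$ points cannot manufacture a genuinely dense ball away from the mode: any ball the algorithm selects must still contain at least $k-\ell$ honest samples, and honest samples are dense only near $x^\ast$.

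First I would set up a uniform deviation bound for the empirical measure of the honest sample over all Euclidean balls. Writing $\mu_n$ for the empirical measure of the $n$ clean draws and $\mu(B)=\int_B f$, the class of balls in $\mathbb{R}^D$ has VC dimension $D+1$, so a relative (Bernstein/Vapnik--Chervonenkis) concentration inequality gives, with probability at least $1-\delta$ and simultaneously over all balls $B$, a bound of the form $|\mu_n(B)-\mu(B)| \lesssim \beta_n\sqrt{\mu(B)} + \beta_n^2$ with $\beta_n \asymp \sqrt{(D\log n + \log(1/\delta))/n}$. I would condition on this event for the remainder of the argument. The lower bound $k \ge C_1\log(1/\delta)^2\log n + \ell$ is exactly what makes $k-\ell$ exceed the additive slack in this inequality, so that an empirical count of $k-\ell$ honest points in a ball forces $\mu(B) \gtrsim (k-\ell)/n$ with only lower-order corrections.

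Next come the purely combinatorial monotonicity facts about the estimator. Adding the $\ell$ adversarial points can only decrease every $k$-NN radius, so $r_k^{\mathrm{cont}}(\hat x)\le r_k(\hat x)$, and since $\widetilde x$ minimizes the contaminated radius, $r_k^{\mathrm{cont}}(\widetilde x)\le r_k^{\mathrm{cont}}(x_1)\le r_k(x_1)$, where $x_1$ is the honest sample nearest $x^\ast$; standard arguments (as in the proof of Theorem~\ref{theo::pmode_final}) give $r_k(x_1)\lesssim (k/(nf^\ast v_D))^{1/D}=:\rho_0$, where $v_D$ is the volume of the unit ball. Thus $B(\widetilde x,\rho_0)$ contains at least $k$ points of the contaminated set, of which at most $\ell$ are adversarial; here is precisely where $\ell<k$ enters, forcing at least $k-\ell$ \emph{honest} points into $B(\widetilde x,\rho_0)$. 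Combining with the concentration event yields $\mu\bigl(B(\widetilde x,\rho_0)\bigr)\ge (k-\ell)/n - (\text{lower order})$.

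The final and hardest step is to convert this honest-mass lower bound into a location bound at the claimed rate. Using $\mu\bigl(B(\widetilde x,\rho_0)\bigr) \le \bigl(\sup_{B(\widetilde x,\rho_0)} f\bigr) v_D \rho_0^D$ together with the global bound $\sup f \le f^\ast$ and the normalization $f^\ast v_D\rho_0^D = k/n$, I would extract a point of near-maximal density inside the ball and feed the resulting density deficit back through the curvature lower bound $f^\ast - f(\cdot)\ge c_1|\cdot - x^\ast|^2$. The target is to show $f^\ast - f(\widetilde x) \lesssim \log(1/\delta)\sqrt{\log n}\,(k-\ell)^{-1/2}$, whence inverting the curvature gives $|\widetilde x - x^\ast|\lesssim \sqrt{\log(1/\delta)}(\log n)^{1/4}(k-\ell)^{-1/4}$. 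The upper bound $k \le C_2\log(1/\delta)^{2D/(4+D)}(\log n)^{D(4+D)}n^{4/(4+D)}$ is used here to guarantee $\rho_0 \lesssim (k-\ell)^{-1/4}$, so that the radius of the selected ball is itself below the target accuracy and, in particular, stays inside the neighborhood of radius $r_0$ on which the Hessian expansion is valid. I expect this conversion to be the main obstacle: one must simultaneously account for the statistical fluctuation at the \emph{effective} honest count $k-\ell$ and rule out the possibility that the $\ell$ inserted points both shrink the winning radius and drag its center away from $x^\ast$. The strong convexity supplied by the negative-definite Hessian, paired with the constraint $\ell<k$, is what prevents a displaced spurious cluster from winning and is the quantitative engine behind the $(k-\ell)^{-1/4}$ rate.
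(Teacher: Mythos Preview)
Your plan is the paper's argument in slightly different packaging. The paper also pivots on the observation that inserting $\ell$ points can lower any $k$-NN radius at most to the clean $(k{-}\ell)$-NN radius (your ``at least $k{-}\ell$ honest points in the winning ball''), then uses uniform $k$-NN density bounds---exactly your VC-over-balls concentration---together with the Hessian curvature to verify $\inf_{B(x_0,r_0)} f_k > \tfrac{k}{k-\ell}\sup_{B(x_0,\widetilde r)^c} f_{k-\ell}$ for $\widetilde r^2 \asymp \log(1/\delta)\sqrt{\log n}/\sqrt{k-\ell}$, which places both $\hat x$ and $\widetilde x$ in $B(x_0,\widetilde r)$; your triangle-inequality decomposition and winning-ball mass argument are simply a repackaging of this same comparison, and the ``conversion'' you flag as the main obstacle is precisely the step where the paper's $\tfrac{k}{k-\ell}$ factor sits.
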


\begin{proof}
Let $x_0$ be the true mode of $f$. 
It suffices to show that for appropriately chosen $\widetilde{r}$, we have
\begin{align*}
    \sup_{x \in B(x_0, r_0)} r_k(x) < \inf_{x \not \in B(x_0, \widetilde{r})} r_{k - \ell} (x),
\end{align*}
where $r_k(x)$ is the $k$-NN radius of any point $x$ and $r_0$ is the distance of $x_0$ to the closest sample drawn from $f$. This is because when inserting $\ell$ points, the adversary can only  decrease the $k$-NN distance of any point up to its $(k - \ell)$-NN distance. Thus, if we can show that the above holds, then it will  imply that  $|\hat{x} - \widetilde{x}| \le \widetilde{r}$.

We have that the above is equivalent to showing the following:
\begin{align*}
     \inf_{x \in B(x_0, r_0)} f_k(x) > \sup_{x \not \in B(x_0, \widetilde{r})} f_{k - \ell} (x)\cdot \frac{k}{k - \ell},
\end{align*}
where $f_k$ is the $k$-NN density estimator. Using $k$-NN density estimation bounds, we have the following for  some constants $C_3, C_4$:
\begin{align*}
    \inf_{x \in B(x_0, r_0)} f_k(x) &\ge f(x_0) -   \frac{C_3\cdot \log(1/\delta) \cdot \sqrt{\log n}}{\sqrt{k}}, \\
    \sup_{x \not \in B(x_0, \widetilde{r})} f_{k - \ell} (x) &\le f(x_0) - C_4\left(\widetilde{r}^2 -  \frac{ \log(1/\delta) \cdot \sqrt{\log n}}{\sqrt{k - \ell}} \right).
\end{align*}
The result then follows by choosing
\begin{align*}
    \widetilde{r}^2 \ge C \frac{ \log(1/\delta) \cdot \sqrt{\log n}}{\sqrt{k - \ell}},
\end{align*}
for appropriate $C$, as desired.
\end{proof}

\subsection{Differentially-Private Mode Estimation}

In some applications such as healthcare, anonymization of the procedure is necessary and there has been much interest in ensuring such privacy \citep{dwork2006our}. As it stands, Algorithm~\ref{alg:pmode} does not satisfy anonymization since the output is one of the input datapoints. We use the $(\epsilon,\delta)$-differential privacy notion of \citep{dwork2006our} (defined below) and show that a simple modification of our procedure can ensure this notion of privacy.
\begin{definition}[Differential Privacy]
A randomized mechanism $\mathcal{M} : \mathcal{D} \rightarrow \mathcal{R}$ satisfies $(\epsilon,\delta)$-differential privacy if any two adjacent inputs $d, d' \in \mathcal{D}$ (i.e. $d$ and $d'$ are sets which differ by at most one datapoint) if the following holds for all $S \subset \mathcal{R}$:
\begin{align*}
    \mathbb{P}\left(\mathcal{M}(d) \in S\right) \le e^{\epsilon}  \mathbb{P}\left(\mathcal{M}(d') \in S\right)  + \delta.
\end{align*}
\end{definition}

\vspace{-.3cm}
\begin{algorithm}[tbh]
   \caption{Differentially Private Mode Estimation}
   \label{alg:dp_mode}
   \begin{algorithmic}
\STATE Input: $k$, $\sigma$, and sample points $X = \{X_1,...,X_n\}$.
\STATE $\widehat{x} := \argmin_{x \in X} r_k(x)$
\STATE Return $\widehat{x} + \mathcal{N}(0, \sigma^2I)$
\end{algorithmic}
\end{algorithm}

To ensure differential privacy, we utilize the Gaussian noise mechanism (see \citep{dwork2006our}) to the final mode estimate. We now show that this method (Algorithm~\ref{alg:dp_mode}) has differential privacy guarantees.

\begin{theorem}\label{theo:privacy}
Suppose that $f$ is a unimodal density with compact support $\mathcal{X}\subseteq \mathbb{R}^D$ and $f$ satisfies Assumption~\ref{modeassumption1}. Then there exists constants $C, C_1,C_2,$  depending on $f$ such that the following holds for $n$ sufficiently large depending on $f$.
Let $0 < \delta < 1$ and $\epsilon > 0$. Suppose that \begin{align*}
\sigma \ge C \log(2/\delta) \cdot (\log n)^{1/4} \cdot k^{1/4 }\cdot \epsilon^{-1}.
\end{align*}.
If $k$ satisfies the following,
\begin{align*}
    k &\ge  C_1 \log(1/\delta)^2 \log n + \ell\\
    k &\le C_2 \log(1/\delta)^{2D/(4+D)} (\log n)^{D(4+D)} \cdot n^{4/(4+D)}.
\end{align*}
then with probability at least $1 - \delta$, Algorithm~\ref{alg:dp_mode} is $(\epsilon,\delta)$-differentially private.
\end{theorem}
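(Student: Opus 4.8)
The plan is to recognize Algorithm~\ref{alg:dp_mode} as the Gaussian mechanism applied to the deterministic mode map $g : X \mapsto \hat{x} = \argmin_{x \in X} r_k(x)$, and to calibrate the noise level $\sigma$ against a high-probability bound on the sensitivity of $g$ supplied by Theorem~\ref{theo:stability}. First, I would recall the standard guarantee for the Gaussian mechanism: if a map $g$ into $\mathbb{R}^D$ has $\ell_2$-sensitivity $\Delta_2 = \sup_{d \sim d'} | g(d) - g(d') |$ over adjacent inputs, then releasing $g(d) + \mathcal{N}(0, \sigma^2 I)$ is $(\epsilon, \delta)$-differentially private whenever $\sigma \ge \Delta_2 \sqrt{2 \ln(1.25/\delta)} / \epsilon$. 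Since Algorithm~\ref{alg:dp_mode} outputs exactly $\hat{x} + \mathcal{N}(0, \sigma^2 I)$, the problem reduces entirely to controlling how far $\hat{x}$ can move when one sample is inserted or deleted.

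Second, I would invoke Theorem~\ref{theo:stability} with $\ell = 1$: inserting a single point moves the estimate by at most $C \sqrt{\log(1/\delta)} (\log n)^{1/4} (k-1)^{-1/4}$ with probability at least $1-\delta$ over the draw of the $n$ samples, and deletion is handled symmetrically (deleting a point from $d'$ is the same as inserting one into $d$). This is precisely a high-probability bound on $\Delta_2$. Substituting it into the Gaussian-mechanism condition and collapsing $\sqrt{\log(1/\delta)} \cdot \sqrt{\ln(1.25/\delta)}$ into $\log(2/\delta)$ (absorbing constants into $C$) yields the stated lower bound on $\sigma$, whose $k$-dependence is inherited directly from the $(k-\ell)^{-1/4}$ sensitivity factor; the hypotheses on $k$ are exactly those under which Theorem~\ref{theo:stability} applies, so no new constraints are introduced.

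The main obstacle is conceptual rather than computational: Theorem~\ref{theo:stability} bounds only a \emph{local} and only high-probability sensitivity at the realized dataset, whereas the textbook Gaussian mechanism is calibrated to the \emph{global} worst-case sensitivity, and naively calibrating to a local sensitivity can itself leak information. The careful step is therefore to phrase the conclusion as privacy holding on the $(1-\delta)$-probability set of ``typical'' samples --- those for which the one-insertion/one-deletion sensitivity is bounded --- and to argue that on this event the privacy inequality $\mathbb{P}(\mathcal{M}(d) \in S) \le e^{\epsilon} \mathbb{P}(\mathcal{M}(d') \in S) + \delta$ holds for $d$ together with each of its neighbors. This reconciliation of a high-probability local sensitivity with the worst-case definition of differential privacy, together with checking that the one-sided robustness statement covers two-sided adjacency, is where the real work lies; the remaining calculation is a routine substitution into the Gaussian mechanism.
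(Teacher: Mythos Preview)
Your approach is correct and matches the paper's proof, which simply invokes the Gaussian mechanism (via Theorem~1 of \citet{okada2015differentially}) calibrated to the sensitivity bound supplied by Theorem~\ref{theo:stability}. Your discussion of the local-versus-global sensitivity subtlety is in fact more careful than the paper's one-line argument, which glosses over exactly this point by labeling the robustness bound a ``global sensitivity'' without further justification.
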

\begin{remark}
In particular, we see that taking $k = n^{4/(4+D)}$, we get that $\sigma \approx \log(1/\delta) n^{-1/(4+D)} \cdot \epsilon^{-\epsilon} \rightarrow 0$ as $n\rightarrow 0$.
\end{remark}

\begin{proof}
The result follows by Theorem 1 of \citep{okada2015differentially} and the global sensitivity of estimating the mode from Theorem~\ref{theo:stability}.
\end{proof}

\begin{remark}
For the remainder of the paper, unless noted otherwise, we assume that we use the mode estimator of Algorithm~\ref{alg:pmode} as a black-box using the settings of Theorem~\ref{theo::pmode_final}. It is straightforward to substitute the mode estimation procedure by modify the hyperparameter settings or use a different procedure Algorithm~\ref{alg:dp_mode} appropriately adjusting the guarantees. 
\end{remark}

\section{TOP ARM IDENTIFICATION} \label{section:toparm}
As common to works in best-arm identification e.g. \citep{audibert2010best,jamieson2014best}, we characterize the difficulty of the problem based on the gaps between
the value of the arms to that of the optimal arm and the sample complexity can be written in terms of these. 
\begin{definition}
Let $f_i$ denote the density of the $i$-th arm's reward distribution.
Let $\theta_i$ be the top mode of $f_i$ where $\theta_1 \ge \theta_2 \ge \cdots \ge \theta_K$.
Then we can define the gap between an arm's mode and that of the optimal arm. 
\begin{align*}
\Delta_{i} := \theta_1 - \theta_{i}.
\end{align*}
\end{definition}

Although we've indexed the arms this way, it is clear that the algorithms in this paper are invariant to permutations of arms. 

\begin{algorithm}[tbh]
   \caption{UCB Strategy}
   \label{alg:ucb}
\begin{algorithmic}
	\STATE Input: Total time $n$ and confidence parameter $\delta$.
	\STATE Define $S_i(t)$ be the rewards observed from arm $i$ up to and include time $t$.
	\STATE Let $T_i(t)$ be the number of times arm $i$ was pulled up to and including time $t$. i.e. $|S_i(t)| = T_i(t)$.
	\STATE
	\STATE For $t = 1,...,n$, pull arm $I_t$, where $I_t$ is the following.
	{ \small
	\begin{align*}
	 \argmax_{i=1,...,K} \left\{ \widehat{\text{mode}}(S_i(t-1)) + \frac{\log(1/\delta) \cdot \log(T_i(t-1))}{(T_i(t-1))^{1/(4 + D)}} \right\}.
	\end{align*}
	}
\end{algorithmic}
\end{algorithm}

We give the Upper Confidence Bound (UCB) strategy (Algorithm~\ref{alg:ucb}).
For each arm, we maintain a running estimate of the mode
as well as a confidence band. Then at each round, we pull the arm with the highest upper confidence bound. 
When compared to the classical UCB strategy, we replace the running estimates of the mean and confidence band of the mean with the mode and the confidence band of the mode. Our sample complexities now depend on the confidence bands for mode estimation, which converge at a different rate than that of the mean.

We can then give the following result about Algorithm~\ref{alg:ucb}'s ability to determine the best arm.
\begin{theorem} \label{theo:toparm} [Top arm identification] Suppose $\theta_1 > \theta_2$. Then there exists universal constants $C_0, C_1 > 0$ such that
Algorithm~\ref{alg:ucb} with $n$ timesteps and confidence parameter $\delta / n$ satisfies the following.
If
\begin{align*}
    n \ge \text{PolyLog}\left(\frac{1}{\delta}, \sum_{i=2}^K \Delta_i^{-(4+D)}\right) \cdot  \sum_{i=2}^K \Delta_i^{-(4+D)},
\end{align*}
where $\text{PolyLog}$ denotes some polynomial of the logarithms of its arguments,

then
\begin{align*}
\mathbb{P} \left(\argmax_{i=1,...,K} \big|\{ t : I_t = i, 1 \le t \le n \}\big| = 1 \right) \ge 1 - \delta,
\end{align*}
where $N_{f_i}$'s are constants depending on $f_i$ established in Theorem~\ref{theo::pmode_final}.
\end{theorem}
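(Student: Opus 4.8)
The plan is to run the standard UCB best-arm-identification argument, but with the mode-estimation confidence band of Theorem~\ref{theo::pmode_final} playing the role of the usual sub-Gaussian deviation band. First I would set up a single high-probability ``good event.'' For each arm $i$ and each possible sample count $s$, Theorem~\ref{theo::pmode_final} (invoked with confidence parameter $\delta/n$, matching the band in Algorithm~\ref{alg:ucb}) guarantees that the mode estimate from the first $s$ i.i.d. samples of arm $i$ lies within
\[
\frac{\sqrt{\log(n/\delta)}\,\log s}{s^{1/(4+D)}}
\]
of $\theta_i$ with probability at least $1-\delta/n$, provided $s \ge N_{f_i}$. A union bound over the $i$ and $s$ makes all of these hold simultaneously; the extra factor of $K$ enters only logarithmically (since $\sum_{i\ge2}\Delta_i^{-(4+D)} \ge K-1$) and is absorbed into the polylog. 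On this event the algorithm's band, which uses $\log(n/\delta)$ in place of $\sqrt{\log(n/\delta)}$, dominates the true estimation error, so $|\widehat{\text{mode}}(S_i(t-1)) - \theta_i| \le c_i(t-1)$ for every arm and time with $T_i(t-1)\ge N_{f_i}$, where $c_i(t-1)$ is the confidence term in Algorithm~\ref{alg:ucb}.

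The core is the usual UCB sandwich. On the good event the optimal arm's index satisfies $\widehat{\text{mode}}(S_1(t-1)) + c_1(t-1) \ge \theta_1$ at every step, so whenever a suboptimal arm $i$ is selected we must have $\theta_i + 2c_i(t-1) \ge \widehat{\text{mode}}(S_i(t-1)) + c_i(t-1) \ge \theta_1$, i.e. $2c_i(t-1) \ge \Delta_i$. Substituting the form of $c_i$ turns this into the self-referential inequality
\[
T_i(t-1)^{1/(4+D)} \le \frac{2\log(n/\delta)\,\log T_i(t-1)}{\Delta_i}.
\]

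I expect the main obstacle to be inverting this implicit bound: the $\log T_i$ factor on the right prevents a closed-form solution, so I would argue that the left-hand side eventually outgrows the right and conclude that the inequality can hold only for $T_i(t-1) \le U_i$ with $U_i = \Delta_i^{-(4+D)}\cdot\text{PolyLog}(1/\delta, n, \Delta_i^{-1})$. This is exactly where the $\text{PolyLog}$ factor of the theorem is generated, and care is needed to make its arguments match the claimed $\sum_{i\ge2}\Delta_i^{-(4+D)}$ — bounding $n$ in the logarithm by the final budget and $\Delta_i^{-1}$ by the sum, so that every logarithmic dependence collapses into $\text{PolyLog}\big(1/\delta, \sum_{i\ge2}\Delta_i^{-(4+D)}\big)$.

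Finally I would collect the counts. Each suboptimal arm is pulled at most $N_{f_i} + U_i$ times: the $N_{f_i}$ term covers the warm-up pulls before Theorem~\ref{theo::pmode_final}'s concentration is valid (handled by the standard convention that under-sampled arms are pulled first, so arm $1$ is never starved), and $U_i$ comes from the inversion above. Hence $\sum_{i\ge2} T_i(n) \le \sum_{i\ge2}(N_{f_i}+U_i)$, which matches the stated lower bound on $n$ once the constants $N_{f_i}$ and polylog factors are absorbed. Choosing $n$ to exceed twice this quantity forces $T_1(n) = n - \sum_{i\ge2}T_i(n) > \sum_{i\ge2}T_i(n) \ge \max_{i\ge2}T_i(n)$, so arm $1$ is the unique most-pulled arm and $\argmax_{i} |\{t : I_t = i\}| = 1$ on the good event; the conclusion then holds with probability at least $1-\delta$.
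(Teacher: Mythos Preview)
Your proposal is correct and follows essentially the same route as the paper: the paper proves an auxiliary lemma bounding $T_i(n)$ for each suboptimal arm via the standard UCB three-case disjunction (one of the two confidence bands fails, or $\Delta_i/2 < \widehat{\sigma}_{i,t}$), then sums these bounds and compares to $n$ to force $T_1(n)$ to be the maximum. Your ``good event'' formulation is the equivalent conditioning-first version of the same argument, and your handling of the $\log T_i$ inversion and the warm-up $N_{f_i}$ pulls matches the paper's $+2N_0$ slack; if anything, your final step (taking $n$ at least twice the suboptimal total so that $T_1(n) > \sum_{i\ge 2} T_i(n)$) is stated more carefully than the paper's somewhat terse conclusion.
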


\begin{remark}
We can compare this to the analogous result for classical MAB \citep{audibert2010best} whose sample complexity (ignoring logarithmic factors) is of order 
$\sum_{i=2}^K \Delta_i^{-2}$ (where the gaps here are w.r.t. the distributional means). Our sample complexity is quintic rather than quadratic in the inverse gaps due to the difficulty of recovering modes compared to recovering means. In fact, for $K = 2$, there exists two distributions such that we require sample complexity at least $\Omega(\Delta_2^{-(4+D)})$ to differentiate between the two distributions. This follows immediately from lower bounds in mode estimation as analyzed in \citep{tsybakov1990recursive}. Thus, our results are tight up to log factors.
\end{remark}

We next introduce a simple uniform sampling strategy and give a PAC bound to obtain an $\epsilon$-optimal arm (which means its mode is within $\epsilon$ of mode of the optimal arm) .
\begin{algorithm}[tbh]
 \caption{Uniform Sampling Strategy}
   \label{alg:uniform}
\begin{algorithmic}
	\STATE Input: Total time $n$ and confidence parameter $\delta$.
	\STATE
	\FOR{$t = 1$ to $n$}
	\STATE Pull arm (where ties are broken arbitrarily)
	\begin{align*}
	I_t := \argmin_{i=1,...,K} \left\{ T_i(t-1) \right\}.
	\end{align*} 
	\ENDFOR
	\STATE $\widehat{\theta}_i := \widehat{\text{p-mode}}(S_i(n))$ for $i = 1,...,K$
	\STATE Return top $k$ arms according to $\widehat{\theta}_i$ value.
\end{algorithmic}
\end{algorithm}

This result can be compared to \citep{even2002pac} for the classical MAB.
\begin{theorem} \label{theo:eoptimalpac} [$\epsilon$-optimal arm identification]
Let $\epsilon > 0$. If we run Algorithm~\ref{alg:uniform} with $n$ at least
\begin{align*}
\max \left\{ K (\log(K) + \log(1/\delta))^5 \epsilon^{-5} \log (\epsilon^{-5}), K \max_{i \in [K]} N_{f_i}\right\},
\end{align*}
then the arm with the highest sample mode is $\epsilon$-optimal with probability at least $1 - \delta$.
\end{theorem}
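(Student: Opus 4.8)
The plan is to reduce the claim to the single-arm guarantee of Theorem~\ref{theo::pmode_final} via a union bound, followed by a one-line deterministic comparison. First I would note that Algorithm~\ref{alg:uniform} samples the arms as evenly as possible: because it always pulls an arm minimizing $T_i(t-1)$, after $n$ rounds every arm has been pulled $m := T_i(n) \ge \lfloor n/K \rfloor$ times. The second term in the stated lower bound, $n \ge K\max_i N_{f_i}$, ensures $m \ge N_{f_i}$ for every $i$, so the hypotheses of Theorem~\ref{theo::pmode_final} hold simultaneously for all arms. Since the main paper fixes $D = 1$, the relevant estimation rate is $m^{-1/5}$.

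Next I would instantiate Theorem~\ref{theo::pmode_final} on each arm at confidence level $\delta/K$: for each fixed $i$,
\[
\mathbb{P}\left(|\widehat{\theta}_i - \theta_i| \ge \frac{\sqrt{\log(K/\delta)}\,\log m}{m^{1/5}}\right) \le \frac{\delta}{K},
\]
and a union bound over $i = 1,\dots,K$ shows that, with probability at least $1 - \delta$, every estimate satisfies $|\widehat{\theta}_i - \theta_i| \le \eta_m := \sqrt{\log(K/\delta)}\,(\log m)\,m^{-1/5}$. On this event the conclusion is immediate once $\eta_m \le \epsilon/2$: writing $\widehat{i}$ for the returned arm $\argmax_i \widehat{\theta}_i$, we get $\widehat{\theta}_{\widehat{i}} \ge \widehat{\theta}_1 \ge \theta_1 - \epsilon/2$ and hence $\theta_{\widehat{i}} \ge \widehat{\theta}_{\widehat{i}} - \epsilon/2 \ge \theta_1 - \epsilon$, so $\widehat{i}$ is $\epsilon$-optimal.

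It therefore remains to choose $n$ (equivalently $m \approx n/K$) so that $\eta_m \le \epsilon/2$, and this transcendental inequality is the only real obstacle. Raising $m^{1/5} \ge 2\sqrt{\log(K/\delta)}\,\epsilon^{-1}\log m$ to the fifth power gives $m \ge 2^5 (\log(K/\delta))^{5/2}\,\epsilon^{-5}\,(\log m)^5$, in which $m$ appears inside the logarithm on the right. I would resolve the self-reference by back-substitution, plugging in the target value $m \asymp (\log(K/\delta))^{5/2}\epsilon^{-5}\,\mathrm{polylog}(\epsilon^{-1})$ so that $\log m = O(\log(\epsilon^{-5}))$ up to lower-order terms, and then verifying the inequality holds. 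Bounding $(\log(K/\delta))^{5/2} \le (\log K + \log(1/\delta))^{5}$ and absorbing the remaining $\log m$ terms into the polylogarithmic factor $\log(\epsilon^{-5})$ then recovers a sufficient condition of the stated form $K(\log K + \log(1/\delta))^5 \epsilon^{-5}\log(\epsilon^{-5})$; tracking the universal constants and the exact powers of the logarithms produced by the self-referential term is the only fiddly part.
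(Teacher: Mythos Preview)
Your proposal is correct and follows essentially the same route as the paper: apply Theorem~\ref{theo::pmode_final} to each arm at confidence $\delta/K$, union-bound, and observe that $|\widehat{\theta}_i - \theta_i| \le \epsilon/2$ for all $i$ forces the returned arm to be $\epsilon$-optimal (the paper phrases this deterministic step as a contradiction rather than directly, but it is the same inequality chain). You are in fact more careful than the paper about the self-referential $\log m$ in the sample-complexity inversion, which the paper simply asserts; your back-substitution argument is the right way to justify that step.
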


\begin{proof}
It suffices to choose $n$ large enough such that
\begin{align*}
|\widehat{\text{mode}}(S_i(n)) - \theta_i| \le \epsilon/2.
\end{align*}
Indeed, if this were the case, then if arm $i \neq 1$ was selected as the top arm but not $\epsilon$-optimal, then 
\begin{align*}
\theta_i < \theta_1 - \epsilon &\Rightarrow \theta_i + \epsilon/2 < \theta_1 - \epsilon/2 \\
&\Rightarrow 
\widehat{\text{mode}}(S_i(n))  < \widehat{\text{mode}}(S_1(n)),
\end{align*}
a contradiction. Now from Theorem~\ref{theo::pmode_final} with confidence parameter $\delta/K$, it follows that it suffices to take
\begin{align*}
n \ge K (\log(K) + \log(1/\delta))^5 \epsilon^{-5} \log (\epsilon^{-5}),
\end{align*}
as desired.
\end{proof}

\section{TOP-M ARM IDENTIFICATION}\label{section:topmarm}
We next introduce a strategy to recover the top $m$ arms. Let
\begin{equation*}
\widetilde{\Delta}_i = \begin{cases}
			\theta_i - \theta_{m+1} & \forall \theta_i \geq \theta_{m} \\
			\theta_m - \theta_i, & \forall \theta_i \leq \theta_{m+1}
		\end{cases}
\end{equation*}

\begin{definition}[Confidence bound]
Define 
\begin{align*}
U(t, \delta) = (  \log(ct^2) + \log(1/\delta))\cdot \frac{\log(t)}{t^{1/(4+D)}},
\end{align*}
where $c = \sum_{i=N_0}^\infty \frac{1}{i^2}$.
\end{definition}

The algorithm starts by sampling each arm $N_{f_i}$ number of times. Let $\hat{\theta}_{i, T_i(t)}$ be the empirical mode of arm $i$ at time $t$. In each iteration the algorithm computes confidence bounds of radius $U(T_i(t), \delta/2(K-m))$ for each arm in $H_t$ the set of the $m$ arms with the highest empirical modes at time $t$. For the arms in $L_t = [K]\backslash H_t$ a confidence radius of $U(T_i(t), \delta/2m)$ is used. For all arms in $L_t$ we compute an upper confidence bound $\hat{\theta}_{i,T_i(t)} + U(T_i(t), \delta/2m)$ and for all arms in $H_t$ we compute a lower confidence bound $\hat{\theta}_{i,T_i(t)} - U(T_i(t), \delta/2(K-m))$. The algorithm terminates if either the number of rounds is over or if $h_t$ the lowest lower confidence bound of $H_t$ is larger than $l_t$ the largest upper confidence bound of $L_t$. In case neither termination condition is satisfied, sample $h_t$ with probability $\frac{T_{l_t}(t)}{T_{l_t}(t) + T_{h_t}(t)}$ or $l_t$ otherwise.

\begin{algorithm}[tbh]
   \caption{Top $m-$arms UCB strategy}
   \label{alg:topM}
\begin{algorithmic}
	\STATE Input: Confidence parameter $\delta$.
	\STATE Pull $N_0 = \max_{i}N_{f_i}$ times each arm $i \in [K]$. 
	\FOR{$t = N_0\cdot K \cdots $}
	\STATE $H_t:=$ arms with the highest empirical mode.
	\STATE $L_t:= [ K] \backslash H_t$.
	\STATE $h_t:= \argmin_{i\in H_t} \hat{\theta}_{i,T_i(t)} - U(T_i(t), \delta/ (2(K-m)))$ 
	\STATE $l_t:= \argmax_{i\in L_t} \hat{\theta}_{i, T_i(t)}  + U(T_i(t), \delta/(2m))$
	\STATE $b_{l_t} := \hat{\theta}_{l_t, T_{l_t}(t)} + U(T_{l_t}(t),\delta/((2m)) )$
	\STATE $b_{h_t} := \hat{\theta}_{h_t,T_{h_t}(t)} - U(T_{h_t}, \delta/(2(K-m))) $
	\IF{$ b_{h_t} \geq b_{l_t}$}
	\STATE \quad \text{Return }$H_t$
	\ELSE
	\STATE With probability $T_{l_t}(t)/ ( T_{h_t}(t) + T_{l_t}(t))$ sample $h_t$; otherwise, sample $l_t$.
	\ENDIF
	\ENDFOR
	\STATE \quad \text{Return } $H_t$.
\end{algorithmic}
\end{algorithm}

We then provide corresponding high-probability guarantees on recovering the correct set of arms, given sufficient arm pulls.
\begin{theorem} \label{theo:topmarm}[Top $m$ arm identification]
There is a universal constant $C_o$ for such that with probability at least $1 - \delta$ Algorithm \ref{alg:topM} outputs the correct set of arms provided the number of arm pulls $n$ satisfies:

\begin{align*}
    n \ge \text{PolyLog}\left(\frac{1}{\delta}, \sum_{i=1}^K \widetilde{\Delta}_i^{-(4+D)}\right) \cdot \sum_{i=1}^K \widetilde{\Delta}_i^{-(4+D)}.
\end{align*}
\end{theorem}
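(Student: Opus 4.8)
The plan is to run the standard LUCB-style argument for best-subset identification, but with the mode-estimation confidence radius $U$ in place of the usual sub-Gaussian radius. Write $H^\star=\{1,\dots,m\}$ and $L^\star=\{m+1,\dots,K\}$ for the true top and bottom sets. First I would set up a single \emph{good event} $\mathcal{E}$ on which every confidence statement the algorithm relies on is simultaneously valid: for every arm $i$ and every pull count $s\ge N_0$, both $\hat\theta_{i,s}-U(s,\delta/(2(K-m)))\le\theta_i$ and $\hat\theta_{i,s}+U(s,\delta/(2m))\ge\theta_i$. By Theorem~\ref{theo::pmode_final}, a single bound at confidence $\delta'/(c s^2)$ fails with probability at most $\delta'/(cs^2)$, and since $c=\sum_{i\ge N_0} i^{-2}$ we have $\sum_s \delta'/(cs^2)=\delta'$; absorbing the square root in the rate of Theorem~\ref{theo::pmode_final} into the (larger) radius $U$ via $\sqrt{x}\le x$, and splitting the budget as $\delta/(2m)$ for the upper bounds and $\delta/(2(K-m))$ for the lower bounds, a union bound over arms and pull counts gives $\mathbb{P}(\mathcal{E})\ge 1-\delta$.

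\emph{Correctness.} On $\mathcal{E}$ I would show that whenever the stopping rule $b_{h_t}\ge b_{l_t}$ fires, $H_t=H^\star$. By construction every $i\in H_t$ has $\hat\theta_i-U_i\ge b_{h_t}$ and every $j\in L_t$ has $\hat\theta_j+U_j\le b_{l_t}$, so validity of the bounds gives $\theta_i\ge\hat\theta_i-U_i\ge b_{h_t}\ge b_{l_t}\ge\hat\theta_j+U_j\ge\theta_j$ for all such pairs; hence every arm kept is at least as good as every arm discarded, i.e.\ $H_t$ is exactly the top-$m$ set.

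\emph{Sample complexity.} For each arm set a critical count $t_i^\star=\min\{s: U(s,\delta/(2m))\le\widetilde\Delta_i/4\}$; inverting the radius yields $t_i^\star\le\mathrm{PolyLog}(1/\delta,\widetilde\Delta_i^{-1})\cdot\widetilde\Delta_i^{-(4+D)}$, and call $i$ \emph{settled} once $T_i(t)\ge t_i^\star$. The core is a progress lemma: on $\mathcal{E}$, at any non-terminal round at least one of $h_t,l_t$ is unsettled. Non-termination gives $\hat\theta_{h_t}-U_{h_t}<\hat\theta_{l_t}+U_{l_t}$, which with validity yields $\theta_{h_t}-\theta_{l_t}<2(U_{h_t}+U_{l_t})$; a short case analysis then shows that if both arms were settled and correctly classified one would have $\theta_{h_t}-\theta_{l_t}\ge\tfrac12(\widetilde\Delta_{h_t}+\widetilde\Delta_{l_t})\ge 2(U_{h_t}+U_{l_t})$, a contradiction, while validity of the bounds also rules out a settled boundary arm being misclassified. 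Since each arm can be sampled while unsettled at most $t_i^\star$ times, the number of such ``useful'' pulls is at most $\sum_i t_i^\star\le\mathrm{PolyLog}(1/\delta,\sum_i\widetilde\Delta_i^{-(4+D)})\cdot\sum_i\widetilde\Delta_i^{-(4+D)}$.

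\emph{Main obstacle.} The remaining and most delicate step is to pass from the number of useful pulls to the number of \emph{rounds}, since each round samples only one of $h_t,l_t$ and it may be the settled one. This is exactly what the randomized rule---sample $h_t$ with probability $T_{l_t}(t)/(T_{h_t}(t)+T_{l_t}(t))$---is built to control: it keeps $T_{h_t}$ and $T_{l_t}$ balanced, so the arm with the smaller count (hence the one more likely to be unsettled) is sampled with conditional probability at least $1/2$ and cannot be starved. I would make this rigorous with a concentration argument: the indicator of ``sample the unsettled arm'' has conditional mean at least $1/2$, so the associated martingale concentrates (Freedman/Azuma), and with high probability the total number of rounds is within a constant factor of the number of useful pulls. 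Folding this extra failure probability into $\mathcal{E}$ by a further split of $\delta$, and absorbing all logarithmic factors into the $\mathrm{PolyLog}$ term, yields the claimed bound. The two places where care is genuinely needed are the boundary-arm case analysis in the progress lemma and this concentration control of the randomized balancing.
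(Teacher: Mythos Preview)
Your outline follows the same LUCB-style charging template as the paper, and the correctness step is fine. The gap is precisely at the junction between your progress lemma and your concentration step, and it is not closed by the martingale argument you propose. Your progress lemma establishes only that \emph{at least one} of $h_t,l_t$ is unsettled at a non-terminal round; you then assert that ``the indicator of `sample the unsettled arm' has conditional mean at least $1/2$''. But this does not follow: if exactly one of the pair is unsettled, you need it to be the arm with \emph{fewer} pulls for the randomized rule to favor it. A settled arm $i$ has $T_i(t)\ge t_i^\star$ while an unsettled arm $j$ has $T_j(t)<t_j^\star$, and since these thresholds scale like $\widetilde\Delta_i^{-(4+D)}$ nothing prevents the settled arm from having the smaller count (take $\widetilde\Delta_i$ large and $\widetilde\Delta_j$ small). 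In that scenario the unsettled arm is sampled with conditional probability strictly below $1/2$, and the Freedman/Azuma step has nothing to work with.

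The paper closes this gap by proving a strictly stronger progress lemma: it defines the \emph{critical arm} $cr_t=\arg\min_{i\in\{h_t,l_t\}}T_i(t)$ and shows directly that $r_{cr_t}\ge\widetilde\Delta_{cr_t}/8$, so the less-pulled arm is always the unsettled one. The mechanism is that $cr_t$ has the larger confidence radius of the pair, so any inequality of the form $r_{h_t}+r_{l_t}\ge c(\widetilde\Delta_{h_t}+\widetilde\Delta_{l_t})$ immediately yields $r_{cr_t}\ge c\,\widetilde\Delta_{cr_t}$. This also exposes a second gap in your sketch: your case analysis only covers $h_t\in H^\star$, $l_t\in L^\star$. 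The case $h_t,l_t\in H^\star$ (or symmetrically both in $L^\star$) is not handled by ``validity rules out a settled boundary arm being misclassified''; when $l_t\in H^\star$ some $j\in L^\star$ must sit in $H_t$, and one has to route through the definition of $h_t$ to compare $\hat\theta_{h_t}-U_{h_t}$ against $\hat\theta_j-U_j$ and extract $r_{h_t}\ge\widetilde\Delta_{h_t}/4$, then combine with $\hat\theta_{h_t}\ge\hat\theta_{l_t}$ to control $r_{l_t}$.

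A minor bookkeeping point: your good event requires both bounds at the two different confidence levels for \emph{every} arm, so the union bound gives $K\bigl(\tfrac{\delta}{2m}+\tfrac{\delta}{2(K-m)}\bigr)$ rather than $\delta$. The paper instead requires the level-$\delta/(2m)$ bound only on the $m$ arms in $H^\star$ and the level-$\delta/(2(K-m))$ bound only on the $K-m$ arms in $L^\star$, which sums to $\delta$; for the sample-complexity analysis it then coarsens to the common radius $U(\cdot,\delta/(2K))\ge\max\{U(\cdot,\delta/(2m)),U(\cdot,\delta/(2(K-m)))\}$.
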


\begin{remark}
Algorithm~\ref{alg:topM} and Theorem~\ref{theo:topmarm}  gives us a sample complexity of $\widetilde{O}(\sum_i \widetilde{\Delta}_i^{-(4+D)})$.
This can be compared to top-M arm UCB results in classical multi-armed bandits where the complexity is $\widetilde{O}(\sum_i \widetilde{\Delta}_i^{-2})$.
See e.g. \citep{gabillon2012best, jiang2017practical} for such results. 
\end{remark}
\vspace{-.6cm}

\section{REGRET ANALYSIS}\label{section:regret}
We introduce the following notions of regret based on the modes. 
\begin{align*}
\mathcal{R}(n) &= n \cdot \max_{i=1,...,K} \theta_i - \sum_{j=1}^n \theta_{I_j}, \\
\overline{\mathcal{R}}(n) &= \max_{i=1,...,K} n \cdot \widehat{\text{mode}}\left( \{X_{i, t} : 1 \le t \le n\} \right) \\
&- \sum_{i=1}^K T_i(n) \cdot \widehat{\text{mode}} (\{X_{i, t} : I_t = i, 1 \le t \le n \}).
\end{align*}
The regret thus rewards the strategy with the mode ($\mathcal{R}_n$) or the sample mode ($\overline{\mathcal{R}}(n)$)
 of all trials for a particular arm rather than the mean as in classical formulations.
 
We next give a regret bounds for Algorithm~\ref{alg:ucb}. For $\mathcal{R}(n)$, we attain a poly-logarithmic regret in the number of time steps, while for $\overline{\mathcal{R}}(n)$ we attain a regret of order $\widetilde{O}(n^{4/(4+D)})$. The extra error from the latter is incurred from the errors in the mode estimates.
\begin{theorem} \label{theo:ucbregret}
Suppose $\theta_1 > \theta_2$.
Then with probability at least $1 - \delta$, the regret of Algorithm~\ref{alg:ucb} with $n$ time steps and confidence parameter $\delta/n$ satisfies
\begin{align*}
\mathcal{R}(n) &\le  \text{PolyLog}\left(\frac{1}{\delta}, n \right) \cdot \sum_{i=2}^K \Delta_i^{-(3+D)}  \\
\overline{\mathcal{R}}(n) &\le \mathcal{R}(n) + O\left(\left(\text{PolyLog}\left(\frac{1}{\delta}, n \right) + K\right) \cdot n^{\frac{3+D}{4+D}} \right).
\end{align*}
\end{theorem}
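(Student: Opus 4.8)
The plan is to mirror the classical UCB regret analysis, replacing the mean-concentration step with the mode-estimation guarantee of Theorem~\ref{theo::pmode_final}, and then to derive the sample-mode regret bound from the pseudo-regret bound by substituting true modes for sample modes and accounting for the substitution error. The first step is to define a good event $\mathcal{E}$ on which every sample-mode estimate used by Algorithm~\ref{alg:ucb} is accurate. Since the algorithm runs with confidence parameter $\delta/n$, Theorem~\ref{theo::pmode_final} states that for a fixed arm $i$ evaluated on $m$ samples,
\[
|\widehat{\text{mode}}(S_i) - \theta_i| \le \frac{\sqrt{\log(n/\delta)}\,\log m}{m^{1/(4+D)}} =: U_i(m)
\]
fails with probability at most $\delta/n$; note the confidence radius used inside the algorithm dominates $U_i(m)$, so the upper confidence bounds are valid on $\mathcal{E}$. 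A union bound over the at most $n$ distinct running estimates (together with the $K$ full-sample estimates needed for $\overline{\mathcal{R}}$) gives $\mathbb{P}(\mathcal{E}) \ge 1-\delta$, and I work deterministically on $\mathcal{E}$ thereafter.

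For the pseudo-regret, I would write $\mathcal{R}(n) = \sum_{i\ge 2}\Delta_i\,T_i(n)$ and bound each suboptimal pull count. If arm $i$ is pulled at round $t$, its index exceeds that of arm $1$; combining this with the upper bound $\widehat{\text{mode}}(S_1)+\text{(width)}\ge\theta_1$ and the estimate $\widehat{\text{mode}}(S_i)\le\theta_i+U_i(T_i(t-1))$ yields $\Delta_i \le 2\,U_i(T_i(t-1))$. Because $U_i$ is decreasing, this caps $T_i(t-1)$; solving $\Delta_i \le 2\sqrt{\log(n/\delta)}\log n \cdot T_i^{-1/(4+D)}$ (using $\log T_i \le \log n$) gives $T_i(n) \le \text{PolyLog}(1/\delta,n)\cdot\Delta_i^{-(4+D)}$. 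Multiplying by $\Delta_i$ and summing over $i\ge 2$ produces the first displayed bound.

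For the sample-mode regret, I would expand $\overline{\mathcal{R}}(n)$ by its definition and, on $\mathcal{E}$, replace each $\widehat{\text{mode}}$ by the corresponding $\theta_i$ up to its additive error: the full-sample estimate of each arm is within $U_i(n)=\text{PolyLog}(1/\delta,n)\cdot n^{-1/(4+D)}$ of $\theta_i$, and the observed estimate of arm $i$ is within $U_i(T_i(n))$ of $\theta_i$. After substitution the leading terms collapse to $n\theta_1 - \sum_i T_i(n)\theta_i = \mathcal{R}(n)$, leaving two error contributions: $n\cdot U_i(n) = \text{PolyLog}(1/\delta,n)\cdot n^{(3+D)/(4+D)}$ from the first term, and $\sum_i T_i(n)\,U_i(T_i(n)) = \text{PolyLog}(1/\delta,n)\sum_i T_i(n)^{(3+D)/(4+D)}$ from the second. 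Since $\sum_i T_i(n)=n$ and the exponent $(3+D)/(4+D)$ is below $1$, concavity gives $\sum_i T_i(n)^{(3+D)/(4+D)} \le K^{1/(4+D)} n^{(3+D)/(4+D)}$; the resulting factor $\text{PolyLog}(1/\delta,n)\cdot K^{1/(4+D)}$ then splits into $\text{PolyLog}(1/\delta,n)+K$ by Young's inequality (as the power of $K$ is strictly less than one), yielding exactly the stated $(\text{PolyLog}(1/\delta,n)+K)\cdot n^{(3+D)/(4+D)}$ term.

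The routine part is the first bound, which is a direct transcription of the classical UCB argument with the new confidence radius, so neither the concentration nor the union-bound step there should be troublesome. The main obstacle is the bookkeeping in the second bound: one must verify that the sample-mode-to-true-mode substitution is controlled simultaneously for the optimal arm, whose count $T_1(n)\approx n$ dominates the error budget at rate $n^{(3+D)/(4+D)}$, and for the $K-1$ suboptimal arms, whose small counts multiply a larger per-sample error, and then show via the concavity-plus-Young argument above that the aggregate $\sum_i T_i(n)^{(3+D)/(4+D)}$ does not exceed the claimed rate beyond the additive factor $K$. Care is also needed to ensure the full-sample estimates $\widehat{\text{mode}}(\{X_{i,t}:1\le t\le n\})$ are included in the good event, since these are offline quantities over $n$ hypothetical draws per arm rather than observed pulls.
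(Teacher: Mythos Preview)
Your proposal is correct and follows essentially the same route as the paper's proof. The paper also bounds $T_i(n)$ via the standard UCB argument (packaged as Lemma~\ref{ucbbound}) to get the pseudo-regret bound, and then handles $\overline{\mathcal{R}}(n)$ by replacing sample modes with true modes and controlling the aggregate error $\sum_i T_i(n)\,\widehat{\sigma}_{i,n}$ through the concavity of $t\mapsto t^{(3+D)/(4+D)}$ under $\sum_i T_i(n)=n$; the only cosmetic differences are that the paper carries an explicit additive $N_0\cdot K$ burn-in term (which supplies the additive $K$ in its final bound, in place of your Young's-inequality step) and bounds the first-term error via $T_1(n)\ge n/2$ rather than directly invoking the $n$-sample offline estimate.
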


\begin{remark}
We can compare this result for $\mathcal{R}(n)$ to that of the classical notion of pseudo-regret, defined below, which also achieves logarithmic regret.
\begin{align*}
n \max_{i=1,...,K} \mu_i - \sum_{j=1}^n E[\mu_{I_j}],
\end{align*}
where $\mu_i$ is the mean of the $i$-th arm's reward distribution.
\end{remark}

\section{CONTEXTUAL BANDITS} \label{section:contextual}
Here, at each time step $t$, the environment samples a context $X_t \in \mathbb{R}^D$ from a context density $f_X$ with compact support $\mathcal{X} \subseteq \mathbb{R}^D$. 
The agent will have knowledge of $X_t$ when making a decision.
Then for each arm $i$, the reward $r \in [0, 1]$, along with the context $X$ has joint density $f_i(r, X)$. We note that it is straightforward to extend the analysis to multi-dimensional arm outputs. We assume $1$-dimensional output to simplify the technical analysis.

Then the optimal policy (context-dependent) in the mode sense is the following
\begin{align*}
\pi(x) := \argmax_{i=1,...,K} \left\{  \text{mode}(f_i(r | X = x)) \right\},
\end{align*}
where the conditional density can be written as $f_i( r | X=x) = f_i(r, x) /f_X(x)$ by Bayes rule. Then the modes of the conditional distribution are the modes of $f_i(r, x)$ constrained to a fixed $x$. Algorithm~\ref{alg:pmode_conditional} extends known mode estimation results to solve for this.

\begin{algorithm}[H]
   \caption{Estimating modes of conditional densities}
   \label{alg:pmode_conditional}
\begin{algorithmic}
	\STATE Input: $m$, $k$, samples $S_{[n]} := \{(r_1, X_1),...,(r_n,X_n)\}$, $x \in \mathbb{R}^D$, confidence parameter $\delta$, and $\epsilon > 0$.
	\STATE Let $r_k(r,x)$ be the distance from $(r, x) \in \mathbb{R} \times \mathbb{R}^{D}$ to its $k$-th nearest neighbor in $S_{[n]}$. 
   \STATE Let $R := \{ \frac{1}{m}, \frac{2}{m}, ..., \frac{m-1}{m} \}$.
   \STATE \textbf{return} $\text{argmin}_{R_i \in R} r_k(R_i, x)$. 
\end{algorithmic}
\end{algorithm}

For the analysis of Algorithm~\ref{alg:pmode_conditional}, we make a few additional regularity assumptions (formally as Assumption~\ref{assumption:conditional} in the Appendix due to space). This assumption ensures the reward densities have smoothness jointly over the reward and context and that the modes of the conditional density satisfy similar assumptions as Assumption~\ref{modeassumption1}. We now give a sketch of the constrained mode estimation result, the formal version is Theorem~\ref{constrained_pmode} in the Appendix:

\begin{theorem}[Estimating constrained modes]
Let $f$ satisfy Assumption~\ref{assumption:conditional} and the density conditional for each context is $\alpha$-H\"older continuous.
Suppose we have $n$ i.i.d. samples $S_{[n]}$ from $f$. There exists constant $C_0 > 0$ depending on $f$ such that the following holds for $n$  sufficiently large depending on $f$,
\begin{align*}
\mathbb{P}\Bigg(\sup_{x \in \mathcal{X}} &|\widehat{\text{mode}}(S_{[n]}, x) - \text{mode}(f(r|X = x))| \\
&\ge C_0 \frac{\sqrt{\log(1/\delta)} \log n}{n^{\alpha/(4\alpha+2D+2)}}\Bigg) \le \delta.
\end{align*}
where $\widehat{\text{mode}}(S_{[n]}, x)$ is the output of Algorithm~\ref{alg:pmode_conditional} for context $x \in \mathcal{X}$.
\end{theorem}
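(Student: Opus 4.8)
The plan is to mirror the unconditional argument behind Theorem~\ref{theo::pmode_final}, but carried out in the joint space $\mathbb{R}\times\mathbb{R}^D\cong\mathbb{R}^{D+1}$. The first observation is that the conditional density factors as $f(r\mid X=x)=f(r,x)/f_X(x)$, so for a fixed context $x$ the maximizer over $r$ of the conditional density coincides with the maximizer over $r$ of the joint density $f(\cdot,x)$. Since the $k$-NN radius $r_k(R_i,x)$ in Algorithm~\ref{alg:pmode_conditional} is computed in the full $(D+1)$-dimensional sample, the associated $k$-NN density estimate $f_k(R_i,x)\asymp k/(n\, r_k(R_i,x)^{D+1})$ estimates the joint density $f(R_i,x)$; minimizing $r_k(R_i,x)$ over the grid $R$ is therefore the same as maximizing an estimate of $f(\cdot,x)$, i.e.\ estimating the conditional mode. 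Thus it suffices to show that the returned grid point has joint density close to the joint maximum along the slice $X=x$, and then invoke the local curvature of $f(\cdot,x)$ near its mode (guaranteed by Assumption~\ref{assumption:conditional}) to convert this density gap into a bound on the distance to the true conditional mode.

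The core technical step is a two-sided, uniform deviation bound for the joint $k$-NN density estimate, of the form
\[
\sup_{x\in\mathcal X}\ \sup_{R_i\in R}\ \bigl|f_k(R_i,x)-f(R_i,x)\bigr|\ \lesssim\ \frac{\log(1/\delta)\sqrt{\log n}}{\sqrt{k}}\ +\ \Bigl(\frac{k}{n}\Bigr)^{\alpha/(D+1)},
\]
holding with probability at least $1-\delta$. The first (variance) term comes from a uniform, VC/empirical-process concentration of the empirical mass of Euclidean balls in $\mathbb{R}^{D+1}$, exactly as in the $k$-NN density estimation bounds used in the proof of Theorem~\ref{theo::pmode_final} and Theorem~\ref{theo:stability}. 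The second (bias) term is the new ingredient and is where the $\alpha$-H\"older continuity of the conditional density enters: the estimate averages $f$ over a ball of radius $r_k\asymp(k/n)^{1/(D+1)}$, and H\"older continuity controls the oscillation of $f$ across that ball by $(k/n)^{\alpha/(D+1)}$. The grid discretization contributes a separate error: the closest grid point $R^\ast$ to the true conditional mode $r^\ast(x)$ satisfies $|R^\ast-r^\ast(x)|\le 1/m$, so $f(r^\ast(x),x)-f(R^\ast,x)$ is at most of order $(1/m)^{\min(\alpha,2)}$, which is made negligible by taking $m$ polynomially large in $n$ (this costs only extra $\log m$ factors in the union bound).

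Chaining these estimates gives $f(r^\ast(x),x)-f(\widehat R,x)\lesssim \log(1/\delta)\sqrt{\log n}/\sqrt{k}+(k/n)^{\alpha/(D+1)}$ for the returned grid point $\widehat R$, uniformly in $x$. The negative-definite Hessian / valley structure of the modal assumption then yields $|\widehat R-r^\ast(x)|^2\lesssim f(r^\ast(x),x)-f(\widehat R,x)$, so the mode-location error is the square root of the density gap. Balancing the variance and bias terms by choosing $k\asymp n^{2\alpha/(2\alpha+D+1)}$ equates both at order $n^{-\alpha/(2\alpha+D+1)}$, and taking the square root produces the claimed rate $n^{-\alpha/(4\alpha+2D+2)}$, using $4\alpha+2D+2=2(2\alpha+D+1)$; the factors $\sqrt{\log(1/\delta)}$ and $\log n$ are inherited from the deviation bound.

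The main obstacle I anticipate is making the deviation bound uniform over the entire context space $\mathcal X$ rather than at a single fixed $x$. For a fixed $x$ this is a one-dimensional (in $r$) mode problem embedded in $\mathbb{R}^{D+1}$ and follows routinely from the arguments behind Theorem~\ref{theo::pmode_final}; but the statement demands a supremum over all $x\in\mathcal X$ simultaneously. Handling this requires a covering-net argument over the compact $\mathcal X$ together with Lipschitz-type control of how $r_k(\cdot,x)$ and $f(\cdot,x)$ vary with $x$, so that a net of polynomial cardinality suffices without inflating the rate beyond logarithmic factors. Ensuring that the curvature constant and the H\"older constant can be taken uniform over $\mathcal X$ (part of Assumption~\ref{assumption:conditional}) is precisely what makes this uniformization go through.
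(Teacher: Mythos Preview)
Your proposal is correct and matches the paper's argument essentially line for line: the paper also reduces to the joint $k$-NN density estimate in $\mathbb{R}^{D+1}$, bounds the deviation by a variance term $C_{\delta,n}/\sqrt{k}$ plus a H\"older bias term $(k/n)^{\alpha/(D+1)}$ (its Corollary on H\"older rates), uses the quadratic curvature from Assumption~\ref{assumption:conditional} to convert the density gap into a location error, and balances with $k\asymp n^{2\alpha/(2\alpha+D+1)}$. The one place you over-anticipate difficulty is the uniformity over $\mathcal{X}$: the $k$-NN deviation lemmas the paper invokes (Lemmas~3--4 of \citep{dasgupta2014optimal}) are already uniform over \emph{all} query points via VC concentration on balls, so no covering-net argument over $\mathcal{X}$ is needed---the supremum over $x$ comes for free once the curvature and H\"older constants are uniform in $x$, which is exactly what Assumption~\ref{assumption:conditional} provides.
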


\begin{algorithm}[H]
   \caption{Uniform Sampling Strategy for Contextual Bandits}
   \label{alg:uniformcontextual}
\begin{algorithmic}
	\STATE Input: Total time $n$ and confidence parameter $\delta$.
	\STATE
	\FOR{$t = 1$ to $n$}
	\STATE Pull arm (where ties are broken arbitrarily)
	\begin{align*}
	I_t := \argmin_{i=1,...,K} \left\{ T_i(t-1) \right\}.
	\end{align*} 
	\ENDFOR
	\STATE \quad $\widehat{\pi}(x) := \argmax_{i=1,...,K} \widehat{\text{mode}}(S_i(n), x)$.
\end{algorithmic}
\end{algorithm}

We now give the result for Algorithm~\ref{alg:uniformcontextual}. We show that the algorithm can learn the optimal policy {\it simultaneously} for arbitrary context $x \in \mathcal{X}$.
\begin{theorem} \label{theo:contextual}
Let $\Delta, \delta > 0$.
If
\begin{align*}
n \ge  \widetilde{\Omega}\left(\Delta^{-(4\alpha + 2D + 2)/\alpha} \right)
\end{align*}
then Algorithm~\ref{alg:uniformcontextual} with confidence parameter $\delta/K$, using the conditional mode estimation procedure of Algorithm~\ref{alg:pmode_conditional} with the settings of Thereom~\ref{constrained_pmode} satsifies
\begin{align*}
\mathbb{P}\Bigg(\widehat{\pi}(x) \text{ is $\Delta$-optimal} \hspace{0.2cm} \forall x \Bigg) > 1 - \delta.
\end{align*}
In particular, as $\Delta \rightarrow 0$, $\widehat{\pi}$ gives a $\Delta$-optimal policy {\it uniformly} over $\mathcal{X}$.
\end{theorem}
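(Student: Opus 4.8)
The plan is to reduce the policy-suboptimality guarantee to the uniform-in-$x$ mode-estimation bound of Theorem~\ref{constrained_pmode}, in close analogy with the proof of Theorem~\ref{theo:eoptimalpac}. Write $\theta_i(x) := \text{mode}(f_i(r \mid X = x))$ and $\widehat{\theta}_i(x) := \widehat{\text{mode}}(S_i(n), x)$, so that $\pi(x) = \argmax_i \theta_i(x)$ and $\widehat{\pi}(x) = \argmax_i \widehat{\theta}_i(x)$. The goal is to show that on a single high-probability event the estimation error is uniformly small across all arms \emph{and} all contexts, and then to convert this into pointwise $\Delta$-optimality of $\widehat{\pi}$ by a deterministic $\argmax$ comparison.

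First I would control the per-arm sample size. Because Algorithm~\ref{alg:uniformcontextual} always pulls the least-pulled arm, its schedule is deterministic and in particular independent of the observed rewards; hence the pairs $(r, X)$ collected for arm $i$ are i.i.d. draws from the joint density $f_i(r, X)$, and $T_i(n) \ge \lfloor n/K \rfloor$ for every $i$. This verifies the i.i.d. hypothesis needed to invoke Theorem~\ref{constrained_pmode} on each arm. Applying that theorem to arm $i$ with confidence parameter $\delta/K$ and taking a union bound over the $K$ arms gives, with probability at least $1 - \delta$, that for every arm $i$
\begin{align*}
\sup_{x \in \mathcal{X}} |\widehat{\theta}_i(x) - \theta_i(x)| \le \eta := C_0 \frac{\sqrt{\log(K/\delta)}\, \log n}{\lfloor n/K \rfloor^{\alpha/(4\alpha + 2D + 2)}}.
\end{align*}

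On this event, fix any $x$. Using $\widehat{\theta}_{\widehat{\pi}(x)}(x) \ge \widehat{\theta}_{\pi(x)}(x)$ (the defining property of $\widehat{\pi}$) together with the two-sided bound $|\widehat{\theta}_i - \theta_i| \le \eta$, I obtain
\begin{align*}
\theta_{\widehat{\pi}(x)}(x) \ge \widehat{\theta}_{\widehat{\pi}(x)}(x) - \eta \ge \widehat{\theta}_{\pi(x)}(x) - \eta \ge \theta_{\pi(x)}(x) - 2\eta,
\end{align*}
so $\widehat{\pi}(x)$ is $2\eta$-optimal, and since $x$ was arbitrary this holds uniformly over $\mathcal{X}$. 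It then remains to force $2\eta \le \Delta$: inverting the rate (and absorbing the factor $K$ together with the $\log n$ and $\sqrt{\log(K/\delta)}$ terms into the $\widetilde{\Omega}$) requires $\lfloor n/K \rfloor^{\alpha/(4\alpha+2D+2)} \gtrsim \Delta^{-1}$, i.e. $n \ge \widetilde{\Omega}(\Delta^{-(4\alpha + 2D + 2)/\alpha})$, which is exactly the stated sample complexity (one also takes $n$ large enough that $n/K$ exceeds the threshold $N_{f_i}$ of Theorem~\ref{constrained_pmode}).

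The conceptual heart of the argument is already packaged inside Theorem~\ref{constrained_pmode}: the fact that its error bound is a supremum over $x$ is precisely what lets a \emph{single} non-adaptive pass of uniform sampling certify $\widehat{\pi}$ simultaneously at every context, rather than only at a pre-specified $x$. Consequently the main thing to be careful about is not the bandit reduction — which is the deterministic $\argmax$ chain above — but justifying that the uniform-in-$x$ guarantee transfers intact through the union bound and the deterministic comparison, and checking the i.i.d. claim for the per-arm samples, which hinges on the non-adaptivity of the sampling rule.
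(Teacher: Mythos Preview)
Your proposal is correct and follows essentially the same route as the paper: the paper derives Theorem~\ref{theo:contextual} from an auxiliary statement (Theorem~\ref{theo:contextual_aux}) whose proof, like yours, sets the per-arm sample size to $n/K$, invokes the uniform-in-$x$ bound of Theorem~\ref{constrained_pmode} with confidence $\delta/K$, and then inverts the error to obtain the sample complexity. Your deterministic $\argmax$ chain yielding $2\eta$-optimality is the direct analogue of the paper's choice of threshold $\Delta/3$; the only difference is that the paper phrases the auxiliary result as exact recovery on the region where the gap exceeds $\Delta$, whereas you go straight to $\Delta$-optimality everywhere, but these are equivalent up to constants.
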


This result shows that a uniform sampling strategy can give us guarantees everywhere in context space simultaneously.

\section{EXPERIMENTS}

\textbf{Robustness}. In Figure~\ref{fig:simulations1}, we test the robustness of Algorithm \ref{alg:ucb} to perturbations of the arms. We consider the case when the score function equals the identity. The red (Arm 1) density's mode has the largest reward value. With probability $0.2$ we receive a sample from a noise sequence denoted by the marked points on the $x$-axis. The colors of these points correspond to which arm we perturb.

\begin{figure}[H]
\begin{center}
\includegraphics[width=5cm, height=3cm]{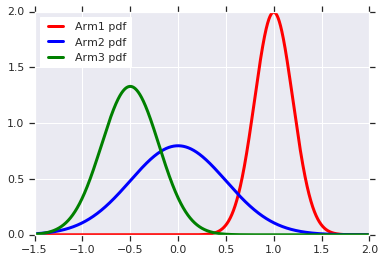}
\includegraphics[width=5cm, height=3cm]{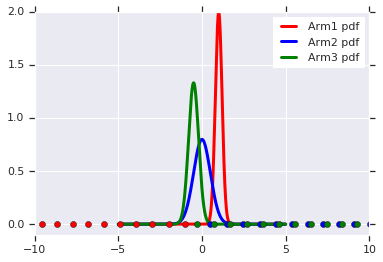}
\includegraphics[width=5cm, height=3cm]{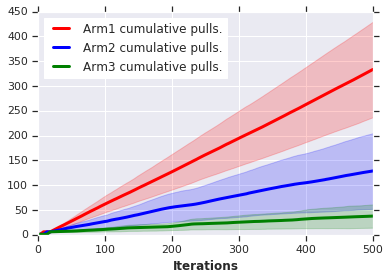}
\caption{{\bf Left}: Three arms.  {\bf Center}: Perturbations.  {\bf Right}: Algorithm \ref{alg:ucb} cumulative arm pulls.\label{fig:simulations1}}
\end{center}
\end{figure}

Based on the reward distribution given in Figure 2, Algorithm \ref{alg:ucb} pulls Arm $1$ (the arm with the highest modal score) more often despite the perturbations experienced by this arm being negative and the perturbations of the remaining arms being positive values. We average over 25 random seeds and mark the standard deviation. 

\textbf{Fine-grained Sensitivity.} In Figure~\ref{fig:simulations2}, we show Algorithm 3 distinguishes between arms with very close modes. We again consider the identity score function. The red (Arm 1) density's mode has the largest modal reward. We average over 25 random seeds and mark the standard deviation.
\begin{figure}[H]
\begin{center}
\includegraphics[width=5cm, height=3cm]{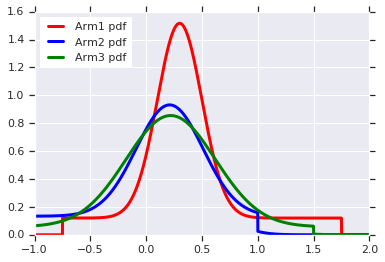}
\includegraphics[width=5cm, height=3cm]{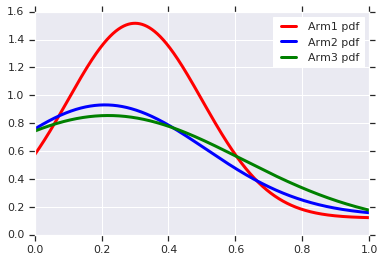}
\includegraphics[width=5cm, height=3cm]{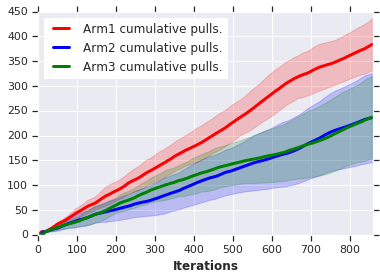}
\caption{{\bf Left}: Three arms. {\bf Center}: Zoomed in view. {\bf Right}: Algorithm \ref{alg:ucb}'s cumulative arm pulls. \label{fig:simulations2}}
\end{center}

\end{figure}

\textbf{Finding arms with furthest mode}. In Figure~\ref{fig:simulation3}, we show Algorithm \ref{alg:ucb} works with score functions other than the identity. In this case we demonstrate it is able to find the arm whose highest density mode is furthest away from the origin-- that is the score function equals the distance of the arm's most likely mode to the origin. In this setup Arm 3 is optimal. We also plot the Regret and Normalized Regret (we divide the regret by the iteration index) using the distance from the origin to the arm's mode as reward. 
\begin{figure}[H]
\begin{center}
\includegraphics[width=0.23\textwidth]{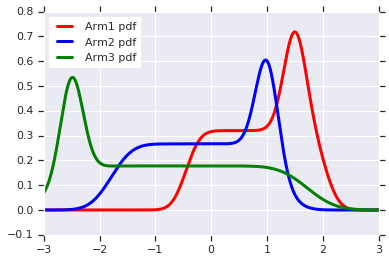}
\includegraphics[width=0.23\textwidth]{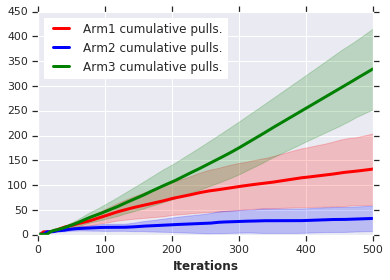}
\includegraphics[width=0.23\textwidth]{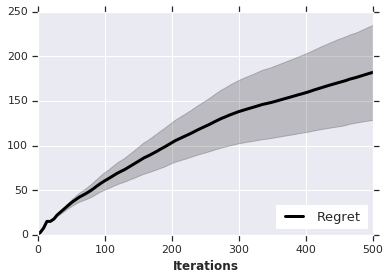}
\includegraphics[width=0.23\textwidth]{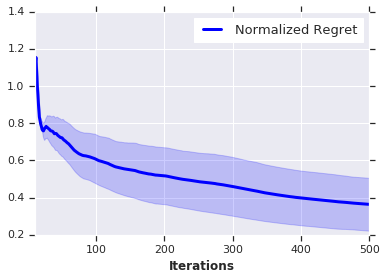}
\end{center}
\caption{\label{fig:simulation3} {\bf Upper Left}: Three arm densities. 
{\bf Upper Right}: Arm pulls. 
{\bf Lower left}:  Regret.
{\bf Lower right}: Normalized regret. } 
\end{figure}
 The plot shows Algorithm \ref{alg:ucb} learns to choose the arm showing outlier behavior as time goes on and does so in a way minimizing the regret captured by differences in outlier score. We average over 25 random seeds and mark the standard deviation. Multimodal experiments and their supporting theory are shown in the Appendix.

\section{CONCLUSION}

In this paper, we've provided two contributions which are of independent interest: (i) robustness and privacy guarantees for mode estimation and (ii) a new application of mode estimation the bandit problem, which we call {\it modal bandits}. To our knowledge, we give the first robustness and privacy guarantees for mode estimation, a popular practical method with a long history of theoretical analysis. We then give an extensive analysis of the modal bandits problems, including best-arm identification, regret bounds, contextual modal bandits, and infinite armed bandits. We include simulations showing that modal bandits indeed can provide robustness to adversarial corruption, thus suggesting that modal bandits can be an attractive choice in settings where robustness is important.

\bibliographystyle{plainnat}
\bibliography{arxivbib}

\appendix

\onecolumn

\section{Definitions}
We start by considering a generalization of the modal objective which we call $p$-mode.

\begin{definition} [$p$-mode]
Suppose that $f$ is a density over $[0, 1]$. $x$ is a mode of $f$ if $f(x') < f(x)$ for all $x' \in B(x, \delta) \backslash \{x \}$ for some $\delta > 0$ where $B(x, \delta) := \{x' \in \mathcal{X} : |x - x'| \le \delta\}$.
Let $\mathcal{M}(f) = \{j_1, j_2,...j_{\ell} \}$ be the modes of $f$ 
with $f(j_1) \ge \cdots \ge f(j_p) > f(j_{p+1}) \ge \cdots \ge f(j_\ell)$.
Then the $p$-mode of $f$ is defined as 
\begin{align*}
\text{p-mode}(f) := \min \{j_1,...,j_{\min\{p, \ell \}} \}.
\end{align*}
\end{definition}

An additional assumption we need for the $p$-mode analysis:
\begin{assumption}\label{modeassumption2}
$f$ is $\alpha$-H\"older continuous. i.e. $|f(x) - f(x')| \le C_\alpha \|x - x'\|^\alpha$ for some $C_\alpha > 0$ and $0 < \alpha \le 1$.
\end{assumption}

\begin{figure}[h]
\begin{center}

\includegraphics[width=6cm, height=4cm]{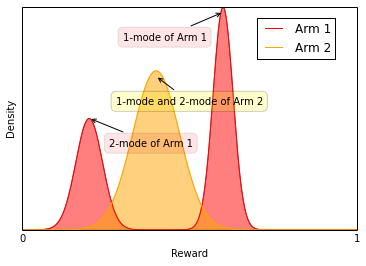}
\end{center}

\label{fig:1_1}
\caption{ Examples of $p$-modes in two reward distributions plotted together. }
\end{figure}

\textbf{ When $p = 1$, the $p$-mode is simply the mode.}

\textbf{All of the subsequent proofs are stated using the terminology of $p-$modes. As stated before setting $p=1$ in each of the subsequent theorem statement yields the proofs of the theorems referenced in the main paper. }

\section{$p$-mode algorithm}

\begin{algorithm}[tbh]
   \caption{Estimating $p$-modes}
   \label{alg:pmode_full}
\begin{algorithmic}
	\STATE Input: $k$ and sample points $X = \{X_1,...,X_n\}$ and confidence parameter $\delta$.
   \STATE Define $r_k(x) := \inf \{ r : |B(x, r) \cap X| \ge k \}$, $f_k(x) := k/(2 \cdot n\cdot r_k(x))$.
   \STATE Let $G(\gamma)$ be the graph with points $X_i \in X$ satisfying $f_k(X_i) \ge \gamma$ and edge between $X_i, X_j$ if $||X_i - X_j|| \le \min \{ r_k(X_i), r_k(X_j) \}$.
   \STATE Initialize $\mathcal{A}:= \emptyset$. Define $\beta_k =  100\cdot \log(1/\delta) \cdot \sqrt{\log n} / \sqrt{k}$. 
   \STATE Sort the $X_i$'s in descending order of $f_k$ values. 
   \FOR{$i=1$ {\bfseries to} $n$}
   \STATE Define $\lambda := f_k(X_i)$ and let $A$ be the connected component of $G(\lambda - \beta_k \lambda)$ that contains $X_i$.
   \STATE {\bf if} $A$ is disjoint from all sets in $\mathcal{A}$ {\bf then}  Add $A$ to $\mathcal{A}$. {\bf end if}
   \STATE {\bf if} $|\mathcal{A}| = p$ {\bf then}  break. {\bf end if}
   \ENDFOR
   \STATE \textbf{return} $\hat{x} := \min \{ \text{argmax}_{x \in A} f_k(x) : A \in \mathcal{A} \}$. 
\end{algorithmic}
\end{algorithm}

\section{Proofs of $p$-mode estimation results}

\begin{lemma} [Lemma 5 of \citep{dasgupta2014optimal}]
Let $f$ satisfy Assumption~\ref{modeassumption1}. There exists $r_M > 0$ sufficiently small and $\check{C}, \hat{C}, \lambda_0 > 0$ such that
the following holds for all $x_0 \in \mathcal{M}$ simultaneously.
\begin{align*}
\check{C} \cdot |x_0 - x|^2 \le f(x_0) - f(x) \le \hat{C} \cdot |x_0 - x|^2
\end{align*}
for all $x \in A_{x_0}$ where $A_{x_0}$ is a connected component of $\{ x : f(x) \ge \lambda_{x_0} \}$ for some $\lambda_{x_0} > \lambda_0$ and 
$A_{x_0}$ contains $B(x_0, r_M)$.
\end{lemma}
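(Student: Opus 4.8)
The plan is to obtain both inequalities from a second-order Taylor expansion of $f$ at each mode, and then to pin down the level $\lambda_{x_0}$ and the radius $r_M$ by a barrier argument so that the super-level-set component $A_{x_0}$ is trapped between the ball $B(x_0, r_M)$ and the ball on which Taylor's theorem is valid. Assumption~\ref{modeassumption1} supplies exactly what is needed locally at a mode $x_0 \in \mathcal{M}$: $f$ is twice differentiable on a neighborhood of $x_0$, the point $x_0$ is a strict local maximum so $\nabla f(x_0) = 0$, and $\nabla^2 f(x_0)$ is negative-definite.

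First I would fix $x_0 \in \mathcal{M}$ and let $0 < \mu_{\min} \le \mu_{\max}$ denote the extreme eigenvalues of $-\nabla^2 f(x_0)$, which are positive by negative-definiteness. Taylor's theorem, using $\nabla f(x_0) = 0$, gives
\[
f(x_0) - f(x) = -\tfrac12 (x-x_0)^\top \nabla^2 f(x_0)(x-x_0) + R(x),
\]
with $R(x) = o(|x-x_0|^2)$, so the quadratic term lies between $\tfrac{\mu_{\min}}{2}|x-x_0|^2$ and $\tfrac{\mu_{\max}}{2}|x-x_0|^2$. Choosing $\rho_{x_0} > 0$ small enough that $|R(x)| \le \tfrac{\mu_{\min}}{4}|x-x_0|^2$ on $B(x_0,\rho_{x_0})$ yields
\[
\tfrac{\mu_{\min}}{4}\,|x-x_0|^2 \le f(x_0) - f(x) \le \Big(\tfrac{\mu_{\max}}{2} + \tfrac{\mu_{\min}}{4}\Big)|x-x_0|^2
\]
for all $x \in B(x_0,\rho_{x_0})$; these are the desired bounds with $\check C_{x_0} = \mu_{\min}/4$ and $\hat C_{x_0} = \mu_{\max}/2 + \mu_{\min}/4$.

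Next I would locate $A_{x_0}$. The left inequality gives $f(x) \le f(x_0) - \check C_{x_0}\rho_{x_0}^2$ on the sphere $\partial B(x_0,\rho_{x_0})$, so setting $\lambda_{x_0} := f(x_0) - \tfrac12 \check C_{x_0}\rho_{x_0}^2$ forces $f < \lambda_{x_0}$ throughout that sphere. The sphere is then a topological barrier: any continuous path from $x_0$ to a point outside the ball must cross it, and at the crossing point $f < \lambda_{x_0}$, so no connected subset of $\{f \ge \lambda_{x_0}\}$ containing $x_0$ can leave the ball. Hence $A_{x_0} \subseteq B(x_0,\rho_{x_0})$ and the quadratic bounds above hold on all of $A_{x_0}$. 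Conversely, the right inequality gives $f(x) \ge f(x_0) - \hat C_{x_0} r^2 \ge \lambda_{x_0}$ on $B(x_0, r)$ once $r \le r_{M,x_0} := (\check C_{x_0}\rho_{x_0}^2/(2\hat C_{x_0}))^{1/2}$, so the connected set $B(x_0, r_{M,x_0})$ lies inside the component $A_{x_0}$.

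Finally, since Assumption~\ref{modeassumption1} guarantees that $\mathcal{M}$ is finite, I would pass to uniform constants $\check C = \min_{x_0}\check C_{x_0}$, $\hat C = \max_{x_0}\hat C_{x_0}$, $r_M = \min_{x_0} r_{M,x_0}$, and $\lambda_0$ strictly below $\min_{x_0}\lambda_{x_0}$, noting that $\lambda_{x_0} > 0$ for $\rho_{x_0}$ small because $f(x_0) > 0$ at a density maximum; the per-mode inequalities survive shrinking $\check C$, enlarging $\hat C$, and shrinking $r_M$. The step that most needs care is the confinement $A_{x_0} \subseteq B(x_0,\rho_{x_0})$: one must choose $\lambda_{x_0}$ high enough relative to $\rho_{x_0}$ that the quadratic decay already pushes $f$ below $\lambda_{x_0}$ on the entire boundary sphere, which is precisely what stops the level-set component from leaking toward another mode or out of the Taylor-valid region. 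Everything else is routine Taylor bookkeeping, and the pairwise disjointness of the mode neighborhoods in Assumption~\ref{modeassumption1} is only needed to ensure the distinct $A_{x_0}$ do not overlap when the uniform constants are taken.
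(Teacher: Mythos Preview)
The paper does not prove this lemma at all; it is quoted verbatim as Lemma~5 of \citet{dasgupta2014optimal} and used as a black box. Your argument is therefore not competing with anything in the paper, and it is the natural self-contained proof: a second-order Taylor expansion at each mode (using $\nabla f(x_0)=0$ and negative-definiteness of the Hessian) gives the two-sided quadratic bound on a small ball, the barrier argument with $\lambda_{x_0}$ chosen strictly above $\sup_{\partial B(x_0,\rho_{x_0})} f$ traps the connected component $A_{x_0}$ inside that ball, and the upper quadratic bound then gives an inner radius $r_{M,x_0}$ on which $f\ge\lambda_{x_0}$. Passing to uniform constants over the finite set $\mathcal{M}$ is unproblematic. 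This is essentially the argument one would expect in the cited source, and I see no gap.
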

The next result guarantees that the modes are separated by sufficiently wide and deep valleys.
\begin{lemma}[Follows from Proposition 1 of \citep{jiang2017modal}]
There exists $r_M > 0$ sufficiently small such that
for each $x_0 \in \mathcal{M}$, there exists a set $S$ such that the following holds for each $x_0' \in \mathcal{M}$ with $x_0' \neq x_0$.
Each path from $x_0$ to $x_0'$ crosses $S$ and
\begin{align*}
\inf_{x \in B(x_0, r_M) \cup B(x_0', r_M)} f(x) > \sup_{x \in S + B(0, r_M)} f(x).
\end{align*}
\end{lemma}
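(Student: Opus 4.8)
The plan is to build, for a given pair of distinct modes $x_0, x_0' \in \mathcal{M}$, a separating surface $S$ sitting in the valley between them, realized as the boundary of a connected component of a superlevel set of $f$ at a height strictly between the ``mountain-pass'' level of the two modes and their densities. A preliminary remark worth flagging is that $S$ should be taken to depend on the pair $(x_0,x_0')$: if mode heights differ widely, a single wall around $x_0$ separating it from a tall neighbor must cross a high saddle, which would violate the density inequality for a much shorter mode $x_0'$. So I would read the statement as asserting, for each other mode $x_0'$, the existence of such a valley, and at the end take $r_M$ to be the smallest admissible radius over the finitely many pairs.

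First I would define the mountain-pass level
\[
c := \sup_{\gamma} \inf_{t} f(\gamma(t)),
\]
the supremum running over continuous paths $\gamma$ from $x_0$ to $x_0'$, and establish that $c$ is strictly below $\mu := \min(f(x_0), f(x_0'))$ with a quantitative gap. Assuming without loss of generality $f(x_0)=\mu$, Assumption~\ref{modeassumption1} gives a negative-definite Hessian at $x_0$, so $x_0$ is a strict local maximum and there is $\rho>0$ with $B(x_0,\rho)$ free of other modes, $x_0'\notin B(x_0,\rho)$, and $f<f(x_0)$ on $B(x_0,\rho)\setminus\{x_0\}$. Setting $m := \max_{\partial B(x_0,\rho)} f$, compactness yields $m<\mu$; since every path from $x_0$ to the exterior point $x_0'$ meets $\partial B(x_0,\rho)$, we get $\inf_t f(\gamma(t))\le m$ and hence $c\le m<\mu$.

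Next I would fix a level $\lambda^\star := (m+\mu)/2\in(c,\mu)$ and let $C_0$ be the connected component of $\{f\ge\lambda^\star\}$ containing $x_0$, setting $S:=\partial C_0$. Since $\lambda^\star>c$, no path from $x_0$ to $x_0'$ can remain in $\{f\ge\lambda^\star\}$ (such a path would force $c\ge\lambda^\star$), so $x_0'\notin\overline{C_0}$ and every continuous path from $x_0$ to $x_0'$ crosses $S$, which gives the topological conclusion; by continuity $f\equiv\lambda^\star$ on $S$. For the density inequality I would absorb the $r_M$-thickening into a modulus of continuity: the preceding quadratic-decay lemma (adapted from Lemma~5 of \citep{dasgupta2014optimal}) gives $\inf_{B(x_0,r_M)\cup B(x_0',r_M)} f \ge \mu-\hat C r_M^2$, while uniform continuity of $f$ on its compact support (modulus $\omega$) gives $\sup_{S+B(0,r_M)} f\le \lambda^\star+\omega(r_M)$, provided $r_M$ is small enough that $S+B(0,r_M)$ avoids $x_0,x_0'$ (possible because $\lambda^\star<f(x_0),f(x_0')$). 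The claim then reduces to $\hat C r_M^2+\omega(r_M) < \mu-\lambda^\star = (\mu-m)/2$, which holds for all sufficiently small $r_M$.

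The main obstacle will be the quantitative interplay in this last step: one must simultaneously keep $S$ low enough (a level set near the saddle) that the thickened wall stays strictly below the mode densities, yet high enough that it genuinely separates the two basins, and ensure the resulting gap $(\mu-m)/2$ survives both the $\hat C r_M^2$ loss near the modes and the $\omega(r_M)$ loss from thickening $S$. The secondary technical point is the standard topology of superlevel sets of $f$ (that distinct components are separated so a continuous path must meet $\partial C_0$), which I would handle using continuity together with the regularity afforded by Assumption~\ref{modeassumption1}; this is precisely the structure extracted in Proposition~1 of \citep{jiang2017modal}.
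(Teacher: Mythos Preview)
The paper does not give its own proof of this lemma; it simply records it as a consequence of Proposition~1 of \citep{jiang2017modal} and moves on. There is therefore nothing in-paper to compare your argument against line by line. What you have written is essentially a self-contained reconstruction of the valley-separation statement underlying that cited proposition: locate the mountain-pass level $c$ between the two modes, take the separating wall $S$ as the boundary of the superlevel component of $x_0$ at a height $\lambda^\star\in(c,\mu)$, and then shrink $r_M$ so that the quadratic loss $\hat C r_M^2$ near the modes together with the continuity loss $\omega(r_M)$ on the thickened wall is absorbed by the gap $\mu-\lambda^\star$. That is exactly the right picture, and the finiteness of $\mathcal{M}$ lets you pass to a single $r_M$ at the end.

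Two points worth tightening. First, your remark on quantifier order is correct and important: as literally written the lemma fixes $S$ after $x_0$ but before $x_0'$, and your three-mode counterexample reasoning shows a single level-set boundary cannot serve all $x_0'$ simultaneously when saddle heights exceed some other mode's density. Reading the claim pairwise, as you do, is the right fix; alternatively one could let $S$ vary with $x_0'$ and take the minimum admissible radius over the finitely many pairs, which is what you propose. Second, the step $\sup_{S+B(0,r_M)} f\le \lambda^\star+\omega(r_M)$ appeals to uniform continuity of $f$ on its compact support, but Assumption~\ref{modeassumption1} by itself gives neither compact support nor global continuity. You only need this locally: since $S\subseteq\{f=\lambda^\star\}$ with $\lambda^\star<f(x_0),f(x_0')$, a small enough $r_M$ keeps $S+B(0,r_M)$ inside a compact region away from the modes on which continuity of $f$ (from the density being well-defined and the level-set structure making sense) suffices. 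With those two caveats addressed, your outline is sound.
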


Theorem~\ref{theo::pmode_final} is a corollary of the following.
\begin{theorem} \label{theo:pmode}
Let Assumptions~\ref{modeassumption1},~\ref{modeassumption2} hold.
Suppose that $x_1,...,x_n$ are i.i.d. observations drawn from a density $f$ on $[0, 1]$.
Let $x_p$ be the unique $p$-mode of $f$.
Then there exists $C_1, C_2 > 0$ depending on $\lambda_0$, $r_M$, $||f||_\infty$, $\hat{C}$, $\check{C}$, $C_\alpha$, $\alpha$, and $f(m_p) - f(m_{p+1})$ such that if $k$ satisfies
\begin{align*}
C_1\cdot   \log n \cdot \log(1/\delta) \le k \le C_2  \cdot (\log(1/\delta))^{2/(4+D)} \cdot (\log n)^{1/(4+D)} \cdot n^{4/(4+D)},
\end{align*}
and $M > \sqrt{||f||_\infty / \check{C}}$,
then $\hat{x}$ in Algorithm~\ref{alg:pmode} satisfies the following.
\begin{align*}
\mathbb{P}\left(|\hat{x} - x_p| \ge \frac{M \sqrt{\log(1/\delta)} \cdot (\log n)^{1/4}}{k^{1/4}} \right) \le \delta.
\end{align*}
\end{theorem}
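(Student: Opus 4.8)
The plan is to follow the standard $k$-nearest-neighbor density-estimation route: reduce the $p$-mode recovery claim to a two-sided uniform deviation bound for the estimator $f_k(x) = k/(2n\, r_k(x))$ (with the obvious $D$-dimensional volume factor in general), and then convert a deficit in density into a bound on distance using the quadratic flatness of $f$ at each mode. Writing $x_0 := x_p$ for the target and $\widetilde r$ for the claimed error radius, it suffices to show that, on a high-probability event,
\[
\inf_{x \in B(x_0, r_0)} f_k(x) \;>\; \sup_{x \notin B(x_0,\widetilde r)} f_k(x),
\]
where $r_0$ is the distance from $x_0$ to its nearest sample. Any point outside $B(x_0,\widetilde r)$ then has a strictly smaller estimate than some point inside, so the global maximizer of $f_k$ (for $p=1$ this is exactly Algorithm~\ref{alg:pmode}), and more generally the representative returned by the clustering in Algorithm~\ref{alg:pmode_full}, must lie within $\widetilde r$ of $x_0$. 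This is precisely the skeleton already used in the proof of Theorem~\ref{theo:stability}, specialized to zero inserted points.

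The first substantive step is the probabilistic heart: a uniform concentration statement for the empirical mass of balls. I would invoke relative (VC-type) deviation bounds for the class of balls in $\mathbb{R}^D$ to show that, with probability at least $1-\delta$, every ball simultaneously carries empirical mass within a multiplicative factor of order $\log(1/\delta)\sqrt{\log n/k}$ of its true mass. Translating this into $f_k$ yields, uniformly over a neighborhood of the modes,
\[
\big|f_k(x) - f(x)\big| \;\le\; C_3\,\frac{\log(1/\delta)\sqrt{\log n}}{\sqrt{k}}
\]
once the bias from averaging $f$ over $B(x, r_k(x))$ is absorbed. Here the lower bound $k \ge C_1 \log n\,\log(1/\delta)$ is what makes the concentration nontrivial, while the upper bound on $k$ keeps $r_k(x) = O((k/n)^{1/D})$ small enough that, together with the local (quadratic) variation of $f$, the bias stays below the displayed variance scale; indeed the exponent $4/(4+D)$ in the upper bound is exactly the threshold at which the bias over the $k$-NN ball equals the fluctuation order.

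Next I would feed this into the geometry of $f$. The quadratic-decay lemma (Lemma~5 of \citep{dasgupta2014optimal}, under Assumption~\ref{modeassumption1}) gives $f(x_0) - f(x) \ge \check C\,|x_0 - x|^2$ on the basin of $x_0$, so points outside $B(x_0,\widetilde r)$ have true density at most $f(x_0) - \check C\widetilde r^{\,2}$, whereas points in $B(x_0,r_0)$ have true density essentially $f(x_0)$. Inserting the deviation bound on both sides, the separation inequality reduces to $\check C\,\widetilde r^{\,2} > 2C_3\,\log(1/\delta)\sqrt{\log n}/\sqrt{k}$; choosing $\widetilde r^{\,2} = M^2\,\log(1/\delta)\sqrt{\log n}/\sqrt{k}$ meets this exactly when $\check C M^2 > 2C_3$, and since the variance constant $C_3$ scales with $\|f\|_\infty$ this is what produces the hypothesis $M > \sqrt{\|f\|_\infty/\check C}$. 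For general $p$, before localizing I would use the valley-separation lemma (following Proposition~1 of \citep{jiang2017modal}): the modes are separated by valleys that are both deep and wide relative to the fluctuation scale $\beta_k = 100\,\log(1/\delta)\sqrt{\log n}/\sqrt{k}$, so the thresholded graph $G(\lambda - \beta_k\lambda)$ splits the $p$ highest modes into distinct connected components. This guarantees that the set $\mathcal{A}$ the algorithm attributes to the $p$-mode is the true basin of $x_p$, after which the localization argument above applies verbatim inside that basin.

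The main obstacle I anticipate is managing the two-sided constraint on $k$ together with the uniform control. The concentration must hold simultaneously for all balls (hence a covering/relative-VC argument rather than a single Bernstein bound), and the bias must be pushed below the variance scale; these two requirements are exactly what pin down $C_1$ and $C_2$. For $p > 1$ there is a second delicate balance: $\beta_k$ must exceed the density-estimation fluctuations yet remain smaller than the valley depth separating consecutive modes, so that the clustering neither merges two basins nor fragments one. Tracking the constants so that both balances hold for the same admissible range of $k$, uniformly in the confidence level $\delta$, is where the bulk of the careful work lies.
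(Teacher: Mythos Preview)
Your approach is essentially the paper's: reduce to $k$-NN density-estimator concentration, convert a density deficit to a distance bound via the quadratic flatness of $f$ at each mode, and for general $p$ use the separation structure to isolate the correct basin. The paper's proof is terser because it cites Theorems~3 and~4 of \citep{jiang2017modal} for the per-mode localization that you propose to re-derive, but the underlying mechanism is the same.

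There is one step for $p>1$ that you underweight. The valley-separation lemma ensures the clustering neither merges nor fragments basins, and you address that. But Algorithm~\ref{alg:pmode_full} processes sample points in decreasing order of $f_k$ and halts once $|\mathcal{A}|=p$; for $\mathcal{A}$ to contain the basin of $x_p$ you also need the \emph{empirical} ordering of the peak heights to respect the \emph{true} one at position $p$, namely
\[
\min_{j\le p} f_k(\widehat{m}_j) \;>\; \max_{j>p} f_k(\widehat{m}_j).
\]
This is governed not by valley depth but by the gap $\delta_p := f(m_p)-f(m_{p+1})$ between consecutive \emph{peak heights}, and it is exactly where the theorem's dependence on $f(m_p)-f(m_{p+1})$ enters. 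The paper's proof spends most of its effort on this ordering step: it bounds $\inf_{B(m_j,\tilde r)} f_k$ from below and $\sup_{B(m_j,\tilde r)} f_k$ from above via Lemmas~3 and~4 of \citep{dasgupta2014optimal}, and then checks that for $k$ in the stated range the fluctuation $(1\pm\beta_k/\sqrt{k})$ together with the localization error $\hat C\tilde r^2$ is dominated by $\delta_p$. Your sketch should include this peak-ordering argument explicitly; the valley-separation argument alone does not supply it.
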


\begin{proof} [Proof of Theorem~\ref{theo:pmode}]
It follows from Theorem 3 and 4 of \citep{jiang2017modal} that for $p = |\mathcal{M}|$ and $C_1, C_2$ appropriately chosen, Algorithm 1 computes $\widehat{\mathcal{M}} := \{ \text{argmax}_{x \in A} f_k(x) : A \in \mathcal{A} \}$ such that
there is a one-to-one mapping between $\mathcal{M}$ and $\widehat{\mathcal{M}}$ which satisfies the following.
If $x_0 \in \mathcal{M}$ corresponds to $\widehat{x_0} \in \widehat{\mathcal{M}}$ then,
\begin{align*}
\mathbb{P}\left(|\widehat{x_0} - x_0| \ge \frac{M\cdot \sqrt{\log(1/\delta)} \cdot (\log n)^{1/4}}{k^{1/4}} \right) \le \frac{1}{n}.
\end{align*}
What remains is showing that for $p < |\mathcal{M}|$, Algorithm~\ref{alg:pmode} stops exactly after choosing the $p$ modes with highest densities.
If $m_1,...,m_\ell$ are the modes where $f(m_1) \ge f(m_2) \ge \cdots \ge f(m_\ell)$, it thus suffices to show that
\begin{align*}
\min_{j=1,...,p} f_k(\widehat{m}_j) > \max_{j=p+1,...,|\mathcal{M}|} f_k(\widehat{m}_j).
\end{align*}
where $\widehat{m}_j$ is the estimate corresponding to $m_j$.
We have
\begin{align*}
\min_{j=1,...,p} f(m_j) - \max_{j=p+1,...,|\mathcal{M}|} f(m_j) = f(m_p) - f(m_{p+1}):= \delta_p > 0.
\end{align*}
By Lemma 2 of \citep{jiang2017modal}, for $C_1$ and $C_2$ chosen appropriately, we have $\widehat{m}_j \in B(m_j, r_M)$; combining this with Theorem 1 of \citep{jiang2017modal}, we have
$\widehat{m}_j = \text{argmax}_{x \in X \cap B(m_j, \tilde{r})} f_k(x)$ where $\tilde{r} = \frac{M \sqrt{\log(1/\delta)} (\log n)^{3/4}}{k^{1/4}}$.
It thus suffices to show that
\begin{align*}
\min_{j=1,...,p} \inf_{x \in B(m_j, \tilde{r})} f_k(x)  > \max_{j=p+1,...,|\mathcal{M}|} \sup_{x \in B(m_j, \tilde{r})} f_k(x).
\end{align*}
We have
\begin{align*}
\inf_{x \in B(m_j, \tilde{r})} f(x) \ge  f(m_j) - \hat{C} \tilde{r}^2.
\end{align*}
By Lemma 4 of \citep{dasgupta2014optimal}, for $C_1$ and $C_2$ chosen appropriately, we have
\begin{align*}
\inf_{x \in B(m_j, \tilde{r})} f_k(x) \ge  \left(1 - \frac{\beta_k}{\sqrt{k}}\right)\cdot (f(m_j) - \hat{C} \tilde{r}^2).
\end{align*}
By Lemma 3 of \citep{dasgupta2014optimal}, for $C_1$ and $C_2$ chosen appropriately, we have
\begin{align*}
\sup_{x \in B(m_j, \tilde{r})} f_k(x) \le \left(1 + \frac{\beta_k}{\sqrt{k}}\right)\cdot f(m_j).
\end{align*}
It is clear that from combining these two, it suffices to choose $k$ sufficiently large (depending on $\delta_p$, $\hat{C}$ and $\tilde{r}$) such that
\begin{align*}
\left(1 - \frac{\beta_k}{\sqrt{k}}\right) \cdot(f(m_p) - \hat{C} \tilde{r}^2) \ge \left(1 + \frac{\beta_k}{\sqrt{k}}\right)\cdot f(m_{p+1}).
\end{align*}
The result follows immediately.
\end{proof}

\section{Proofs of Top-Arm Identification Results}

In this section we prove guarantees for Algorithm \ref{alg:ucb_pmode}, a more general version of Algorithm \ref{alg:ucb}.

\begin{algorithm}[tbh]
   \caption{UCB Strategy}
   \label{alg:ucb_pmode}
\begin{algorithmic}
	\STATE Input: Total time $n$ and confidence parameter $\delta$.
	\STATE Define $S_i(t)$ be the rewards observed from arm $i$ up to and include time $t$.
	\STATE Let $T_i(t)$ be the number of times arm $i$ was pulled up to and including time $t$. i.e. $|S_i(t)| = T_i(t)$.
	\STATE
	\STATE For $t = 1,...,n$, pull arm $I_t$, where $I_t$ is the following.
	{ \small
	\begin{align*}
	 \argmax_{i=1,...,K} \left\{ \widehat{\text{mode}}(S_i(t-1)) + \frac{\log(1/\delta) \cdot \log(T_i(t-1))}{(T_i(t-1))^{1/(4+D)}} \right\}.
	\end{align*}
	}
\end{algorithmic}
\end{algorithm}

\begin{lemma} \label{ucbbound}
Define $N_0 := \max_{i \in [K]} N_{f_i}$.
In Algorithm~\ref{alg:ucb}, with confidence parameter $\delta/n$, the following holds with probability at least $1 - \delta$ simultaneously for $i = 2,...,K$.
\begin{align*}
T_i(n)  \le 2 ((\log n + \log(1/\delta)) \log n)^{4+D} \cdot \Delta_i^{-(4+D)}+ 2 N_0.
\end{align*}
\end{lemma}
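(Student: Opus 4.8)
The plan is to run the standard upper-confidence-bound pull-count decomposition, with the mode-estimation band of Theorem~\ref{theo::pmode_final} (in the $p$-mode form of Theorem~\ref{theo:pmode}) playing the role that a sub-Gaussian concentration band plays in the classical analysis. First I would define, for each arm $i$ and time $t$, the quantity $U_i(t) := (\log n + \log(1/\delta)) \log(T_i(t)) / (T_i(t))^{1/(4+D)}$, which is exactly what Algorithm~\ref{alg:ucb} adds to $\widehat{\text{mode}}(S_i(t))$ when run with confidence parameter $\delta/n$, since then $\log(1/(\delta/n)) = \log n + \log(1/\delta)$. The key preliminary observation is that $U_i(t)$ dominates the genuine estimation error of Theorem~\ref{theo::pmode_final}, whose numerator is $\sqrt{\log(1/\delta)}\log n$ rather than $\log(1/\delta)\log n$; so the algorithm's band is a valid (if slightly loose) confidence radius.

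The first real step is to set up a good event $\mathcal{E}$ on which $|\widehat{\text{mode}}(S_i(t)) - \theta_i| \le U_i(t)$ holds simultaneously for every arm $i \in [K]$ and every time $t$ with $T_i(t) \ge N_{f_i}$. Since the rewards drawn from arm $i$ are i.i.d.\ from $f_i$ regardless of the adaptively chosen times at which they are collected, I would apply Theorem~\ref{theo::pmode_final} to the first $m$ rewards of arm $i$ for each fixed $m \in \{N_{f_i},\dots,n\}$ and union-bound over $m$ and over the $K$ arms, so that a per-instance failure probability of order $\delta/n$ gives $\mathbb{P}(\mathcal{E}) \ge 1 - \delta$. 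Handling this adaptivity correctly --- replacing the random count $T_i(t)$ by a deterministic grid of sample sizes before invoking an i.i.d.\ estimation bound --- is the main technical obstacle, since the mode estimator's guarantee is stated only for i.i.d.\ inputs.

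Conditioned on $\mathcal{E}$, the core argument is the usual one. Whenever arm $i \neq 1$ is pulled at time $t$ with $T_i(t-1) \ge N_{f_i}$, its index must exceed that of the optimal arm, so $\widehat{\text{mode}}(S_i(t-1)) + U_i(t-1) \ge \widehat{\text{mode}}(S_1(t-1)) + U_1(t-1) \ge \theta_1$, where the last inequality uses $\widehat{\text{mode}}(S_1(t-1)) \ge \theta_1 - U_1(t-1)$ on $\mathcal{E}$. Combining this with $\widehat{\text{mode}}(S_i(t-1)) \le \theta_i + U_i(t-1)$ yields $\Delta_i = \theta_1 - \theta_i \le 2 U_i(t-1)$.

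Finally I would turn this into a bound on $T_i(n)$ by applying it at the last round in which arm $i$ is pulled, where $T_i(t-1) = T_i(n) - 1$. The resulting inequality $\Delta_i \le 2(\log n + \log(1/\delta))\log(T_i)/T_i^{1/(4+D)}$ together with $\log T_i \le \log n$ gives $T_i(n) \le ((2(\log n + \log(1/\delta))\log n)/\Delta_i)^{4+D}$, which is the claimed $\text{PolyLog}\cdot\Delta_i^{-(4+D)}$ form after absorbing the numerical constant into the stated leading factor. The additive $2N_0$ term accounts for the pulls made while $T_i$ is still below $N_{f_i} \le N_0$, during which the band is not yet valid and the pulls are simply counted directly.
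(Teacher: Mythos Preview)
Your proposal is correct and mirrors the paper's proof, which explicitly follows the template of Theorem~2.1 of \citep{bubeck2012regret}: the paper likewise sets up the simultaneous confidence event via a union bound over arms and rounds, argues that if $I_t = i \neq 1$ then one of the three conditions $\theta_1 \ge \widehat{\theta}_{1,t} + \widehat{\sigma}_{1,t}$, $\widehat{\theta}_{i,t} > \theta_i + \widehat{\sigma}_{i,t}$, or $\Delta_i/2 < \widehat{\sigma}_{i,t}$ must hold, and bounds the number of pulls for which the last can occur. One small refinement: the additive $2N_0$ in the paper's bound accounts not only for the warm-up of arm $i$ but also for rounds in which $T_1(t-1) \le N_0$ (so the band on the \emph{optimal} arm is not yet valid), a case your chain $\widehat{\text{mode}}(S_1(t-1)) + U_1(t-1) \ge \theta_1$ implicitly uses but your explanation of the $2N_0$ term does not cover.
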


\begin{proof}
The proof borrows ideas from the classical analogue. i.e. Theorem 2.1 of \citep{bubeck2012regret}. The main differences are in the
used concentration bounds.
For $T_i(t) > N_0$, we have confidence intervals for each arm index $i$ as follows:
\begin{align*}
\theta_i &\in \left( \widehat{\theta}_{i, t} -  \widehat{\sigma}_{i, t},  \widehat{\theta}_{i, t} + \widehat{\sigma}_{i, t},\right),
\end{align*}
where
\begin{align*}
\widehat{\theta}_{i, t} := \widehat{\text{p-mode}}(S_i(t-1)) \text{ and }
\widehat{\sigma}_{i, t} := (\log n + \log(1/\delta)) \cdot \frac{\log(T_i(t-1))}{(T_i(t-1))^{1/(4+D)}}.
\end{align*}
By a union bound, this holds simultaneously for all $i$ and $t \le n$ with probability at least $1 - \delta$.
If $I_t = i \neq 1$, then at least one of the following holds.
\begin{align*}
\theta_1 \ge \widehat{\theta}_{1, t} + \widehat{\sigma}_{1, t},  \hspace{1cm} 
\widehat{\theta}_{i, t} > \theta_i + \widehat{\sigma}_{i, t}, \hspace{1cm} \Delta_i / 2 < \widehat{\sigma}_{i, t}.
\end{align*}
Otherwise, we have
\begin{align*}
\widehat{\theta}_{1, t} + \widehat{\sigma}_{1, t}  > \theta_1 =  \theta_i + \Delta_i \ge \theta_i + 2\widehat{\sigma}_{i, t} \ge \widehat{\theta}_{i, t} + \widehat{\sigma}_{i, t}.
\end{align*}
Now if $T_i(t-1) > 2 ((\log n+ \log(1/\delta)) \log n)^5 \cdot \Delta_i^{-5}$, then $\widehat{\sigma}_{i, t} < \Delta_i / 2$.
Thus, we have
\begin{align*}
T_i(n) &\le 2 ((\log n + \log(1/\delta)) \log n)^{4+D} \cdot \Delta_i^{-(4+D)}+ \sum_{t = 1}^n  1 \{ \widehat{\theta}_{i, t} > \theta_i + \widehat{\sigma}_{i, t} \} + \sum_{i = 1}^n  1 \{ \widehat{\theta}_{i, t} > \theta_i + \widehat{\sigma}_{i, t} \} \\
&\le 2 ((\log n + \log(1/\delta)) \log n)^{4+D} \cdot \Delta_i^{-(4+D)}+ 2 N_0.
\end{align*}
As desired.
\end{proof}

\begin{proof} [Proof of Theorem~\ref{theo:toparm}]
We have by Lemma~\ref{ucbbound} that
\begin{align*}
T_1(n) = n - \sum_{i=2}^K T_i(n) \ge n - 2^{4+D}(\log^{(8+2D)} n + \log^{(4+D)}(1/\delta) \log^{4+D} n) \sum_{i=2}^K \Delta_i^{-(4+D)}.
\end{align*}
Thus it suffices to have $n \ge 2^{4+D}(\log^{8+2D} n + \log^{4+D}(1/\delta) \log^{4+D} n) \sum_{i=2}^K \Delta_i^{-(4+D)}$. 
The result now follows, as desired.
\end{proof}

\begin{proof} [Proof of Theorem~\ref{theo:eoptimalpac}]
It suffices to choose $n$ large enough such that
\begin{align*}
|\widehat{\text{p-mode}}(S_i(n)) - \theta_i| \le \epsilon/2.
\end{align*}
Indeed, if this were the case, then if arm $i \neq 1$ was selected as the top arm but not $\epsilon$-optimal, then 
\begin{align*}
\theta_i < \theta_1 - \epsilon \Rightarrow \theta_i + \epsilon/2 < \theta_1 - \epsilon/2 \Rightarrow 
\widehat{\text{p-mode}}(S_i(n))  < \widehat{\text{p-mode}}(S_1(n)),
\end{align*}
a contradiction. Now from Theorem~\ref{theo:pmode} with confidence parameter $\delta/K$, it follows that it suffices to take
\begin{align*}
n \ge K (\log(K) + \log(1/\delta))^5 \epsilon^{-5} \log (\epsilon^{-5}),
\end{align*}
as desired.

\end{proof}

\section{Top $m$ arms identification results}

In this section we prove the following Theorem~\ref{theo:topmarm}.

We follow closely the proof template in \citep{jiang2017practical}, changing the argument where necessary. Let $OPT$ denote the optimal set of $m$ arms, and $\bar{OPT}$ its complement.

\begin{lemma}\label{lem:opt_diffs}
For any arm $i \in OPT$ and $j \in \overline{OPT}$, $\theta_i - \theta_j \geq \frac{\widetilde{\Delta}_i + \widetilde{\Delta}_j}{2}$. For any two arms $i,j \in OPT$, $\theta_i - \theta_j = \X_i - \widetilde{\Delta}_j$. 
\end{lemma}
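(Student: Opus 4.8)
The plan is to reduce both claims to direct substitutions into the piecewise definition of $\widetilde{\Delta}$, exploiting the indexing convention $\theta_1 \ge \cdots \ge \theta_K$, under which the optimal set is exactly $OPT = \{1,\ldots,m\}$. Thus every $i \in OPT$ satisfies $\theta_i \ge \theta_m$, so it falls in the first branch and $\widetilde{\Delta}_i = \theta_i - \theta_{m+1}$; every $j \in \overline{OPT}$ satisfies $\theta_j \le \theta_{m+1}$, so it falls in the second branch and $\widetilde{\Delta}_j = \theta_m - \theta_j$. First I would record these two specializations, since the rest is purely algebraic.

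For the first inequality I would substitute the specialized forms, obtaining $\widetilde{\Delta}_i + \widetilde{\Delta}_j = (\theta_i - \theta_j) + (\theta_m - \theta_{m+1})$. Hence the claimed bound $\theta_i - \theta_j \ge \tfrac12(\widetilde{\Delta}_i + \widetilde{\Delta}_j)$ is equivalent, after clearing the factor of two, to $\theta_i - \theta_j \ge \theta_m - \theta_{m+1}$. This last inequality is immediate from the ordering: $\theta_i \ge \theta_m$ and $\theta_{m+1} \ge \theta_j$ combine to give $\theta_i - \theta_j \ge \theta_m - \theta_{m+1}$, which completes the first part.

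For the equality (reading $\X_i$ as a typo for $\widetilde{\Delta}_i$), both $i$ and $j$ lie in $OPT$, so both invoke the first branch of the definition, and therefore $\widetilde{\Delta}_i - \widetilde{\Delta}_j = (\theta_i - \theta_{m+1}) - (\theta_j - \theta_{m+1}) = \theta_i - \theta_j$, exactly as asserted.

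There is essentially no genuine obstacle here: the only point requiring care is the bookkeeping of which branch of the piecewise definition applies to each arm, together with the observation that the common term $\theta_m - \theta_{m+1}$ appearing inside both $\widetilde{\Delta}_i$ and $\widetilde{\Delta}_j$ is precisely what forces their average to be at most $\theta_i - \theta_j$. The lemma is a structural identity that will be used downstream to convert the gap quantities $\widetilde{\Delta}$ into honest separations between the modes of optimal and suboptimal arms in the Top-$m$ sample-complexity analysis.
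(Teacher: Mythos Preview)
Your proposal is correct and follows essentially the same approach as the paper: substitute the two branches of the definition of $\widetilde{\Delta}$ according to membership in $OPT$ or $\overline{OPT}$, then do elementary algebra. Your argument for the first inequality, rewriting $\widetilde{\Delta}_i+\widetilde{\Delta}_j=(\theta_i-\theta_j)+(\theta_m-\theta_{m+1})$ and reducing to $\theta_i-\theta_j\ge\theta_m-\theta_{m+1}$, is in fact cleaner than the paper's presentation, and your treatment of the second identity is verbatim the same.
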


\begin{proof}
For any $i \in OPT$ and $j \in \overline{OPT}$ $\theta_i - \theta_j = \X_i - \X_j - \X_m \geq \widetilde{\Delta}_i + \X_j -\left( \frac{\widetilde{\Delta}_i+\widetilde{\Delta}_j}{2}   \right) = \frac{\widetilde{\Delta}_i + \widetilde{\Delta}_j}{2}$. If $i,j \in OPT$ $\theta_i - \theta_j = (\theta_i - \theta_{m+1}) - (\theta_j - \theta_{m+1}) = \widetilde{\Delta}_i - \widetilde{\Delta}_j$.

\end{proof}

We will use the following bound:

\begin{lemma}\label{lemm:union_bound}
For any arm $i \in [K]$, $\theta_i \in (\hat{\theta}_{i,t} - U(t, \delta') , \hat{\theta}_{i, t} + U(t, \delta'))$ for all $t \geq N_{f_i} $ simultaneously with probability $1-\delta'$.
\end{lemma}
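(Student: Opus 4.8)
The plan is to upgrade the fixed-sample-size guarantee of Theorem~\ref{theo::pmode_final} into a bound holding uniformly over all sample sizes by a union bound with a summable, time-dependent confidence parameter. Fix the arm $i$. For a fixed sample size $t$, Theorem~\ref{theo::pmode_final} (equivalently Theorem~\ref{theo:pmode} with $p=1$ and $k=t^{4/(4+D)}$) guarantees, for any confidence level $\delta'_t \in (0,1)$, that
\begin{align*}
\mathbb{P}\left( |\hat{\theta}_{i,t} - \theta_i| \ge \frac{\sqrt{\log(1/\delta'_t)}\, \log t}{t^{1/(4+D)}} \right) \le \delta'_t.
\end{align*}

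First I would choose the per-time confidence parameter $\delta'_t := \delta'/(c\, t^2)$, where $c = \sum_{i=N_0}^{\infty} i^{-2}$ is precisely the normalizing constant appearing in the definition of $U$. With this choice the allotted failure probabilities are summable and, crucially, match the constant baked into $U$: summing over $t \ge N_0$ gives $\sum_{t \ge N_0} \delta'_t = (\delta'/c)\sum_{t \ge N_0} t^{-2} = \delta'$. (The range $t \ge N_0 = \max_i N_{f_i}$ is the one actually encountered, since each arm is pulled $N_0 \ge N_{f_i}$ times before the bound is invoked, which reconciles the $t \ge N_{f_i}$ in the statement with the range defining $c$.)

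Next I would verify that, at this confidence level, the radius supplied by Theorem~\ref{theo::pmode_final} is dominated by $U(t,\delta')$. Substituting $\delta'_t$ gives $\log(1/\delta'_t) = \log(c t^2) + \log(1/\delta')$, so the per-time radius equals $\sqrt{\log(c t^2) + \log(1/\delta')}\cdot \log t\,/\,t^{1/(4+D)}$. Since $\log(c t^2) + \log(1/\delta') \ge 1$ in the relevant range, I can apply $\sqrt{a} \le a$ with $a = \log(c t^2) + \log(1/\delta')$ to conclude that this radius is at most $U(t,\delta') = (\log(c t^2) + \log(1/\delta'))\cdot \log t\,/\,t^{1/(4+D)}$. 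Hence the event $\{|\hat{\theta}_{i,t} - \theta_i| \ge U(t,\delta')\}$ is contained in the event of the single-time bound above, so it too has probability at most $\delta'_t$.

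Finally I would apply the union bound over the encountered sample sizes:
\begin{align*}
\mathbb{P}\left( \exists\, t \ge N_0 : |\hat{\theta}_{i,t} - \theta_i| \ge U(t,\delta') \right) \le \sum_{t \ge N_0} \delta'_t \le \delta',
\end{align*}
which is exactly the complement of the claimed simultaneous two-sided confidence statement. This argument is a standard peeling/union bound over time, so the only delicate step is the bookkeeping that makes the weights $\delta'_t \propto t^{-2}$ sum to $\delta'$ while respecting the normalization $c$ that is hard-coded into $U$; it is precisely this normalization, together with the inequality $\sqrt{a}\le a$, that forces the inflation from the $\sqrt{\log(1/\delta'_t)}$ factor of Theorem~\ref{theo::pmode_final} to the linear $\log(1/\delta')$ factor appearing in $U$.
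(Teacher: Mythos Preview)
Your proposal is correct and follows the same approach as the paper: apply the single-sample-size guarantee of Theorem~\ref{theo::pmode_final} at the time-dependent level $\delta'_t=\delta'/(ct^2)$ and union bound over $t$. The paper's proof is a one-line sketch of exactly this; your version is more careful in that it explicitly verifies the radius inequality (via $\sqrt{a}\le a$ for $a\ge 1$) and reconciles the summation range $t\ge N_0$ defining $c$ with the statement's $t\ge N_{f_i}$.
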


\begin{proof}
Let $c = \sum_{i=N0}^\infty \frac{1}{t^2}$. This lemma is a simple consequence of the union bound since for every $t \geq N_{f_i}$, $\theta_i \in (\hat{\theta}_{i,t} - U(t, \delta') , \hat{\theta}_{i, T_i(t)} + U(t, \delta'))$ with probability $1-\delta'/(c t^2)$, by union bound, this holds for all $t \geq N_{f_i} $ with probability at least $1-\delta$. 
\end{proof}

Now we prove Theorem \ref{theo:toparm}:

\begin{proof}
Define $\mathcal{E}$ the event that for all arms $i \in OPT$ and all $t \geq 1$ it holds that for all arms the true modes lie inside the confidence intervals. In other wors, where for all $t$ such that $\sum_{i=1}^L N_{f_i} \leq t \leq n$, it holds that for all $i \in OPT$, $|\hat{\theta}_{i,T_i(t)} - \theta_i | \leq U(T_i(t), \delta/(2m))$ and for all $j \in \hat{OPT}$, $|\hat{\theta}_{j,T_j(t)} - \theta_j | \leq U(T_j(t), \delta/(2(K-m)))$. By Lemma \ref{lemm:union_bound}The probability of $\mathcal{E}$ is:

\begin{align*}
P(\mathcal{E})  &\geq 1-m(\delta/(2K)) - (K-m)(\delta/(2(K-m)))\\
			&= 1-\delta
\end{align*}

We now proceed to prove that conditioned on $\mathcal{E}$ the algorithm works.

 Suppose for the sake of contradiction the algorithm terminates at time $t$ and returns $H_t \neq OPT$. In this case there exists $i \in H_t \cap \bar{OPT}$ and $j \in L_t \cap OPT$. Recall that $h_t$ is the arm in $H_t$ with the lowest lower confidence bound and $l_t$ is the arm in $L_t$ with the highest upper confidence bound. The definition of $h_t$ and event $\mathcal{E}$ guarantees that conditioning on $\mathcal{E}$,

\begin{equation}
\theta_i > \hat{\theta}_{i,T_i(t)} - U(T_i(t), \delta/(2(K-m))) \geq \hat{\theta}_{h_t, T_{h_t}(t)} - U(T_{h_t}(t), \delta/(2(K-m)))
\end{equation}

Similarly:

\begin{equation}
\theta_j < \hat{\theta}_{j, T_j(t)} + U(T_j(t), \delta/(2m)) \leq \hat{\theta}_{l_t, T_{l_t}(t)} + U(T_{l_t}(t), \delta/(2m))
\end{equation}

The stopping condition at round $t$ implies that:

\begin{equation}
\hat{\theta}_{h_t, T_{h_t}(t)} - U( T_{h_t}(t), \delta/(2(K-m))) \geq \hat{\theta}_{l_t, T_{l_t}(t)} + U(T_{l_t}(t), \delta/(2m))
\end{equation}

The three inequalities above together yield $\theta_i > \theta_j$ whcih contradicts the assumption that $i \in \hat{OPT}$ and $j \in OPT$. Thus our algorithm outputs the correct andwer conditioning on event $\mathcal{E}$.

We upper bound the sample complexity of our algorithm by means of what is known as a charging argument. We define the critical armaat time $t$ denoted by $cr_t$, as the arm that has been pulled fewer times between $h_t$ and $l_t$, in other words, $cr_t = \arg\min_{i \in \{ h_t, l_t\}} T_i(t)$. We charge the critical arm a cost of $1$, no matter whether it s actually pulled at this time step. It remains to upper bound the total cost that we charge each arm. To this end we prove the following two claims:

\begin{enumerate}
\item Once an arm has been sampled a certain number of times it will never be critical in the future
\item The expected number of samples drawn from an arm is lower bounded by the total cost it is charged.
\end{enumerate}

This directly gives an upper bound on the cost that we charge each arm and thus an upper bound on the total sample complexity.

For a fixed time step $t$ define $\hat{\theta}_i = \hat{\theta}_{i,T_i(t)}$ and $r_i = U(T_i(t), \delta/(2K))$. Since $\delta/(2K)$ is smaller than $\delta/(2m)$ and $\delta/(2(K-m))$, $r_i$ is greater than or equal to both $U(T_i(t), \delta/(2m))$ and $U(T_i(t), \delta/(2(K-m)))$. It follows that conditioning on event $\mathcal{E}$, $|\hat{\theta}_i - \theta_i| < r_i$ holds for every arm $i$. 

In the following we show that $r_{cr_t}\geq \Delta_{cr_t}/8$. In other words, let $\rho_i$ denote the smallest integer such that $U(\rho_i, \delta/(2K)) < \Delta_i/8$. Then once arm $i$ has been sampled $\rho_i$ times, it will never become critical later. 
We prove the inequality in the following three cases separately.
\begin{enumerate}
\item [Case 1] $h_t \in \bar{OPT}$, $l_t \in OPT$. Since $h_t \in H_t$ and $l_t \in L_t$, we have $\hat{\theta}_{h_t} \geq \hat{\theta}_{l_t}$. It folows that conditing on $\mathcal{E}$,
\begin{equation*}
\theta_{h_t} + \rho_{h_t} > \hat{\theta}_{h_t} \geq \hat{\theta}_{l_t} > \theta_{l_t} - \rho_{l_t}
\end{equation*}
which implies that
\begin{equation*}
\rho_{h_t} + \rho_{l_t} > \theta_{l_t} - \theta_{h_t} \geq \frac{\widetilde{\Delta}_{h_t} + \widetilde{\Delta}_{l_t}}{2}
\end{equation*}

The last step applies Lemma \ref{lem:opt_diffs}. Recall that arm $cr_t$ has been pulled fewer times than the  other arm up to time $t$, and thus the arm has a larger confidence radius than the other arm. Then,
\begin{equation*}
\rho_{cr_t} > \frac{\rho_{h_t} + \rho_{l_t}}{2} \geq \frac{\widetilde{\Delta}_{h_t} + \widetilde{\Delta}_{l_t}}{4} \geq \frac{\widetilde{\Delta}_{cr_t} }{4}
\end{equation*}

\item[Case 2] $h_t \in OPT$, $l_t \in \bar{OPT}$. Since the stopping condition of our algorithm is not met, we have:

\begin{align*}
\bar{\theta}_{h_t} - \rho_{h_t} &\leq \hat{\theta}_{h_t} - U(T_{h_t}(t), \delta/(2(K-m)))\\
								&< \hat{\theta}_{l_t} + U(T_{l_t}(t), \delta/(2m))\\
								&\leq \hat{\theta}_{l_t} + \rho_{l_t}
\end{align*}
It follows that conditioning on $\mathcal{E}$,
\begin{equation*}
\theta_{h_t} - 2\rho_{h_t} < \hat{\theta}_{h_t} - \rho_{h_t} < \hat{\theta}_{l_t} + \rho_{l_t} < \theta_{l_t} + 2\rho_{l_t}
\end{equation*}
which implies by lemma \ref{lem:opt_diffs}

\begin{equation*}
r_{h_t} + r_{l_t} >\frac{ (\theta_{h_t} - \theta_{l_t} ) }{2} \geq \frac{\widetilde{\Delta}_{h_t} + \widetilde{\Delta}_{l_t}}{4}
\end{equation*}

Therefore:

\begin{equation*}
r_{cr_t} \geq \frac{(r_{h_t} + r_{l_t})}{2} \geq \frac{(\widetilde{\Delta}_{h_t} + \widetilde{\Delta}_{l_t})}{8} \geq \widetilde{\Delta}_{cr_t} /8
\end{equation*}

\item[Case 3] $h_t, l_t \in OPT$ or $h_t, l_t \in \bar{OPT}$. By symmetry, it sufficeds to consider the former case. Since the arm $l_t$, which is among the best $m$ arms is in $L_t$ by mistake, there must be another arm $j$ such that $j \in \bar{OPT} \cap H_t$. Recall that $h_t$ is the arm with the smallest lower confidence bound in $H_t$. Thus we have:

\begin{align}
\theta_{h_t} - 2\rho_{h_t} &< \hat{\theta}_{h_t} - \rho_{h_t} \\
						&\leq \hat{\theta}_{h_t} - U(T_{h_t}(t), \delta/(2(K-m)))\\
						& \leq \hat{\theta}_{j} - U(T_j(t), \delta/(2(K-m))) \\
						&< \theta_j
\end{align}

And it then follows from Lemma \ref{lem:opt_diffs}:

\begin{equation}\label{eq:topm_eq1}
r_{h_t} > (\theta_{h_t} - \theta_j)/2 \geq (\widetilde{\Delta}_{h_t}+\widetilde{\Delta}_j)/4 \geq \widetilde{\Delta}_{h_t}/4
\end{equation}

Thus if $cr_t = h_t$ the claim directly holds. It remains to consider the case $cr_t = l_t$ 
Since $\hat{\theta}_{h_t} \geq \hat{\theta}_{l_t}$ we have:
\begin{equation*}
\theta_{l_t} - \rho_{l_t} < \hat{\theta}_{l_t} \leq \hat{\theta}_{h_t} < \theta_{h_t} + \rho_{h_t}
\end{equation*}

And therefore by Lemma \ref{lem:opt_diffs}:
\begin{equation}\label{eq:topm_eq2}
\rho_{h_t} + \rho_{l_t} > \theta_{l_t} - \theta_{h_t} = \widetilde{\Delta}_{l_t} - \widetilde{\Delta}_{h_t}
\end{equation}

Since we charge $cr_t = l_t$ it holds that $\rho_{l_t} \geq \rho_{h_t}$ and thus by \ref{eq:topm_eq1} and \ref{eq:topm_eq2}.

\begin{equation*}
r_{l_t} \geq (r_{h_t} + r_{l_t})/6 + 2\rho_{h_t}/3 > (\widetilde{\Delta}_{l_t} - \widetilde{\Delta}_{h_t})/6 + \widetilde{\Delta}_{h_t}/6 = \widetilde{\Delta}_{l_t}/6
\end{equation*}
This finishes the proof of the claim.
\end{enumerate}

We note that when we charge arm $cr_t$ with a cost of $1$ at time step $t$, it holds that $T_{cr_t}(t) \leq (T_{h_t}(t) + T_{l_t}(t))/2$. According to the algorithm, arm $cr_t$ is pulled at time $t$ with probability at least $1/2$. Recall that $\rho_i$ is defined as the smallest integer such that $U(\rho_i, \delta/(2K )) < \widetilde{\Delta}_i /8$. Let random variable $X_i$ denote that number of times that arm $i$ is charged before it has been pulled $\rho_i$ times. Since in expectation, an arm will get a sample after being charged at most twice, we have $\mathbb{E}[X_i] \leq 2\rho_i$ 

We have  that 

$$r_{i} \geq \text{PolyLog}\left(\frac{1}{\delta}, \frac{1}{\widetilde{\Delta}_i}\right) \frac{1}{\widetilde{\Delta}_i}$$
implies  $U(t, \delta/2K) \leq \frac{\widetilde{\Delta}_i}{8}$.

Therefore the complexity of the algorithm conditioning on event $\mathcal{E}$ is upper bounded by:

\begin{equation*}
\sum_{i=1}^K \mathbb{E}[X_i] + KN_0 =  \text{PolyLog}\left(\frac{1}{\delta}, \sum_{i=1}^K\frac{1}{\widetilde{\Delta}_i}\right) \sum_{i=1}^K\widetilde{\Delta}_i^{-(4+D)},
\end{equation*}
as desired.
\end{proof}

\section{Proofs of Regret Bounds}

In this section, not included in the main paper we explore the notion of defining regret of an algorithm for $p-$mode identification with respect to the modal values. The loss of pulling an arm at time $t$ is defined as the distance between the $p-$mode of said arm to the $p-$mode of the optimal arm.

We introduce the following notions of regret based on the $p-$modes. 
\begin{align*}
\mathcal{R}(n) &= n \cdot \max_{i=1,...,K} \theta_i - \sum_{j=1}^n \theta_{I_j}, \\
\overline{\mathcal{R}}(n) &= \max_{i=1,...,K} n \cdot \widehat{p-\text{mode}}\left( \{X_{i, t} : 1 \le t \le n\} \right) \\
&- \sum_{i=1}^K T_i(n) \cdot \widehat{p-\text{mode}} (\{X_{i, t} : I_t = i, 1 \le t \le n \}).
\end{align*}
The regret thus rewards the strategy with the $p-$mode ($\mathcal{R}_n$) or the sample $p-$mode ($\overline{\mathcal{R}}(n)$)
 of all trials for a particular arm rather than the mean as in classical formulations.
 
We next give a regret bounds for Algorithm~\ref{alg:ucb}. For $\mathcal{R}(n)$, we attain a poly-logarithmic regret in the number of time steps, while for $\overline{\mathcal{R}}(n)$ we attain a regret of order $\widetilde{O}(n^{4/(4+D)})$. The extra error from the latter is incurred from the errors in the mode estimates.

\begin{proof} [Proof of Theorem~\ref{theo:ucbregret}]

We have by Lemma~\ref{ucbbound} that
\begin{align*}
\mathcal{R}(n) &= \sum_{i = 2}^K T_i(n) \Delta_i \le (2 (\log n + \log(1/\delta)) \log n)^{4+D} \cdot \sum_{i=2}^K \Delta_i^{-(3+D)} + 2 K \cdot N_0 \\
&\le  32 (\log^{8+2D} n + \log^{(4+D)}(1/\delta) \log^{(4+D)} n) \cdot \sum_{i=2}^K \Delta_i^{-(3+D)}  + 2 K \cdot N_0.
\end{align*}

Now, for $T_i(t) > N_0$, repeating what was established in Lemma~\ref{ucbbound}, we have confidence intervals for each arm index $i$ as follows:
\begin{align*}
\theta_i &\in \left( \widehat{\theta}_{i, t} -  \widehat{\sigma}_{i, t},  \widehat{\theta}_{i, t} + \widehat{\sigma}_{i, t},\right),
\end{align*}
where
\begin{align*}
\widehat{\theta}_{i, t} := \widehat{\text{p-mode}}(S_i(t-1)) \text{ and }
\widehat{\sigma}_{i, t} := (\log n + \log(1/\delta)) \cdot \frac{\log(T_i(t-1))}{(T_i(t-1))^{1/(4+D)}}.
\end{align*}
Thus,
\begin{align*}
\overline{\mathcal{R}}(n) &\le n \widehat{\sigma}_{1, n} + \sum_{i = 2}^K T_i(n)\cdot 1\{T_i(n) > N_0 \} \cdot (\widehat{\sigma}_{i, n} + \Delta_i) + \sum_{i = 2}^K T_i(n) \cdot 1\{T_i(n) \le N_0 \}  \\
&= \mathcal{R}(n) + N_0\cdot K + n \widehat{\sigma}_{1, n} + \sum_{i=2}^K T_i(n)\cdot 1 \{ T_i(n) > N_0 \}) \cdot \widehat{\sigma}_{i, n} \\
&\le \mathcal{R}(n) + N_0\cdot K +  n \widehat{\sigma}_{1, n} + (\log n + \log(1/\delta)) \sum_{i=2}^K T_i(n)^{4/(4+D)} \cdot \log(T_i(n)) \\
&\le  \mathcal{R}(n) + N_0\cdot K +  n \widehat{\sigma}_{1, n} + (\log n + \log(1/\delta)) (\log n) \sum_{i=2}^K T_i(n)^{4/(4+D)} \cdot \\
&\le \mathcal{R}(n) + N_0\cdot K +  n \widehat{\sigma}_{1, n} + (\log n + \log(1/\delta)) (\log n) \cdot K^{1/(4+D)} n^{4/(4+D)} 
\end{align*}
Next, we have
\begin{align*}
T_1(n) = n - \sum_{i=2}^K T_i(n) \ge n - 32K(\log^{8+2D} n + \log^{(4+D)}(1/\delta) \log^{(4+D)} n) \cdot \sum_{i=2}^K \Delta_i^{-(3+D)} + 2 K^2 \cdot N_0 \ge n/2,
\end{align*}
where the last inequality holds for $n$ sufficiently large depending on $K,\delta, \Delta_2, \Delta_3, ...,\Delta_K$.
Thus, we have $\widehat{\sigma}_{1, n} \le \frac{1}{2} (\log n + \log(1/\delta)) \log n \cdot n^{1/(4+D)}$. Combining this with the above, we get
\begin{align*}
\overline{\mathcal{R}}(n) &\le \mathcal{R}(n) + O\left( (\log^2 n + \log(1/\delta) \log n  + K ) \cdot n^{4/(4+D)} \right),
\end{align*}
as desired.
\end{proof}

\section{Simulations for $p-$modal bandits}

\begin{figure}[H]
\begin{center}
\includegraphics[width=0.40\textwidth]{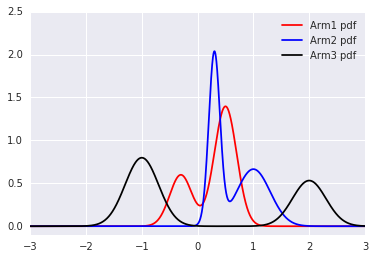}
\includegraphics[width=0.40\textwidth]{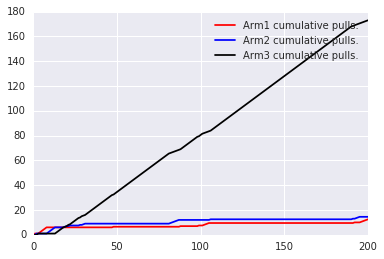}
\end{center}
\label{fig:2}
\caption{Left. Multimodal densities. The score function equals the absolute value of the $2-$mode of the Arm. Under this score function the best arm is Arm 3. Right. Our algorithm quickly figures which one is the best arm and pulls it more than the other ones. }
\end{figure}

\begin{figure}[H]
\begin{center}
\includegraphics[width=0.40\textwidth]{figures/2_mode_density.png}
\includegraphics[width=0.40\textwidth]{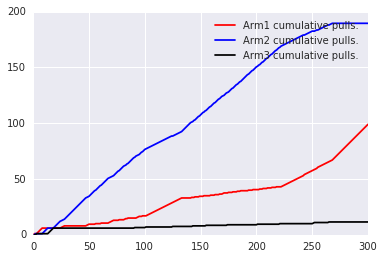}
\end{center}
\label{fig:2}
\caption{Left. Multimodal densities. The score function equals the value of the $2-$mode of the Arm. Under this score function the best arm is Arm 2. Right. Our algorithm quickly figures which one is the best arm and pulls it more than the other ones. }
\end{figure}

\section{Proofs of $p$-mode estimation results for Contextual Bandits}
\begin{assumption} \label{assumption:conditional}
\begin{itemize}
\item Let $0 < \alpha \le 1$. For all $i \in [K]$, $x, x' \in \mathcal{X}$, $r, r' \in [0, 1]$, the following holds.
\begin{align*}
|f_i (r, x) - f_i(r', x')| \le C_\alpha |(r,x)-(r',x')|^\alpha,
\end{align*}
where $(r, x)$ represents the concatenation of $r$ and $x$ into a vector in $\mathbb{R}^{d+1}$.
\item
The local maxima of $f_i(r | X = x)$ are points for all $i$ and $x \in \mathcal{X}$.
\item
There exists $\check{C}, \hat{C}, r_M > 0$ such that the following holds simulatenously for all $i$, $r$ and $x \in \mathcal{X}$.
\begin{align*}
\check{C} (r_0 - r)^2 \le |f_i(r_0, x) - f_i(r, x)| \le \hat{C}(r_0 - r)^2,
\end{align*}
for $r \in B(r_0, r_M)$ where $r_0$ is a mode of $f_i(\cdot | X = x)$.
\item $\inf_{x \in \mathcal{X}} \inf_{r \in \{\text{modes of } f(r|X=x)\} +[-r_M, r_M]} f(r, x) > 0.$
\end{itemize}
\end{assumption}
In this section we provide guarantees for Algorithm \ref{alg:pmode_conditional_general}, a generalization of Algorithm \ref{alg:pmode_conditional}.

\begin{algorithm}[H]
   \caption{Estimating $p$-modes of conditional densities}
   \label{alg:pmode_conditional_general}
\begin{algorithmic}
	\STATE Input: $m$, $k$, samples $S_{[n]} := \{(r_1, X_1),...,(r_n,X_n)\}$, $x \in \mathbb{R}^d$, confidence parameter $\delta$, and $\epsilon > 0$.
	\STATE Let $r_k(r,x)$ be the distance from $(r, x) \in \mathbb{R} \times \mathbb{R}^{d}$ to its $k$-th nearest neighbor in $S_{[n]}$. 
   \STATE $\widehat{f}(r, x) := k/(v_{d+1} \cdot n\cdot r_k(r,x)^{d+1})$, where $v_{d+1}$ is the volume of $(d+1)$-dimensional unit ball.
   \STATE
   \STATE Initialize $\mathcal{A}:= \emptyset$. Define $\beta_k =  100\cdot \log(1/\delta) \cdot \sqrt{\log n} / \sqrt{k}$. 
   \STATE Let $R := \{ \frac{1}{m}, \frac{2}{m}, ..., \frac{m-1}{m} \}$. Sort $R$ in descending order by $\widehat{f}(r, x)$ value for $r \in R$ and let $R_i$ denote the $i$-th element of $R$.
   \STATE Define $G(\lambda)$ to be the graph with vertices $r \in R$ such that $\widehat{f}(r, x) > \lambda$ and edges between $r_1$ and $r_2$ iff $|r_1 - r_2| \le \min\{r_k(r_1, x), r_k(r_2, x) \}$.
   \STATE
   \FOR{$i=1$ {\bfseries to} $m$}
   \STATE Define $\lambda := \widehat{f}(R_i, x)$ and let $A$ be the CC of $G(\lambda - \beta_k \lambda - \epsilon)$ that contains $R_i$.
   \STATE {\bf if} $A$ is disjoint from all sets in $\mathcal{A}$ {\bf then}  Add $A$ to $\mathcal{A}$. {\bf end if}
   \STATE {\bf if} $|\mathcal{A}| = p$ {\bf then}  break. {\bf end if}
   \ENDFOR
   \STATE \textbf{return} $\min \{ \text{argmax}_{x \in A} f_k(x) : A \in \mathcal{A} \}$. 
\end{algorithmic}
\end{algorithm}

We utilize uniform $k$-NN bounds from \citep{dasgupta2014optimal}, which are repeated here. For these results $f$ is an arbitrary density on $\mathbb{R}^d$ and $f_k$ is its $k$-NN density estimator from a finite sample of size $n$ defined as $f_k(x) := k/(n\cdot v_d \cdot r_k(x)^d)$ where $v_d$ is the volume of a unit ball in $\mathbb{R}^d$.

\begin{definition}
\begin{align*}
\hat{r}(\epsilon, x) &:=\sup\left\{r : \sup_{x' \in B(x, r) \cap M} f(x') - f(x) \le \epsilon \right\}\\
\check{r}(\epsilon, x) &:=\sup\left\{r : \sup_{x' \in B(x, r) \cap M} f(x) - f(x') \le \epsilon \right\}.
\end{align*}
\end{definition}
\begin{definition}
$C_{\delta, n} := 16 \log(2/\delta) \sqrt{d \log n}$.
\end{definition}

\begin{lemma}\label{upperboundlemmaknn} [Lemma 3 of \citep{dasgupta2014optimal}]
Suppose that $k \ge 4 C_{\delta, n}^2$. Then with probability at least $1-\delta$, the following holds for all $x\in \mathbb{R}^d$ and $\epsilon > 0$.
\begin{align*}
f_k(x) < \left(1 + 2\frac{C_{\delta, n}}{\sqrt{k}} \right) (f(x) + \epsilon),
\end{align*}
provided $k$ satisfies $v_d \cdot \hat{r}(x, \epsilon) \cdot (f(x) + \epsilon) \ge \frac{k}{n} + C_{\delta, n}\frac{\sqrt{k}}{n}$.
\end{lemma}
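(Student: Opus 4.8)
The plan is to reduce the claim to a uniform deviation bound for the empirical masses of Euclidean balls and then convert a lower bound on the $k$-nearest-neighbor radius $r_k(x)$ into the stated upper bound on $f_k(x) = k/(n\,v_d\,r_k(x)^d)$. Writing $\widehat{\mathbb{P}}_n(B) = |B \cap X|/n$ for the empirical measure and $\mathbb{P}(B) = \int_B f$ for the true mass, the starting point is that the family of balls in $\mathbb{R}^d$ has VC dimension $O(d)$, so a relative Vapnik--Chervonenkis inequality yields, with probability at least $1-\delta$ and simultaneously for every ball $B$,
\[
n\,\widehat{\mathbb{P}}_n(B) \le n\,\mathbb{P}(B) + C_{\delta,n}\sqrt{n\,\mathbb{P}(B)},
\]
where $C_{\delta,n} = 16\log(2/\delta)\sqrt{d\log n}$ precisely packages the VC dimension, the confidence level, and the $\sqrt{\log n}$ needed to make the bound hold uniformly. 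This single uniform event is what lets the conclusion hold for all $x$ at once.

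The second step is to pick, for fixed $x$ and $\epsilon$, the critical radius $\rho$ defined by $k/(n\,v_d\,\rho^d) = (1 + 2C_{\delta,n}/\sqrt{k})\,(f(x)+\epsilon)$, which is exactly the radius at which the target upper bound on $f_k(x)$ is attained. I would first verify that the proviso $v_d\,\hat{r}(\epsilon,x)^d\,(f(x)+\epsilon) \ge k/n + C_{\delta,n}\sqrt{k}/n$ forces $\rho \le \hat{r}(\epsilon,x)$; since $f(x') \le f(x)+\epsilon$ throughout $B(x,\hat{r}(\epsilon,x))$ by the definition of $\hat{r}$, this guarantees the density is controlled on $B(x,\rho)$ and hence $\mathbb{P}(B(x,\rho)) \le v_d\,\rho^d\,(f(x)+\epsilon)$. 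Plugging $M := n\,\mathbb{P}(B(x,\rho)) \le k/(1+2C_{\delta,n}/\sqrt{k})$ into the uniform inequality gives $n\,\widehat{\mathbb{P}}_n(B(x,\rho)) \le M + C_{\delta,n}\sqrt{M}$, and the hypothesis $k \ge 4C_{\delta,n}^2$ is exactly the margin condition under which $M + C_{\delta,n}\sqrt{M} < k$, so that $|B(x,\rho)\cap X| < k$.

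Finally, fewer than $k$ sample points inside $B(x,\rho)$ means $r_k(x) \ge \rho$ by the definition $r_k(x) = \inf\{r : |B(x,r)\cap X| \ge k\}$, and therefore $f_k(x) = k/(n\,v_d\,r_k(x)^d) \le k/(n\,v_d\,\rho^d) = (1 + 2C_{\delta,n}/\sqrt{k})(f(x)+\epsilon)$, which is the claim. The one nontrivial ingredient is the uniform empirical-process bound in the first step: making the ball-mass deviation hold simultaneously over the uncountable family of all balls is what requires VC theory and is the source of the $\sqrt{d\log n}$ and $\log(2/\delta)$ factors in $C_{\delta,n}$; everything afterward is an elementary radius-to-density conversion together with the arithmetic that the margin $k \ge 4C_{\delta,n}^2$ was designed to close. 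Since this statement is Lemma~3 of \citep{dasgupta2014optimal}, it is invoked directly in the paper, but the argument above is the route I would reconstruct.
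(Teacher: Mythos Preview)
The paper does not supply its own proof of this lemma; it is quoted verbatim as Lemma~3 of \citep{dasgupta2014optimal} and invoked as a black box. Your reconstruction---a uniform relative VC deviation bound over balls, followed by the choice of the critical radius $\rho$ at which the target inequality is tight, the verification that the proviso forces $\rho \le \hat r(\epsilon,x)$ so that $\mathbb{P}(B(x,\rho)) \le v_d\rho^d(f(x)+\epsilon)$, and finally the arithmetic that $k \ge 4C_{\delta,n}^2$ makes $M + C_{\delta,n}\sqrt{M} < k$---is exactly the argument in the cited source, so your proposal is correct and matches the intended proof.
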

\begin{lemma}\label{lowerboundlemmaknn} [Lemma 4 of \citep{dasgupta2014optimal}]
Suppose that $k \ge 4 C_{\delta, n}^2$. Then with probability at least $1-\delta$, the following holds for all $x\in \mathbb{R}^d$ and $\epsilon > 0$.
\begin{align*}
f_k(x) \ge \left(1 -\frac{C_{\delta, n}}{\sqrt{k}} \right) (f(x) - \epsilon),
\end{align*}
provided $k$ satisfies $v_d \cdot \check{r}(x, \epsilon) \cdot (f(x) - \epsilon) \ge \frac{k}{n} - C_{\delta, n}\frac{\sqrt{k}}{n}$.
\end{lemma}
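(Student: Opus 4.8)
The plan is to reduce the claimed lower bound on the density estimate $f_k(x)$ to an upper bound on the $k$-NN radius $r_k(x)$, and then control $r_k(x)$ through a single uniform concentration inequality for the empirical mass of balls. Since $f_k(x) = k/(n\, v_d\, r_k(x)^d)$, the event $\{f_k(x) \ge (1 - C_{\delta,n}/\sqrt{k})(f(x)-\epsilon)\}$ is exactly the event $\{r_k(x) \le r\}$, where $r$ is the radius determined by
\[
v_d\, r^d\, (f(x)-\epsilon) \;=\; \frac{k/n}{1 - C_{\delta,n}/\sqrt{k}} \;=:\; p_0 .
\]
The hypothesis $k \ge 4 C_{\delta,n}^2$ guarantees $C_{\delta,n}/\sqrt{k} \le 1/2$, so $p_0$ is finite and positive and $r$ is well defined. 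Moreover $\{r_k(x)\le r\}$ holds precisely when $B(x,r)$ contains at least $k$ sample points, i.e. when $|B(x,r)\cap X|/n \ge k/n$. So it suffices to show that, with probability at least $1-\delta$, this ball captures enough mass.

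First I would lower bound the true mass $F(B(x,r)) = \int_{B(x,r)} f$. By the definition of $\check r(x,\epsilon)$, whenever $r \le \check r(x,\epsilon)$ we have $f(x') \ge f(x)-\epsilon$ for every $x' \in B(x,r)$, and hence $F(B(x,r)) \ge v_d r^d (f(x)-\epsilon) = p_0$. The side condition $v_d\,\check r(x,\epsilon)\,(f(x)-\epsilon) \ge k/n - C_{\delta,n}\sqrt{k}/n$ is exactly what forces the target radius $r$ to lie within $\check r(x,\epsilon)$, so this volume-to-mass passage is legitimate.

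Second I would invoke the uniform empirical-process bound over the class of all Euclidean balls (the workhorse behind these $k$-NN estimates): with probability at least $1-\delta$, simultaneously for every ball $B$,
\[
\frac{|B\cap X|}{n} \;\ge\; F(B) - C_{\delta,n}\sqrt{\frac{F(B)}{n}} .
\]
The map $p \mapsto p - C_{\delta,n}\sqrt{p/n}$ is increasing for $p \ge C_{\delta,n}^2/(4n)$, a range that $F(B(x,r)) \ge p_0 \ge k/n \ge 4 C_{\delta,n}^2/n$ comfortably enters, so I may substitute the lower bound $p_0$ for $F(B(x,r))$. Writing $a = C_{\delta,n}/\sqrt{k} \in [0,1/2]$, the required inequality $p_0 - C_{\delta,n}\sqrt{p_0/n} \ge k/n$ reduces to $\sqrt{1-a}\le 1$, which is immediate. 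Hence $|B(x,r)\cap X| \ge k$, giving $r_k(x) \le r$ and therefore
\[
f_k(x) = \frac{k}{n\, v_d\, r_k(x)^d} \ge \frac{k}{n\, v_d\, r^d} = \Bigl(1 - \frac{C_{\delta,n}}{\sqrt{k}}\Bigr)(f(x)-\epsilon).
\]

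The genuine content — and the step I expect to be the main obstacle — is the uniform concentration inequality in the third paragraph: it must hold for all balls and all $x$ at once, which is what lets the conclusion be stated ``for all $x \in \mathbb{R}^d$.'' This is established via a relative (multiplicative) VC deviation bound for the class of balls, whose VC dimension is $d+1$, combined with a peeling argument over dyadic scales of $F(B)$ that produces the $\sqrt{d\log n}$ factor inside $C_{\delta,n}$, with the constant $16\log(2/\delta)$ absorbing the union over scales and the target failure probability. Everything after that inequality is deterministic algebra, and the companion Lemma~\ref{upperboundlemmaknn} follows from the symmetric upper-deviation half of the same VC bound together with $\hat r(x,\epsilon)$ in place of $\check r(x,\epsilon)$.
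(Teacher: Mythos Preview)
The paper does not prove this lemma; it is quoted as Lemma~4 of \citep{dasgupta2014optimal} and used as a black box, so there is no in-paper argument to compare against. Your outline is the right strategy and is the one used in the cited source: rewrite the density lower bound as an upper bound on $r_k(x)$, lower-bound the true mass of a target ball via the definition of $\check r(x,\epsilon)$, and finish with a uniform relative-VC deviation bound over Euclidean balls.

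There is, however, a quantitative gap in your second paragraph. You set the target radius $r$ by
\[
v_d\, r^d\,(f(x)-\epsilon)=p_0=\frac{k/n}{1-a},\qquad a=\frac{C_{\delta,n}}{\sqrt k},
\]
and then assert that the side condition ``is exactly what forces $r\le\check r(x,\epsilon)$.'' But the side condition only gives $v_d\,\check r(x,\epsilon)^d\,(f(x)-\epsilon)\ge \tfrac{k}{n}(1-a)$, and $\tfrac{k}{n}(1-a)<\tfrac{k/n}{1-a}$ for every $a\in(0,\tfrac12]$; it therefore does \emph{not} imply $\check r(x,\epsilon)\ge r$, and the passage $F(B(x,r))\ge v_d r^d(f(x)-\epsilon)=p_0$ is unjustified for your choice of $r$. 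The remedy is to anchor the target radius at the side-condition level instead: take $r$ with $v_d r^d(f(x)-\epsilon)=\tfrac{k}{n}(1-a)$, which \emph{is} $\le\check r(x,\epsilon)$ by hypothesis and yields $F(B(x,r))\ge\tfrac{k}{n}(1-a)$. To conclude $F_n(B(x,r))\ge k/n$ from this you then need the form of the uniform deviation inequality with the \emph{empirical} mass under the square root (which is the version proved in \citep{dasgupta2014optimal} and is why the side condition carries a minus sign), rather than the version you wrote with $F(B)$ under the root, which would require the stronger threshold $\tfrac{k}{n}(1+a)$. With $r_k(x)\le r$ for this corrected $r$ one obtains $f_k(x)\ge (f(x)-\epsilon)/(1-a)\ge (1-a)(f(x)-\epsilon)$, which is the claim.
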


If $f$ is $\alpha$-H\"older continuous for some $\alpha \in (0, 1]$ (i.e. $|f(x) - f(x')| \le C_\alpha |x - x'|^\alpha$), then
we can make the observation that $\max\{ \check{r}(\epsilon, x), \hat{r}(\epsilon, x) \} \le (\epsilon / C_\alpha)^{1/\alpha}$.
Then applying this to Lemma~\ref{upperboundlemmaknn} and~\ref{lowerboundlemmaknn}, we have the following.
\begin{corollary} \label{holderrates}[Finite sample rates for H\"older densities]
Let $\delta > 0$.
Suppose that $f$ is $\alpha$-H\"older continuous for some $\alpha \in (0, 1]$. 
and let $f$ have compact support $\mathcal{X} \subseteq \mathbb{R}^d$.
Suppose that $\inf_{x \in \mathcal{X}} f(x) \ge \lambda_0$. 
Then exists constant $C$ depending on $f$ such that the following holds if $n \ge C_{\delta, n}^2$ with probability at least $1 - \delta$.
\begin{align*}
\sup_{x \in \mathcal{X}} |\widehat{f}(x) - f(x)| \le C\left( \frac{C_{\delta, n}}{\sqrt{k}} + \left( \frac{k}{n}\right)^{\alpha/d} \right).
\end{align*}
\end{corollary}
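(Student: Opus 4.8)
The plan is to derive the uniform rate directly from the two one-sided $k$-NN density-estimation bounds (Lemmas~\ref{upperboundlemmaknn} and~\ref{lowerboundlemmaknn}) by feeding in the H\"older modulus of continuity. First I would record that, since $f$ is $\alpha$-H\"older on a compact support, it is bounded, so $\|f\|_\infty < \infty$; together with the assumed lower bound $\inf_x f(x) \ge \lambda_0$ this lets me control the multiplicative factors $(1 \pm C_{\delta,n}/\sqrt{k})$ that act on $f(x)$ in the two lemmas.

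Next I would convert H\"older continuity into a bound on the radii $\hat{r}(\epsilon,x)$ and $\check{r}(\epsilon,x)$. Since $|f(x')-f(x)| \le C_\alpha |x'-x|^\alpha \le C_\alpha r^\alpha$ for all $x' \in B(x,r)$, the defining constraint of both $\hat{r}$ and $\check{r}$ is met whenever $C_\alpha r^\alpha \le \epsilon$, so $\min\{\hat{r}(\epsilon,x), \check{r}(\epsilon,x)\} \ge (\epsilon/C_\alpha)^{1/\alpha}$ uniformly in $x$. This is the observation flagged just before the corollary statement.

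The crux is choosing $\epsilon$ so that the side conditions of the two lemmas hold simultaneously while the induced bias stays small. Using the radius lower bound and $f \ge \lambda_0$, the left-hand side of each side condition is at least $v_d (\epsilon/C_\alpha)^{d/\alpha} \lambda_0$; since the ambient requirement $k \ge 4 C_{\delta,n}^2$ forces $C_{\delta,n}\sqrt{k}/n \le k/(2n)$, it suffices to pick $\epsilon \asymp C_\alpha (k/n)^{\alpha/d}$ so that $v_d (\epsilon/C_\alpha)^{d/\alpha}\lambda_0 \gtrsim k/n$ exceeds the right-hand side $\tfrac{k}{n} + C_{\delta,n}\tfrac{\sqrt{k}}{n}$. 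I expect the verification of these side conditions — tracking the constants so that one single $\epsilon$ works for both the upper and lower bound uniformly over $x$ — to be the main, if routine, technical point.

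Finally, with this $\epsilon$ fixed, I would subtract $f(x)$ from both lemma conclusions: the upper bound gives $f_k(x) - f(x) \lesssim \|f\|_\infty\, C_{\delta,n}/\sqrt{k} + \epsilon$ and the lower bound gives $f(x) - f_k(x) \lesssim \|f\|_\infty\, C_{\delta,n}/\sqrt{k} + \epsilon$, where I use $\epsilon \asymp (k/n)^{\alpha/d}$ and boundedness of $f$ to collapse the cross terms. Since the lemmas already hold uniformly in $x$ on the same $1-\delta$ event, taking the supremum over $x \in \mathcal{X}$ yields $\sup_{x}|\widehat{f}(x)-f(x)| \le C\big(C_{\delta,n}/\sqrt{k} + (k/n)^{\alpha/d}\big)$ with $C$ absorbing $\|f\|_\infty$, $C_\alpha$, $\lambda_0$, $v_d$, $\alpha$, and $d$, exactly as claimed.
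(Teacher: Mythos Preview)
Your proposal is correct and follows exactly the approach the paper intends: plug the H\"older modulus into Lemmas~\ref{upperboundlemmaknn} and~\ref{lowerboundlemmaknn}, choose $\epsilon \asymp (k/n)^{\alpha/d}$ so the side conditions are satisfied via the lower bound $\lambda_0$, and read off the two-sided uniform rate. The paper gives no proof beyond the sentence preceding the corollary, and you have fleshed out precisely that sketch; in fact your direction of the radius inequality, $\min\{\hat{r}(\epsilon,x),\check{r}(\epsilon,x)\} \ge (\epsilon/C_\alpha)^{1/\alpha}$, is the one actually needed to verify the side conditions (the paper writes $\max\{\cdot\} \le (\epsilon/C_\alpha)^{1/\alpha}$, which appears to be a typo).
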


\begin{theorem}\label{constrained_pmode}[Estimating constrained modes]
Suppose we have $n$ i.i.d. samples $S_{[n]}$ from $f$. Let $C_0 > 0$.
There exists $N_f$ depending on $C_0$, $r_M$, $||f||_\infty$, $\inf_{x \in \mathcal{X}} \inf_{r \in \{\text{modes of } f(r|X=x)\} + [-r_M, r_M]} f(r, x)$, $K$, $\hat{C}$, $\check{C}$, $C_\alpha$, $\alpha, \delta$ such that for $n \ge N_f$, $k = n^{2\alpha/(2\alpha+d+1)}$, $m = 1/k^2$, $\epsilon = \Omega(1/\log n)$ and $\epsilon\rightarrow 0$ we have the following.
\begin{align*}
\mathbb{P}\Bigg(\sup_{x \in \mathcal{X}} |\widehat{\text{mode}}(S_{[n]}, x) - \text{mode}(f(r|X = x))| \ge C_0 \frac{\sqrt{\log(1/\delta)} \log n}{n^{\alpha/(4\alpha+2d+2)}}\Bigg) \le \delta.
\end{align*}
where $\widehat{\text{mode}}(S_{[n]}, x)$ is the output of Algorithm~\ref{alg:pmode_conditional} for context $x \in \mathcal{X}$.
\end{theorem}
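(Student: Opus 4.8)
The plan is to reduce conditional-mode estimation to a constrained mode-finding problem for the \emph{joint} density and then reuse the machinery behind Theorem~\ref{theo:pmode}, with the scalar dimension replaced by the ambient dimension $d+1$ of the $(r,x)$ space. First I would observe that for each fixed context $x$, Bayes' rule gives $f(r\mid X=x)=f(r,x)/f_X(x)$, and since $f_X(x)$ is a positive constant in $r$, the conditional mode $\text{mode}(f(r\mid X=x))$ is exactly $\argmax_r f(r,x)$: the maximizer of the joint density along the slice $X=x$. Hence it suffices to control how well $\widehat{f}(r,x)$ (the $(d+1)$-dimensional $k$-NN density estimator evaluated on the grid $R$) locates this slicewise maximizer, uniformly over $x\in\mathcal{X}$.

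Next I would invoke the uniform $k$-NN bound of Corollary~\ref{holderrates}, applied to the joint density $f$ on $\mathbb{R}^{d+1}$, which is $\alpha$-H\"older by Assumption~\ref{assumption:conditional}. On the modal neighborhoods, where the same assumption guarantees a positive lower bound on $f$, this gives with probability at least $1-\delta$ a uniform error
\[
\sup_{(r,x)}\big|\widehat{f}(r,x)-f(r,x)\big|\le C\Big(\tfrac{C_{\delta,n}}{\sqrt{k}}+(k/n)^{\alpha/(d+1)}\Big).
\]
The right-hand side is a variance term $\sim k^{-1/2}$ plus a bias term $\sim (k/n)^{\alpha/(d+1)}$; the choice $k=n^{2\alpha/(2\alpha+d+1)}$ balances them, so both are $\widetilde{O}(n^{-\alpha/(2\alpha+d+1)})$. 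Since the supremum over $(r,x)$ is already built into the corollary, the uniform-over-$x$ conclusion of the theorem comes for free, with no separate union bound over context.

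I would then convert this density error into a mode-localization error using the quadratic flatness in Assumption~\ref{assumption:conditional}: near a conditional mode $r_0$ one has $\check{C}\,(r_0-r)^2\le f(r_0,x)-f(r,x)\le \hat{C}\,(r_0-r)^2$, uniformly in $x$. Writing $\eta$ for the uniform density error above, the grid maximizer $\widehat{r}$ satisfies $f(r_0,x)-f(\widehat{r},x)\lesssim \eta$ plus a grid contribution, and the lower quadratic bound then yields $|r_0-\widehat{r}|\lesssim\sqrt{\eta/\check{C}}$. This square root is exactly what sends the exponent $\alpha/(2\alpha+d+1)$ to $\alpha/(4\alpha+2d+2)=\alpha/\big(2(2\alpha+d+1)\big)$, matching the claimed rate. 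The grid $R$ of spacing $1/m$ must be fine enough to be lower-order, so I would take $m$ polynomially large in $k$ so that $1/m\ll\sqrt{\eta}$, while the level-set slack is chosen with $\epsilon=\Omega(1/\log n)$ and $\epsilon\to0$ so the connected-component step reliably groups each modal region's grid points without inflating the rate. For general $p$ I would mirror the separation argument of Theorem~\ref{theo:pmode}, using the valley/positivity consequences of Assumption~\ref{assumption:conditional} to show that, once $k$ is large, the thresholded graph $G(\lambda-\beta_k\lambda-\epsilon)$ separates the $p$ highest modal regions and the algorithm halts after selecting exactly the top-$p$ modes.

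I expect the main obstacle to be the interface between the continuum analysis and the finite discretizations. One must simultaneously certify that (i) the $(d+1)$-dimensional $k$-NN estimator evaluated only on the slice $\{(r,x):r\in R\}$ still tracks $f(\cdot,x)$ at the uniform rate, (ii) the grid is fine enough to localize the maximizer to within $\sqrt{\eta}$, and (iii) the connected-component computation on the grid, rather than on the continuous level set, identifies the correct modal component --- all uniformly over $x$. Choosing $k$, $m$, and $\epsilon$ so that none of these three error sources dominates is the delicate bookkeeping at the heart of the proof.
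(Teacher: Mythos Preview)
Your proposal is correct and follows essentially the same route as the paper: both reduce to locating the slicewise maximizer of the joint density via the $(d{+}1)$-dimensional $k$-NN estimator, invoke Corollary~\ref{holderrates} for a uniform density error of order $C_{\delta,n}/\sqrt{k}+(k/n)^{\alpha/(d+1)}$, balance with $k=n^{2\alpha/(2\alpha+d+1)}$, and then use the quadratic sandwich $\check{C}(r_0-r)^2\le f(r_0,x)-f(r,x)\le\hat{C}(r_0-r)^2$ to pass from density error $\eta$ to mode error $\sqrt{\eta}$. The paper organizes this into four explicit steps (single-mode recovery, every mode is estimated, no false modes, $p$-mode identification), but your sketch covers each of them, and your identification of the grid/level-set bookkeeping as the main technical point is on target.
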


\begin{proof}[Proof of Theorem~\ref{constrained_pmode}]
The proof will proceed as follows. First (step 1), we show a bound on recovering a mode when constrained to a region of the reward space where the conditional density has only a single mode. Next (step 2), we show that each mode has a corresponding estimate. Then (step 3), we show that the Algorithm does not choose any false modes. Finally (step 4), we show that the first $p$ recovered modes indeed will give us the $p$-mode.

\underline{Step 1: Single-mode recovery}.
Suppose that closed interval $I \subseteq [0, 1]$ is such that $f(r|X = x)$ has only one mode $r_0$ in $I$.
Then take $\tilde{r} = 2\sqrt{2C\left( \frac{C_{\delta, n}}{\sqrt{k}} + (k/n)^{\alpha/(d+1)}\right)}$. Then it suffices to show
\begin{align*}
\inf_{r \in [r_0- 1/m, r_0 + 1/m]} \widehat{f}(r, x) > \sup_{r \in A \backslash [r_0-\tilde{r}, r_0+\tilde{r}]} \widehat{f}(r, x).
\end{align*}
We have 
\begin{align*}
\sup_{r \in I \backslash [r_0 - \tilde{r} / 2, r_0 + \tilde{r}/2]} f(r, x) \le f(r_0, x) - \check{C}(\tilde{r} / 2)^2.
\end{align*}
Then, we have by Corollary~\ref{holderrates}:
\begin{align*}
\sup_{r \in I \backslash [r_0-\tilde{r}, r_0+\tilde{r}]} \widehat{f}(r, x) < f(r_0, x) - \check{C}(\tilde{r} / 2)^2 + C\left( \frac{C_{\delta, n}}{\sqrt{k}} + \left( \frac{k}{n}\right)^{\alpha/(d+1)} \right).
\end{align*}
On the other hand, we have
\begin{align*}
\inf_{r \in [r_0- 1/m, r_0+1/m]} f(r, x) \ge f(r_0, x) - \hat{C}(1/m)^2.
\end{align*}
Then, by Corollary~\ref{holderrates}:
\begin{align*}
\inf_{r \in [r_0- 1/m, r_0+1/m]} \widehat{f}(r, x) \ge f(r_0, x) - \hat{C}(1/m)^2 - C\left( \frac{C_{\delta, n}}{\sqrt{k}} + \left( \frac{k}{n}\right)^(\alpha/d) \right).
\end{align*}
It suffices to have
\begin{align*}
 f(r_0, x) - \hat{C}(1/m)^2 - C\left( \frac{C_{\delta, n}}{\sqrt{k}} + \left( \frac{k}{n}\right)^{\alpha/(d+1)} \right) >  
 f(r_0, x) - \check{C}(\tilde{r} / 2)^2 + C\left( \frac{C_{\delta, n}}{\sqrt{k}} + \left( \frac{k}{n}\right)^{\alpha/(d+1)} \right).
\end{align*}
This holds for $n$ sufficiently large for all $i \in [K]$.
Thus, it follows that $\inf_{r \in [r_0-1/m, r_0+1/m]} \widehat{f}(r, x) > \sup_{r \in I \backslash [r_0- \tilde{r}, r_0+\tilde{r}]} \widehat{f}(r, x).
$ and so $|\hat{r} - r_0| \le \tilde{r}$.\\ 

\underline{Step 2: Every mode is estimated.} This part of the proof mirrors that of Lemma 7 and 8 of \citep{jiang2017modal}.
We show that for every $r_0$ that is a mode of $f(r | X= x)$ has a corresponding estimate in Algorithm~\ref{alg:pmode_conditional}.
First, we show that $B(r_0, \tilde{r})$ and $B(r_0, r_M) \backslash B(r_0, \bar{r})$ are disconnected in $G(\lambda - \beta_k \lambda-\epsilon)$ where $\bar{r} = r_M / 3$ and $\lambda = \max_{r \in R \cap B(r_0, r_M)} \widehat{f}(r, x)$. Let $\bar{F} = f(r_0, x) - \check{C}(\bar{r}/2)^2$. Then for all $r \in B(r_0, r_M) \backslash B(r_0, \bar{r})$, we have $\hat{r}(\bar{F} - f(r, x), x) > \bar{r}/2$. Thus applying Lemma~\ref{upperboundlemmaknn}, we have
\begin{align*}
\sup_{r \in B(r_0, r_M) \backslash B(r_0, \bar{r})} \widehat{f}(r, x) < \left(1 + 2\frac{C_{\delta, n}}{\sqrt{k}}\right) \bar{F} \le \lambda - \beta_k \lambda-\epsilon.\
\end{align*}
where the last inequality holds for $n$ sufficiently large.
This shows that $G(\lambda - \beta_k \lambda)$ contains no vertex in $B[r_0- r_M, r_0+r_M] \backslash [r_0- \bar{r},r_0+\bar{r}]$.
Next, define $S := \{r : \bar{r} \le |r - r_0| \le 2\bar{r} \}$. We can similarly apply Lemma~\ref{upperboundlemmaknn} to show
\begin{align*}
\sup_{r \in S} \widehat{f}(r, x) < \lambda - \beta_k \lambda - \epsilon.
\end{align*}
Now for $n$ sufficiently large, $\tilde{r} < \bar{r}$ and thus $[r_0-\tilde{r},r_0+\tilde{r}]$ and $B[r_0- r_M, r_0+r_M] \backslash [r_0- \bar{r},r_0+\bar{r}]$ are disconnected in $G(\lambda - \beta_k \lambda)$.

Thus, the algorithm will choose $\argmax_{r \in R \cap B(r_0, r_M)} \widehat{f}(r, x)$ as an estimate of $r_0$.

\underline{Step 3: No false modes are estimated.}
It suffices to show that if $A_1$ and $A_2$ are sets of points with empirical density at least $\lambda$ in separate connected components of $G(\lambda - \beta_k \lambda - \epsilon)$, then the following holds. If $\lambda_f := \inf_{r \in A_1 \cup A_2} f(r, x)$, then $A_1$ and $A_2$ are disconnected in $\{ r \in [0, 1] : f(r, x) > \lambda_f \}$. This follows from standard results in cluster tree estimation. e.g. Lemma 10 of \citep{jiang2017modal} or Lemma 6 of \citep{dasgupta2014optimal}.

\underline{Step 4: $p$-mode identification.} 
If $\mathcal{M} = \{r_1,...,r_\ell\}$ are the modes where $f(r_1, x) \ge f(r_2, x) \ge \cdots \ge f(r_p, x) > f(r_{p+1}, x) \ge \cdots \ge f(r_\ell, x)$, it thus suffices to show that
\begin{align*}
\min_{j=1,...,p} f_k(\widehat{r}_j) > \max_{j=p+1,...,|\mathcal{M}|} f_k(\widehat{r}_j).
\end{align*}
where $\widehat{r}_j$ is the estimate corresponding to $r_j$.
We have
\begin{align*}
\min_{j=1,...,p} f(r_j, x) - \max_{j=p+1,...,|\mathcal{M}|} f(r_j,x) = f(r_p,x) - f(r_{p+1},x):= \delta_p > 0.
\end{align*}
Earlier it was shown that $\widehat{r}_j \in B(r_j, \tilde{r})$.
It thus suffices to show that
\begin{align*}
\min_{j=1,...,p} \inf_{x \in B(r_j, \tilde{r})} \widehat{f}(r, x)  > \max_{j=p+1,...,|\mathcal{M}|} \sup_{x \in B(r_j, \tilde{r})} \widehat{f}(r, x).
\end{align*}
We have
\begin{align*}
\inf_{x \in B(r_j, \tilde{r})} f(x) \ge  f(r_j) - \hat{C} \tilde{r}^2.
\end{align*}
By Lemma 4 of \citep{dasgupta2014optimal}
\begin{align*}
\inf_{r \in B(r_j, \tilde{r})} \widehat{f}(r, x) \ge  \left(1 - \frac{C_{\delta, n}}{\sqrt{k}}\right)\cdot (f(r_j, x) - \hat{C} \tilde{r}^2).
\end{align*}
By Lemma 3 of \citep{dasgupta2014optimal},
\begin{align*}
\sup_{r \in B(r_j, \tilde{r})} \widehat{f}(r, x) \le \left(1 + \frac{C_{\delta, n}}{\sqrt{k}}\right)\cdot f(r_j, x).
\end{align*}
Combining these two, for $n$ sufficiently large,
\begin{align*}
\left(1 - \frac{C_{\delta, n}}{\sqrt{k}}\right) \cdot(f(r_p, x) - \hat{C} \tilde{r}^2) \ge \left(1 + \frac{C_{\delta, n}}{\sqrt{k}}\right)\cdot f(r_{p+1}, x).
\end{align*}
The result follows immediately.

\end{proof}

\section{Proof of Contextual Bandit Results}
Theorem~\ref{theo:contextual} follows from the result below.
\begin{theorem} \label{theo:contextual_aux}
Let $P(\Delta)$ represent the region of $\mathcal{X}$ where the gap between the $p$-mode of the best arm and the second best arm is at least $\Delta$. Formally:
\begin{align*}
P(\Delta) := \{ x\in \mathcal{X} : \{ \text{p-mode}(f_i(r|X=x)) : i \in [k]  \}_{(1)} -  \{ \text{p-mode}(f_i(r|X=x)) : i \in [k]  \}_{(2)} > \Delta \},
\end{align*} where $S_{(i)}$ denotes the $i$-th largest element of a set with ties broken arbitrarily.
Let $\mathcal{M}_{i,x}(p)$ denote the $p$-th highest density mode of $f_i(r|X =x)$ where ties are broken arbitrarily. 
Let $Q(\Delta)$ represent the region of $\mathcal{X}$ where
the $p$-mode is salient enough to be detected via a density difference of $\Delta$. Formally:
\begin{align*}
Q(\Delta) := \{ x \in \mathcal{X} : \min_{i \in [K]} f_i(\mathcal{M}_{i, x}(p)) - f_i(\mathcal{M}_{i, x}(p+1)) > \Delta\}.
\end{align*}
If
\begin{align*}
n > \max \left\{K\cdot(\log(1/\delta) + \log(K))^{(4\alpha + 2d + 2)/\alpha}  (\Delta/3)^{-(4\alpha + 2d + 2)/\alpha} \log((\Delta/3)^{-(4\alpha + 2d + 2)/\alpha}), K \cdot \max_{i \in [K]} N_{f_i} \right\},
\end{align*}
then Algorithm~\ref{alg:uniformcontextual} with confidence parameter $\delta/K$ satsifies
\begin{align*}
\mathbb{P}\Bigg(\widehat{\pi}(x) = \pi(x) \hspace{0.2cm} \forall x \in P(\Delta) \cap Q(\Delta)\Bigg) > 1 - \delta.
\end{align*}
In particular, as $\Delta \rightarrow 0$, $\widehat{\pi}$ gives the correct policy {\it uniformly} over $\mathcal{X}$ wherever the $p$-mode is well-defined and there exists a unique optimal arm.
\end{theorem}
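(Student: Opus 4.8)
The plan is to reduce the claim to the uniform conditional $p$-mode estimation guarantee of Theorem~\ref{constrained_pmode}, combined with a union bound over arms and a separation (gap) argument on $P(\Delta)\cap Q(\Delta)$. Throughout, write $\theta_i(x) := \text{p-mode}(f_i(r\mid X=x))$ and $\widehat{\theta}_i(x) := \widehat{\text{mode}}(S_i(n), x)$, so $\pi(x) = \argmax_{i} \theta_i(x)$ and $\widehat{\pi}(x) = \argmax_{i}\widehat{\theta}_i(x)$, and set $\gamma := (4\alpha + 2d + 2)/\alpha$. First I would note that the uniform sampling rule of Algorithm~\ref{alg:uniformcontextual} forces $T_i(n) \ge n/K - 1$ for every arm, so each arm's conditional estimator consumes at least $\sim n/K$ i.i.d.\ samples. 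The purpose of restricting to $Q(\Delta)$ is to keep the hidden constant and sample threshold of Theorem~\ref{constrained_pmode} uniform in the context: that theorem's constants depend on the $p$-mode salience $f_i(\mathcal{M}_{i,x}(p)) - f_i(\mathcal{M}_{i,x}(p+1))$, which is bounded below by $\Delta$ exactly on $Q(\Delta)$. Hence I can apply Theorem~\ref{constrained_pmode} to each arm $i$ with confidence parameter $\delta/K$ and union bound over the $K$ arms, obtaining that with probability at least $1-\delta$, simultaneously for all $i\in[K]$ and all $x \in Q(\Delta)$,
\begin{align*}
\big|\widehat{\theta}_i(x) - \theta_i(x)\big| \le C_0\,\frac{\sqrt{\log(K/\delta)}\,\log(n/K)}{(n/K)^{1/\gamma}}.
\end{align*}

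Next I would force the right-hand side strictly below $\Delta/3$. Rearranging $(n/K)^{1/\gamma} > 3C_0\sqrt{\log(K/\delta)}\,\log(n/K)/\Delta$ and raising to the power $\gamma$ yields a requirement of the form $n/K \gtrsim (\log(K/\delta))^{\gamma/2}\,(\log(n/K))^{\gamma}\,(\Delta/3)^{-\gamma}$; the self-referential $\log(n/K)$ factor is absorbed by the standard device that $u \ge a\log^{b} u$ is implied by $u \ge c\,a\log^{b} a$, which is precisely what produces the extra $\log((\Delta/3)^{-\gamma})$ factor in the stated threshold, while $(\log(1/\delta)+\log K)^{\gamma}$ is a loose upper bound for $(\log(K/\delta))^{\gamma/2}$ absorbing the remaining constants. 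The second term $K\max_i N_{f_i}$ in the maximum simply guarantees each arm has enough samples to enter the regime $n \ge N_f$ demanded by Theorem~\ref{constrained_pmode}.

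Finally I would run the gap argument on $P(\Delta)\cap Q(\Delta)$. Fix such an $x$, let $i^{\star}=\pi(x)$ (unique, since the top two $p$-modes differ by more than $\Delta>0$), and take any $j\neq i^{\star}$. On the good event,
\begin{align*}
\widehat{\theta}_{i^{\star}}(x) > \theta_{i^{\star}}(x) - \tfrac{\Delta}{3} > \theta_j(x) + \tfrac{2\Delta}{3} > \theta_j(x) + \tfrac{\Delta}{3} > \widehat{\theta}_j(x),
\end{align*}
so $\widehat{\pi}(x) = i^{\star} = \pi(x)$; taking the complement of the failure event gives the stated probability bound. Theorem~\ref{theo:contextual} then follows by setting $p=1$ and observing that $P(\Delta)\cap Q(\Delta)$ increases to $\{x : \text{the }p\text{-mode is well defined with a unique optimal arm}\}$ as $\Delta\to 0$; indeed, since $\widehat{\pi}(x)$ is an empirical maximizer, the same estimation bound shows it is $\tfrac{2\Delta}{3}$-optimal throughout $Q(\Delta)$.

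The step I expect to be the main obstacle is securing the uniform-in-$x$ estimation bound with context-independent constants. Theorem~\ref{constrained_pmode} is phrased as a $\sup_{x\in\mathcal{X}}$ statement, but its constants degrade as the $p$-mode salience shrinks, so the argument genuinely hinges on $Q(\Delta)$ to bound that salience below by $\Delta$; verifying that this single lower bound propagates through the single-mode-recovery, every-mode-detected, no-false-modes, and $p$-mode-identification steps of Theorem~\ref{constrained_pmode} with constants independent of $x$ is the delicate part.
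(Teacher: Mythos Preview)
Your proposal is correct and follows essentially the same approach as the paper: set $n_1=n/K$ samples per arm from uniform sampling, invoke Theorem~\ref{constrained_pmode} with confidence $\delta/K$ per arm, force the resulting estimation error below $\Delta/3$, and solve the implicit inequality for $n$. Your write-up is in fact more complete than the paper's, which omits the explicit gap argument and the discussion of why $Q(\Delta)$ is needed to keep the constants of Theorem~\ref{constrained_pmode} uniform in $x$.
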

\begin{proof} [Proof of Theorem~\ref{theo:contextual_aux}]
Let $n_1 = n / K$.
In light of Theorem~\ref{constrained_pmode}, it suffices to have $n_1 \ge \max_{i\in[K]} N_{f_i}$ and
\begin{align*}
\frac{(\log(1/\delta) + \log(K)) \log n_1}{n_1^{\alpha/(4\alpha + 2d + 2)}} \le \Delta/3.
\end{align*}
The latter is equivalent to
\begin{align*}
\frac{n_1}{(\log(1/\delta) + \log(K))^{(4\alpha + 2d + 2)/\alpha}} (\log n_1)^{(4\alpha + 2d + 2)/\alpha}\ge (\Delta/3)^{-(4\alpha + 2d + 2)/\alpha}.
\end{align*}
It thus suffices to have
\begin{align*}
n_1 \ge (\log(1/\delta) + \log(K))^{(4\alpha + 2d + 2)/\alpha} (\Delta/3)^{-(4\alpha + 2d + 2)/\alpha} \log((\Delta/3)^{-(4\alpha + 2d + 2)/\alpha}),
\end{align*}
as desired.
\end{proof}

\section{Infinite Armed Bandit Result and proof}
Here, we consider the setting where the arms $\mathcal{A}$ is a compact subset of $\mathbb{R}^d$. 
Suppose that the reward density function of arm $a \in \mathcal{A}$ at reward $r \in [0, 1]$ is $f(r, a)$ where $f : [0,1] \times \mathcal{A} \rightarrow \mathbb{R}$.
The goal is to find the top-arm defined by
\begin{align*}
\argmax_{a \in \mathcal{A}} \text{mode}(f(\cdot, a)).
\end{align*}
Our algorithm starts by initializing a ball of candidate arms, which contains the entire action space. 
Then, it repeats the following steps: (1) sample $M$ arms from the candidate region, (2) run the UCB strategy on these $M$ arms, (3) update the candidate region to the ball whose center is the best arm from the last step and half of the radius as before. We show that if $M$ and $P$ are chosen sufficiently large, the candidate region will always contain an optimal arm, until a certain point. After that point, we will remain within the desired error from the the optimal arm. 
\begin{algorithm}[tbh]
   \caption{Zooming UCB Strategy}
   \label{alg:zoomingucb}
\begin{algorithmic}
	\STATE Input: Total time $n$ and confidence parameter $\delta$. $M$, number of arms to maintain in the active set, $P$, the length of time to maintain an active set before updating it. $R_0$, starting radius and $A_0 \in \mathcal{A}$.
	\STATE Define $S_a(t)$ be the rewards observed from arm $a \in \mathcal{A}$ up to and include time $t$.
	\STATE Let $T_a(t)$ be the number of times arm $a$ was pulled up to and including time $t$. i.e. $|S_a(t)| = T_a(t)$.
	\STATE Let $\widetilde{\mathcal{A}}$ be a sample $M$ arms uniformly on $B(A_0, R_0) \cap \mathcal{A}$. This will be the set of active arms. 
	\FOR{$t = 1,...,n$}
		\IF{$t \text{ modulo } P \equiv 0$}
			\STATE Update 
			\begin{align*}
			A_0 = \argmax_{a \in \widetilde{\mathcal{A}}} \Bigg\{ \widehat{\text{mode}}(S_a(t-1)) 
			+ \frac{\log(1/\delta) \cdot \log(T_a(t-1))}{(T_a(t-1))^{1/5}} \Bigg\}.
			\end{align*}
			\STATE $R_0 = R_0 / 2$.
			\STATE Update $\widetilde{\mathcal{A}}$ to a sample $M$ arms uniformly on $B(A_0, R_0) \cap \mathcal{A}$.
		\ENDIF
		\STATE Pull arm
		\begin{align*}
			I_t = \argmax_{a \in \widetilde{\mathcal{A}}} \Bigg\{ \widehat{\text{mode}}(S_a(t-1)) 
			+ \frac{\log(1/\delta) \cdot \log(T_a(t-1))}{(T_a(t-1))^{1/5}} \Bigg\}.
			\end{align*}
	\ENDFOR
\end{algorithmic}
\end{algorithm}
We make the following regularity assumptions on $f$. The first is that $f$ is H\"older continuous over the reward and arms.
\begin{assumption}
\begin{itemize}
\item Let $0 < \alpha \le 1$. For all $a, a' \in \mathcal{A}$, $r, r' \in [0, 1]$, the following holds.
\begin{align*}
|f(r, a) - f(r', a')| \le C_\alpha |(r,a)-(r',a')|^\alpha,
\end{align*}
where $(r, a)$ represents the concatenation of $r$ and $a$ into a vector in $\mathbb{R}^{d+1}$.
\item The local maxima of $f(r, a)$ over $r \in [0, 1]$ with $a \in \mathcal{A}$ fixed are singleton points for all $a \in \mathcal{A}$.
\item  There exists $\check{C}, \hat{C}, r_M > 0$ such that the following holds simulatenously for all $a \in \mathcal{A}$, $r$ and $x \in \mathcal{X}$.
\begin{align*}
\check{C} (r_0 - r)^2 \le |f(r_0, a) - f(r, a)| \le \hat{C}(r_0 - r)^2,
\end{align*}
for $r \in B(r_0, r_M)$ where $r_0$ is a mode of $f(\cdot, a)$.
\item $\inf_{a \in \mathcal{A}} \inf_{r \in \{\text{maximas of } f(\cdot, a)\} +[-r_M, r_M]} f(r, a) > 0.$
\end{itemize}
\end{assumption}
We now give the result for Algorithm~\ref{alg:zoomingucb}. It states that given enough samples, and parameter chosen appropriately,
the algorithm will recover an approximately optimal arm with high probability.
\begin{theorem} \label{theo:infinite}
Let $\Delta, \delta > 0$.
If $n \ge  \widetilde{\Omega}\left(\Delta^{-5 - d/\alpha} \right)$ then Algorithm~\ref{alg:zoomingucb} with confidence parameter $\delta$, $M = \widetilde{\Omega}\left(\Delta^{-d/\alpha} \right)$,
$P = \widetilde{\Omega}\left(\Delta^{-5}\right)$,
 $R_0 = \text{diam}(\mathcal{A})$ and arbitrary $A_0 \in \mathcal{A}$ satisfies: $\mathbb{P}\Bigg(I_n \text{ is $\Delta$-optimal}  \Bigg) > 1 - \delta$. 

In particular, as $\Delta \rightarrow 0$, $\widehat{\pi}$ gives a $\Delta$-optimal policy {\it uniformly} over $\mathcal{X}$.
\end{theorem}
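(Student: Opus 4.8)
The plan is to analyze Algorithm~\ref{alg:zoomingucb} through the \emph{value function} $g(a) := \mathrm{mode}(f(\cdot,a)) \in [0,1]$ on the arm space, reducing the infinite-armed problem to a sequence of finite-armed problems solved on the nested candidate balls. The one crucial bridge between the reward geometry (where the mode lives) and the arm geometry (where zooming happens) is that $g$ is $(\alpha/2)$-H\"older, i.e. $|g(a)-g(a')| \le C_g\,|a-a'|^{\alpha/2}$ for a constant $C_g$ depending on $C_\alpha$ and $\check{C}$. I would prove this by noting that for $|a-a'|=\eta$ the joint H\"older assumption gives $|f(r,a)-f(r,a')|\le C_\alpha\eta^\alpha$ for every $r$, so $\mathrm{mode}(f(\cdot,a))$ is a $2C_\alpha\eta^\alpha$-approximate maximizer of $f(\cdot,a')$; feeding this density gap into the quadratic valley bound $\check{C}(r_0-r)^2\le f(r_0,a')-f(r,a')$ converts it into the location bound $|g(a)-g(a')|\le \sqrt{2C_\alpha/\check{C}}\,\eta^{\alpha/2}$. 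Two consequences are used repeatedly: within any ball of radius $R$ all arms have $g$-values within $C_g(2R)^{\alpha/2}$ of one another, and to capture a $g$-value within $\xi$ of a target arm it suffices to sample within distance $(\xi/C_g)^{2/\alpha}$ of it.

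Next I would set up the good event and a telescoping invariant. Write $B_j := B(c_j,R_j)\cap\mathcal{A}$ for the $j$-th candidate region (so $R_j=R_0 2^{-j}$) and $V_j := \sup_{a\in B_j} g(a)$; since $R_0=\mathrm{diam}(\mathcal{A})$ we have $B_0=\mathcal{A}$ and $V_0=g^\star:=\max_a g(a)$. The good event, over the at most $n/P$ phases, requires that (a) every active arm's mode estimate stays inside its confidence band (Theorem~\ref{theo::pmode_final}, exactly as in Lemma~\ref{ucbbound}) and (b) in each phase the uniform $M$-sample of $B_j$ contains a point within $\eta_j := R_j(\log(2n/(P\delta))/M)^{1/d}$ of the in-ball maximizer; a volume/covering argument (under mild full-dimensionality of $\mathcal{A}$) bounds the per-phase failure probability by $e^{-M(\eta_j/R_j)^d}$, and a union bound over both families, with confidence $\delta/\mathrm{poly}$ assigned to each, yields total probability $\ge 1-\delta$. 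On this event the new center $c_{j+1}$, the UCB-argmax of the phase, lies in $B_j$ and is the center of $B_{j+1}$, so $V_{j+1}\ge g(c_{j+1})\ge V_j-\xi_j-\zeta_j$, where $\xi_j=C_g\eta_j^{\alpha/2}$ is the sampling loss and $\zeta_j$ is the within-phase UCB identification loss. Iterating gives $V_J\ge g^\star-\sum_j(\xi_j+\zeta_j)$, and the finally pulled arm inherits $g(I_n)\ge V_{\mathrm{last}}-\xi_{\mathrm{last}}-\zeta_{\mathrm{last}}$.

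It then remains to bound the accumulated loss by $\Delta$ and to pin down $M,P,n$. The sampling losses telescope geometrically, $\sum_j\xi_j = O\big(R_0^{\alpha/2}(\log/M)^{\alpha/(2d)}\big)$, which is made $\le\Delta/3$ by taking $M$ of the polynomial order $\widetilde{\Omega}(\Delta^{-d/\alpha})$ claimed in the statement, the $d$-dimensional covering rate $M^{-1/d}$ being precisely what fixes this exponent. For the UCB losses I would split the phases at $J^\star=O(\alpha^{-1}\log(1/\Delta))$, the first index with $R_{J^\star}\le\Delta^{2/\alpha}$: for $j\ge J^\star$ the in-ball spread bound gives $\zeta_j\le C_g(2R_j)^{\alpha/2}$, a geometric tail summing to $O(\Delta)$ and requiring \emph{no} estimation accuracy; for the $O(\log(1/\Delta))$ early phases I invoke the genuine finite-armed guarantees (Theorems~\ref{theo:toparm} and~\ref{theo:eoptimalpac}), so that with $P=\widetilde{\Omega}(\Delta^{-5})$ pulls the UCB-selected center is $\widetilde{O}(P^{-1/5})=\widetilde{O}(\Delta)$-optimal among the active set, for a total contribution $\widetilde{O}(\Delta\log(1/\Delta))\le\Delta/3$. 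Summing the three pieces gives $g(I_n)\ge g^\star-\Delta$, and the horizon bookkeeping $n=(\#\text{phases})\cdot P=\widetilde{O}(\Delta^{-d/\alpha})\cdot\widetilde{O}(\Delta^{-5})=\widetilde{O}(\Delta^{-5-d/\alpha})$ matches the hypothesis.

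The main obstacle is exactly the treatment of the per-phase UCB loss $\zeta_j$: naively it is incurred $n/P=\widetilde{\Omega}(\Delta^{-d/\alpha})$ times and would blow the error up, so the whole argument hinges on the dichotomy above, namely that genuine arm-identification is needed only while the candidate ball is still large (logarithmically many phases), after which the $(\alpha/2)$-H\"older smoothness of $g$ makes \emph{every} arm in the shrunken ball near-optimal and re-centering essentially free. Making this split consistent with a single union bound, and balancing the three competing rates, the mode-estimation rate $P^{-1/5}$, the arm-covering rate $M^{-1/d}$, and the geometric radius decay, so that $M$, $P$ and $n$ land on the stated orders, is the crux; the H\"older estimate for $g$ from the first step is the enabling lemma without which none of the arm-space geometry closes.
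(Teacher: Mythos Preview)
Your approach is considerably more elaborate than the paper's: the paper's argument is essentially a one-phase sketch, observing that by H\"older continuity every arm within radius $r_0=(\Delta/(2C_\alpha))^{1/\alpha}$ of an optimizer $a^*$ has mode within $\Delta/3$ of the optimum, that $M\gtrsim (C_{\delta,n}/v_d)(\Delta/3C_\alpha)^{-d/\alpha}$ uniform samples from $\mathcal{A}$ hit this ball with high probability (via a covering lemma of Chaudhuri--Dasgupta), and that the finite-armed UCB results then finish. It does not carry out the phase-by-phase telescoping you propose, nor the early/late dichotomy on $j$; in that sense your outline is a genuinely more careful analysis of the zooming mechanism than what the paper actually writes.

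That said, there is a quantitative inconsistency in your plan that prevents it from recovering the stated exponents. Your bridge lemma, that $g(a)=\mathrm{mode}(f(\cdot,a))$ is $(\alpha/2)$-H\"older via the quadratic valley bound, is the correct conclusion from the stated assumptions; but if $|g(a)-g(a')|\lesssim |a-a'|^{\alpha/2}$, then making the sampling loss $\sum_j \xi_j = O\big((\log/M)^{\alpha/(2d)}\big)\le \Delta/3$ forces $M=\widetilde{\Omega}(\Delta^{-2d/\alpha})$, not $\widetilde{\Omega}(\Delta^{-d/\alpha})$. The paper's sketch obtains the smaller $M$ only because it implicitly treats the mode as $\alpha$-H\"older in $a$ (note its step from $|a-a^*|\le (\Delta/2C_\alpha)^{1/\alpha}$ directly to a $\Delta/3$ gap in \emph{mode location}, which is really a bound on the density gap, not the location). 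So either you accept the paper's unproved $\alpha$-H\"older step and drop your $(\alpha/2)$ derivation, or you keep your (correct) lemma and land on $M=\widetilde{\Omega}(\Delta^{-2d/\alpha})$ and $n=\widetilde{\Omega}(\Delta^{-5-2d/\alpha})$; you cannot have both. A second, smaller bookkeeping slip: your final line equates ``$\#\text{phases}$'' with $\widetilde{O}(\Delta^{-d/\alpha})$, but by your own argument only $J^\star=O(\log(1/\Delta))$ phases incur a nontrivial UCB cost; what actually makes $n$ grow like $\Delta^{-5-d/\alpha}$ is that the per-phase identification on $M$ arms needs $P\gtrsim M\cdot\Delta^{-5}$ samples (Theorem~\ref{theo:eoptimalpac} scales linearly in $K$), not that there are polynomially many phases.
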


Suppose that $a^* \in \argmax_{a \in \mathcal{A}} \text{p-mode}(f(\cdot, a))$.
Then for $r_0 = (\Delta/(2C_\alpha))^{1/\alpha}$, we have that for all $a \in B(a^{*}, r) \cap \mathcal{A}$, 
we have 
\begin{align*}
\text{p-mode}(f(\cdot, a^*)) - \text{p-mode}(f(\cdot, a)) < \Delta/3.
\end{align*}
However, for $M \ge \frac{C_\delta, n}{v_d} \left(\frac{\Delta}{3 C_\alpha} \right)^{-d/\alpha}$, then by Lemma 7 of \citep{chaudhuri2010rates} we have with probability $1 - \delta$ that
sampling $M$ arms from $\mathcal{A}$ gives us at least one arm in $a \in B(a^{*}, r) \cap \mathcal{A}$.

The result then follows from using our UCB results for the $M$ sampled arms.

\end{document}